\let\footnote=\endnote
 \def\bibsep{\smallskipamount}%
\begin{document}

\RUNAUTHOR{Cai, Fonseca, Hou, Namkoong}
\RUNTITLE{Constrained Learning for Causal Inference
}

\TITLE{}

\ARTICLEAUTHORS{%

} %

\ABSTRACT{
Popular debiased estimation methods for causal inference---such as augmented inverse propensity weighting and targeted maximum likelihood estimation---enjoy desirable asymptotic properties like statistical efficiency and double robustness but  they 
can produce unstable estimates when there is limited overlap between treatment and control, requiring additional assumptions or ad hoc adjustments in practice (e.g., truncating propensity scores). 
In contrast, 
simple plug-in estimators are stable but lack desirable asymptotic properties. 
We propose a novel debiasing approach that achieves the best of both worlds, producing stable plug-in estimates with desirable asymptotic properties. Our constrained learning framework solves for the best plug-in estimator under the \emph{constraint} that the first-order error with respect to the plugged-in quantity is zero, and can leverage flexible model classes including neural networks and tree ensembles. In several experimental settings, including ones in which we handle text-based covariates by fine-tuning language models, our constrained learning-based estimator outperforms basic versions of one-step estimation and targeting in challenging settings with limited overlap between treatment and control, and performs similarly otherwise.
Finally, to understand why our method exhibits superior performance in settings with low overlap, 
we present %
a simple theoretical scenario in which C-Learner has finite variance but other debiased estimators do not. %

}

\KEYWORDS{Causal inference, semiparametric statistics, machine learning}

\maketitle
\def\thefootnote{*}\footnotetext{These authors contributed equally to this work}\def\thefootnote{\arabic{footnote}}

\else

\documentclass[11pt]{article}
\usepackage[numbers]{natbib}
\usepackage{packages}
\usepackage{editing-macros}
\usepackage{formatting}
\usepackage{statistics-macros}
\usepackage{misc_last_macros}
\usepackage{setspace}

\begin{document}

\abovedisplayskip=8pt plus0pt minus3pt
\belowdisplayskip=8pt plus0pt minus3pt

\begin{center}
  {\huge Constrained Learning for Causal Inference} \\
  \vspace{.5cm} {\large Tiffany (Tianhui) Cai$^{* \dagger}$ ~~~ Yuri Fonseca$^{*\ddagger}$ ~~~ Kaiwen Hou$^+$~~~ Hongseok Namkoong$^\dagger$} \\
  \vspace{.2cm}
  {\large $^\dagger$Columbia University, $^\ddagger$University College London, $^+$UC Berkeley} \\
  \vspace{.2cm}
  \texttt{tiffany.cai@columbia.edu, y.fonseca@ucl.ac.uk
, kaiwen.hou@berkeley.edu, namkoong@gsb.columbia.edu}
\end{center}

\def\thefootnote{*}\footnotetext{Equal contribution}\def\thefootnote{\arabic{footnote}}

\setstretch{1.32}

\begin{abstract}
    
\end{abstract}

\section{Introduction}
\label{sec:intro}

\tc{edit} %
Causal inference is the bedrock of scientific decision-making. In many settings, estimating causal effects accurately requires modeling high-dimensional and complex nuisance parameters---quantities that, while not of primary interest, must be estimated correctly to determine the causal effect.
For example, when estimating the average treatment effect (ATE), a common approach is to train a flexible machine learning model to directly model the outcome variable as a function of the treatment assignment and other observed covariates. Then, to estimate the ATE, the trained machine learning model (nuisance parameter) is used to estimate outcomes for each unit under the treatment and under the control. The ATE is then computed by taking the average of the differences in the predicted values for each observation. This procedure is what we refer to as the naive plug-in (a.k.a. ``direct'') estimator, which entirely trusts the fitted ML model. Although straightforward and easy to communicate, such an approach has the drawback that the machine learning model is fitted to optimize prediction accuracy and does not consider the downstream causal estimation task. 
As a result, this %
naive plug-in estimator 
is suboptimal and sensitive to errors in the ML model %
~\citep{BickelKlRiWe98,ChernozhukovChDeDuHaNeRo18,VanDerLaanRo11,Kennedy22}.

\looseness=-1 To improve upon the naive plug-in (direct) estimator,
debiased methods analyze the sensitivity of the causal estimand with respect to the estimated nuisance parameter (e.g. outcome model) by taking a first-order distributional Taylor expansion, and then correct for the first-order error term due to nuisance parameter estimation~\cite{BickelKlRiWe98,pfanzagl1985contributions,ChernozhukovChDeDuHaNeRo18,VanDerLaanRo11,van2006targeted,Newey94}. As an example,
 \emph{one-step estimation} corrects the plug-in estimator by subtracting an estimate of the first-order error term; see~\cite{Kennedy22,fisher2019} for a review and an introduction to one-step correction. 
As another example, targeting ``fluctuates'' the outcome model in a specific way so that an estimate of the first-order error is zero~\citep{VanDerLaanRo11,van2006targeted}.  When estimating the ATE, these two approaches give rise to the well-known augmented inverse probability weighting  (AIPW) estimator~\citep{Bang2005DoublyRE} 
and the targeted maximum likelihood  estimator (TMLE) \citep{VanDerLaanRo11,van2006targeted}, respectively. Both estimators are statistically efficient/optimal (having the lowest possible asymptotic variance, in the local asymptotic minimax sense~\citep{van2000asymptotic}) and doubly robust (converging to the true value if either the outcome model or treatment model converge to their true values~\citep{Bang2005DoublyRE}) under standard regularity assumptions. %

While all first-order correction (``debiased'') methods enjoy standard asymptotic optimality guarantees under standard assumptions, in practice, several authors have noted a salient gap between the asymptotic and
finite-sample performance of different estimation approaches~\citep{KangSc07, CarvalhoFeMuWoYe19}, leaving room for 
new methodological development, and necessitating a rigorous and thorough empirical comparison. %
AIPW and TMLE are asymptotically optimal but can produce unstable estimates, and additional heuristics and assumptions (e.g., truncating propensity scores for AIPW, or assuming bounded outcomes for TMLE) are used to mitigate this instability. 
By contrast, plug-in estimators never require truncation but lose efficiency. In response, we offer a unified approach that combines statistical efficiency with finite‐sample stability, without additional heuristics.

Our framework reframes the problem of constructing a debiased ATE estimator as a constrained optimization problem.  Consider binary treatments (actions) $A$, covariates $X$, and potential outcomes $Y(1), Y(0)$ under treatment ($A=1$) and control ($A=0$), respectively. Under standard identification assumptions for the ATE, letting $(X,A,Y:=Y(A))\sim P$ for some distribution $P$, and also assuming $Y(0) := 0$ for simpler exposition, the ATE
depends on the outcome model $\mu(X):=\E_P[Y\mid A=1,X]$:
\begin{equation*}
    \mbox{ATE} = \E_{P}[Y(1) - Y(0)] = \E_{P}[Y(1)]
    = \E_P[\mu(X)]%
    .
\end{equation*}
Instead of fitting $\mu(\cdot)$ to minimize prediction error (as would be done for a standard plug-in estimator), we explicitly take into account the downstream causal estimation task
by minimizing prediction error subject to the \emph{constraint} that the first-order estimation error must be zero. For a given propensity score (treatment) model $\what{\pi}(X) \approx \E[A \mid X]$,
we solve the following constrained learning problem over the class of outcome models $\wt \mu(\cdot)$ in the chosen model class $\mc{F}$:
\begin{equation}
    \label{eqn:opt}
    \what \mu^C \in  \argmin_{\wt \mu \in \mc{F}} \Bigg\{ \mbox{PredictionError}(\wt \mu):
            \mbox{First-order error of plug-in estimator from }\wt \mu \text{ is } 0
        \Bigg\}
.\end{equation}
The resulting plug-in estimator $\frac{1}{n}\sum_{i=1}^n \what\mu^C(X_i)$ based on the constrained learning perspective~\eqref{eqn:opt} enjoys the usual fruits of first-order correction, such as semiparametric efficiency and double robustness, which we show in \Cref{sec:theory_text}.

The constrained learning perspective~\eqref{eqn:opt} gives rise to a  natural and performant estimation method, which we call the ``C-Learner''. Instead of adjusting an existing nuisance parameter estimate along a pre-specified direction to attain first-order correction (e.g., as in TMLE), 
we directly train the nuisance parameter to minimize prediction error subject to the constraint that the first-order estimation error must be zero.  As 
we discuss in Section~\ref{sec:experiments}, we empirically observe that the C-Learner 
outperforms basic versions of one-step estimation and targeting without additional heuristics or approximations in challenging settings where there are covariate regions with little estimated overlap between treatment and control groups, where existing methods can exhibit instability. 
We observe C-Learner performs similarly to these versions of one-step estimation and targeting otherwise.

How does the C-Learner attain stable estimates, while basic versions of one-step estimation and targeting are less stable in settings with low overlap? %
Inverse propensity weights (i.e. the reciprocal of the probability of treatment) can be extreme in settings with low overlap; basic versions of existing first-order correction methods are sensitive to extreme inverse propensity weights, while C-Learner can be less sensitive to extreme inverse propensity weights as inverse propensity weights only appear in the constraint. More rigorously, 
we construct a simple theoretical example with poorly-behaved inverse propensity weights in which one-step estimation and targeting estimators have infinite variance, while C-Learner has finite variance in \Cref{sec:theory_simple}. We do not claim that C-Learner is strictly superior to other debiasing methods in general; instead, we demonstrate scenarios in which C-Learner outperforms the other debiased estimators. Assumptions in this example (necessarily) differ from more standard assumptions in \Cref{sec:background} and \Cref{sec:theory_text}.

\looseness=-1 Our constrained learning perspective also provides a way to unify versions of existing first-order correction methods like one-step estimation and targeting (\cref{sec:other_methods}). %
These versions of existing methods can be thought of as solving the optimization problem~\eqref{eqn:opt} with a restrictive model class given by augmenting a fitted nuisance estimator along a specific direction. For estimating the ATE (under the simplifying assumption that $Y(0):=0$), one version of one-step estimation uses an existing outcome model 
$\what{\mu}(X)$ plus an additive constant term
$\mathcal{F} := \{ \what{\mu}(\cdot) + \epsilon: \epsilon \in \R\}$, and one version of targeting uses an existing outcome model plus a canonical gradient-based covariate term $\mathcal{F} := \left\{ \what{\mu}(\cdot) + \epsilon \frac{1}{\what{\pi}(\cdot)}: \epsilon \in \R  \right\}$ where %
$\what{\pi}$ is a fitted propensity score (treatment) model. See Figure~\ref{fig:intro_schematic} for an illustration.  
In \cref{sec:theory_text}, we identify conditions under which a
constrained learning estimator enjoys these results, and we verify that these
one-step estimation and targeting methods also satisfy these necessary conditions. 
\begin{figure}[t]
    \centering
    \vspace{-2cm}
\includegraphics[width=0.85\linewidth]{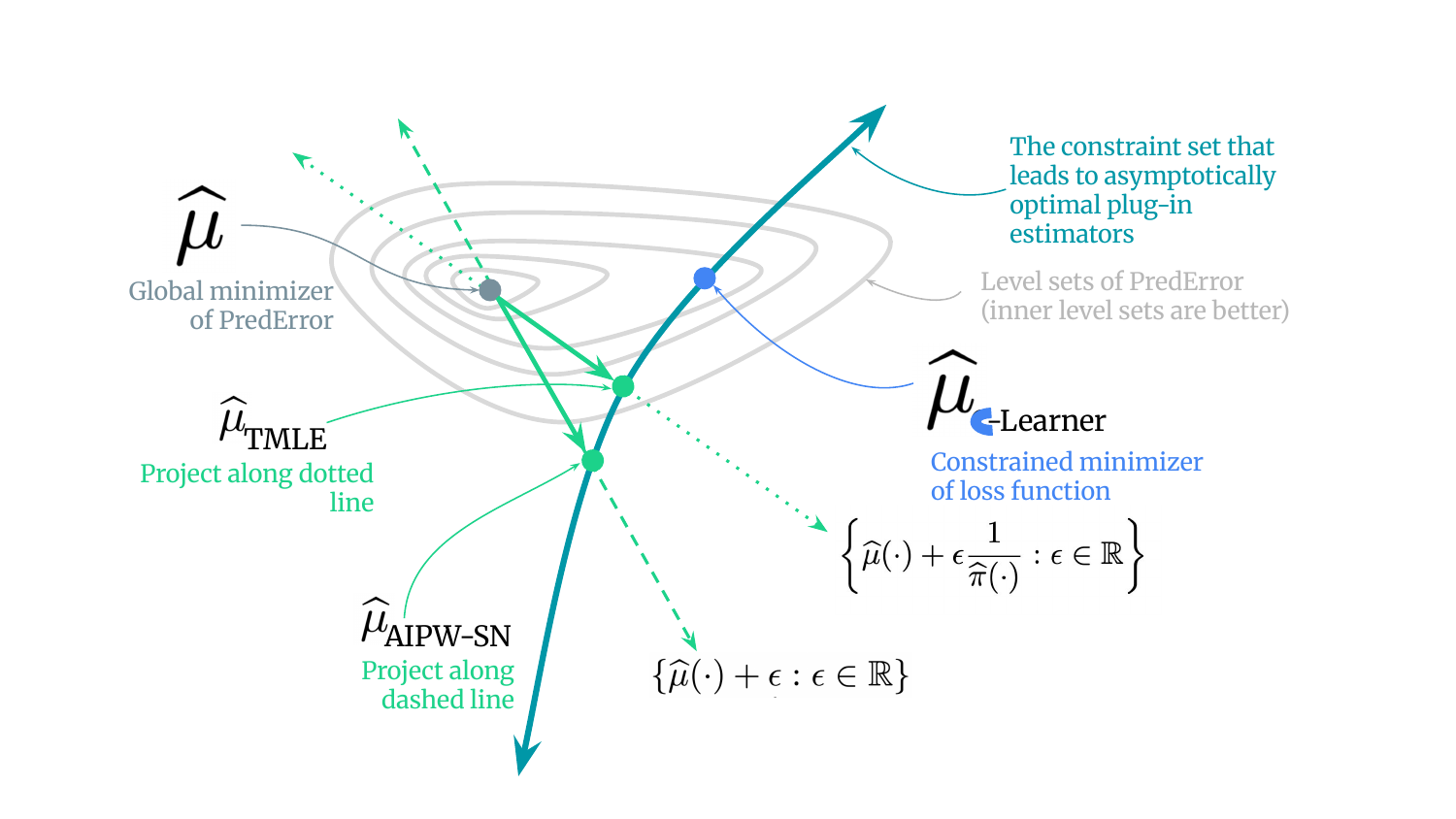}
    \caption{Schematic for how C-Learner is defined, compared to one-step estimation and targeting. $\what\mu$ is the unconstrained outcome model, which minimizes the prediction error on observed outcomes (\textcolor{gray}{gray ovals} depict level sets for prediction error). $\what\mu_{\textrm{one-step}}$ and $\what\mu_{\textrm{targeting}}$ are different projections of $\what\mu$ onto the space of outcome models for which the corresponding plug-in estimators are asymptotically optimal. $\what\mu_{\textrm{C-Learner}}$ is the outcome model that minimizes prediction error on observed outcomes, subject to the plug-in estimator being asymptotically optimal (\textcolor{teal}{teal line}).
    }
    \label{fig:intro_schematic}
\end{figure}

Unlike traditional one-step estimators and targeted estimators, which apply a first-order correction to a single plug-in model, our C-Learner uses the entire chosen machine-learning model class 
$\mathcal F$
to compute that correction. In other words, instead of fixing a plug-in model and then adjusting it, we directly optimize over 
$\mathcal F$, which we expect to yield better finite-sample performance, which aligns with empirical observations (\Cref{sec:experiments}).
We implement and evaluate C-Learner using outcome models $\what\mu$ that are linear models, gradient boosted trees, and neural networks as described in \cref{sec:methodology}.

\paragraph{Contributions}
To summarize, we contribute the following: 
\begin{enumerate}
    \item The C-Learner (\Cref{sec:methods}), a general method to estimate causal estimands that uses constrained optimization to attain asymptotic optimality (\Cref{sec:theory_text}). 
    \item Practical instantiations of C-Learner for three different outcome model classes: linear models, gradient boosted trees, neural networks (\Cref{sec:methodology}). 
    \item Experiments using these three outcome model classes, on a range of settings, including with text covariates, that show C-Learner performs better than one-step estimation and targeting in settings with low overlap, and comparably otherwise (\Cref{sec:experiments}).
    \item A theoretical scenario for low-overlap where one-step estimation and targeting have infinite variance while C-Learner has finite variance (\Cref{sec:theory_simple}). %
\end{enumerate}

\paragraph{Related Works} We defer a discussion of one-step estimation and targeting to Section~\ref{sec:background}. We defer a discussion of balancing estimators, which also achieve asymptotic optimality and have a connection to a ``dual'' version of C-Learner, to Section~\ref{sec:balance}.
Here, we discuss related works outside of one-step estimation, targeting, and balancing estimators. 
Machine learning approaches for nuisance estimation have received a lot of attention \citep{athey2016recursive,athey2018approximate,athey2019generalized,oprescu2019orthogonal,hahn2019bayesian,wager2018estimation}. %
These approaches can naturally be combined with first-order correction methods like one-step estimation %
and targeting, %
with recent work~\citep{shi2019adapting,chernozhukov2022riesznet} incorporating these ideas directly into model training through novel objectives and architectures. 
Like other first-order correction methods, C-Learner is orthogonal to and can also integrate innovations in nuisance estimation approaches; for example, in \cref{sec:ihdp_nn}, 
we demonstrate how C-Learner can effectively use Riesz representers learned by RieszNet~\citep{chernozhukov2022riesznet}.

\section{Background}\label{sec:background}
In the following, we illustrate the C-Learner in the context of the average treatment effect (ATE). We discuss extensions to other estimands in  \cref{sec:other_estimands}.

\subsection{Average Treatment Effect and Missing Outcomes}
\label{sec:ate}

\looseness=-1 Consider binary treatments (actions) $A\in\{0,1\}$, covariates $X\in \mathbb R^d$, and potential outcomes $Y(1), Y(0)\in \R$ under treatment ($A=1$) and control ($A=0$), respectively. 
The key difficulty in causal inference is that we do not observe counterfactuals:  
we only observe the outcome $Y := Y(A)$ corresponding to the observed binary treatment. %
Let $Z:=(X,A,Y)$. Based on i.i.d. observations $Z \sim P$, our goal is to estimate the average treatment effect $\psi(P) = P[Y(1) - Y(0)]:=\int [Y(1) - Y(0)]dP$. Here $P$ denotes both the joint probability measure and the expectation operator under this measure. %
We require standard identification conditions that make this goal feasible: 
(i) $Y=Y(A)$ (SUTVA), (ii) $(Y(1),Y(0))\perp A \mid X$ (ignorability), and (iii) for some $\eta>0$, $\;\eta\leq P(A=1\mid X)\leq 1-\eta$ a.s. (overlap) \citep{ding2023course}.

To simplify our exposition, we assume $Y(0):=0$ throughout so that we are estimating the mean of a censored outcome (censored when $A=0$). 
This setting is also known as mean missing outcome \citep{Kennedy22}, which is the focus of some of our experiments. In this case, we can write the ATE as a functional of the joint measure $P$, as
\begin{equation}
    \label{eqn:ate}
\psi(P) := P[Y(1)]=P\left[P[Y\mid A=1, X]\right].
\end{equation}
The outcome model $\mu(X):=P[Y\mid A=1, X]$ (shorthand for $\mu(A,X):=P[Y\mid A, X]$ with $\mu(0,X):=0$)
and propensity (treatment) model $\pi(X) := P(A=1\mid X)$ are key nuisance parameters; notably, they are high-dimensional in contrast to the single-dimensional ATE~\eqref{eqn:ate}. 
Note that we can write $\psi(P)=P[\mu(1,X)-\mu(0,X)]=P[\mu(1,X)]=P[\mu(X)]$ in this setting. 
The corresponding nuisance estimators $\what \mu(X)$ and $\what \pi(X)$ can be implemented as ML models that are trained on held-out data; we will discuss data splits in \Cref{sec:data_splitting}. %

\subsection{First-Order Correction and Asymptotic Optimality} 
\label{sec:optimality_background}
By analyzing the error from blindly trusting an ML-based estimate of the nuisance parameters to estimate the ATE~\eqref{eqn:ate}, we can develop better estimation methods. %
We sketch intuition here and leave a more rigorous treatment to \cref{sec:theory_text}. 
\ifdefined\pnas
    Consider a statistical functional $Q \mapsto \psi(Q)$, with $\varphi$ its canonical gradient (a.k.a. efficient influence function) 
    with respect to $Q$; 
    without loss of generality, $Q[\varphi(Z; Q) | X] =0$ for all $Q$. 
    Then, a (distributional) first-order Taylor expansion gives
    \begin{align}
        \psi(\what{P}) 
        -  \psi(P)
        & = \int \varphi(Z; \what{P}) d (\what{P} - P)  + R_2(\what{P}, P) \nonumber \\
        & = - P [ \varphi(Z; \what{P}) ] 
        + R_2(\what{P}, P)
        \label{eqn:taylor}
    \end{align}
    where $R_2$ is a second-order remainder term.
    Conclude that the first-order error term of the plug-in approach $\psi(\what P)=\what P[\what{\mu}(X)]$ is given by $-P [ \varphi(Z; \what{P}) ]$.

\else
    We begin by noting that the joint distribution over the observed data can be decomposed as $P= P_{Y, A | X} \times P_X$.
    Since the marginal $P_X$ can be simply estimated with an empirical distribution (plug-in), we focus on the error induced by approximating $P_{Y, A | X}$ using an estimate 
    $\what{P}_{Y, A | X}$.
    \begin{align*}
        P_X[\what{\mu}(X)] - P_X [\mu(X)]
        & =     P_X [\what{P}[Y \mid A = 1, X]]
        - P_X [P[Y \mid A = 1, X]] \\
        & = \psi(P_X \times \what{P}_{Y, A | X}) - 
    \psi(P_X \times P_{Y, A | X}).
    \end{align*}
    \looseness=-1 For the statistical functional $Q \mapsto \psi(Q)$, let $\varphi$ be its canonical gradient (a.k.a. efficient influence function) 
    with respect to $Q_{Y, A|X}$; 
    without loss of generality, we require 
    $Q[\varphi(Z; Q) | X] =0$ for all $Q$. 
    Then, a (distributional) first-order Taylor expansion gives
    \begin{align}
        \psi(P_X \times \what{P}_{Y, A | X}) 
        -  \psi(P_X \times P_{Y, A | X})
        & = \iint \varphi(Z; \what{P}) d (\what{P}_{Y, A|X} - P_{Y, A|X}) dP_X + R_2(\what{P}, P) \nonumber \\
        & = - \iint \varphi(Z; \what{P}) dP_{Y, A|X} dP_X + R_2(\what{P}, P) \nonumber \\
        & = - P [ \varphi(Z; \what{P}) ] 
        + R_2(\what{P}, P)
        \label{eqn:taylor}
    \end{align}
    where $R_2$ is a second-order remainder term.
    Conclude that the first-order error term of the plug-in approach $P_X[\what{\mu}(X)]$ is given by $-P [ \varphi(Z; \what{P}) ]$.
\fi

A common approach in semiparametric statistics to better estimate $\psi(P)$ is to explicitly correct for this first-order error term. 
As we discuss later in \cref{sec:theory_text}, the first-order correction leads to asymptotic optimality properties like semiparametric efficiency---providing the shortest possible confidence interval---and double robustness---achieving estimator consistency even if only one of $\what\mu(X), \what\pi(X)$ is consistent. 

We let $\varphi_X$ denote a \emph{projected} version of the canonical gradient, 
where we keep only the component that remains when the distribution of $X$ is fixed. Practically speaking, the methods below that empirically correct for this projected version produce the same guarantees as empirically correcting for the full canonical gradient \cite{VanDerLaanRo11,van2006targeted}.\ifdefined\msom\else\ifdefined\pnas\yf{footnote with broken reference}\footnote{We also discuss this briefly in \cref{sec:projected_if}.}. 
\else\fi\fi
\paragraph{First-Order Correction for the ATE} For the ATE~\eqref{eqn:ate}, it is well-known that 
\begin{equation}
\label{eqn:ate-pathwise}
\varphi_X(Z; P)
= \frac{A}{P[A =1 | X]} (Y - P[Y | A = 1, X])
= \frac{A}{\pi(X)} (Y - \mu(X))
\end{equation}
which we do not derive in this work. %
For an accessible primer on such derivations, see~\citep{Kennedy22}. 

\paragraph{One-Step Estimation (Augmented Inverse Propensity Weighting, AIPW)~\cite{Bang2005DoublyRE}} 
One of the most prominent first-order correction approaches modifies the plug-in estimator by moving the first-order error term to the left-hand side in the Taylor expansion~\eqref{eqn:taylor}. The resulting estimator achieves second-order error rates:
\vspace{-0.3em}
\begin{align*}
    P\left[ \what{\mu}(X)
    + \varphi_X(Z; \what{P})\right]
    - P[\mu(X)] = R_2(\what{P}, P).
\end{align*}
Using the empirical distribution with samples $Z_1,\ldots,Z_N$ to approximate $P$ in 
the one-step debiased estimator, we arrive at the augmented inverse propensity weighted estimator
\begin{align}
    \what\psi^{\textrm{\rm AIPW}} 
     & := \frac{1}{n} \sum_{i=1}^n 
    \left(\what{\mu}(X_i)
    + \varphi_X(Z_i; \what{P}) \right) %
     = \frac{1}{n} \sum_{i=1}^n 
    \what{\mu}(X_i)
    + \frac{1}{n} \sum_{i=1}^n \frac{A_i}{\what{\pi}(X_i)} 
    (Y_i - \what{\mu}(X_i)) .
        \label{eqn:debiased}
\end{align}

\paragraph{Targeting (Targeted Maximum Likelihood Estimation)~\cite{VanDerLaanRo11,van2006targeted}} 
Targeting takes an alternative and more general approach to first-order correction. It commits to the use of the plug-in estimator, and constructs a tailored adjustment (``fluctuation'') to the existing 
nuisance parameter estimate %
to set the 
first-order error to zero, where the magnitude of the adjustment is solved by maximizing likelihood (hence the name). The fluctuation takes place in the 
direction of a task-specific random variable (``clever covariate''),  which takes 
different forms depending on the estimand. For illustrative purposes, in this discussion, we use focus on a formulation of targeting for unbounded outcomes under the squared loss function. %
In our experiments, we also include a commonly used variant of targeting that increases the stability of the estimator by assuming bounded outcomes, and modeling the outcomes with a logistic link.
\footnote{The literature on targeting is large, with variants including regularization techniques and adaptive versions \citep{VanDerLaanRo11,van2011cross,van2006targeted,van2024adaptive}. 
We choose to focus on the formulation in Equation~\eqref{eqn:targeting} for its simplicity. See %
the discussion on TMLE for bounded outcomes in \cref{sec:ks}, and \cref{sec:tmle_comparison} for additional discussion.
}
We defer a deeper discussion of TMLE to \Cref{sec:tmle_comparison}.

We begin by using standard notation for TMLE, which has notation for both $A=0$ and $A=1$. We apply it to our setting with $Y(0):=0$ (so that $\what\mu(X)\defeq \what\mu(1,X)$, as in \Cref{sec:ate}) and consider general outcomes $Y$
(including unbounded continuous $Y$). The form of the canonical gradient~\eqref{eqn:ate-pathwise} motivates the use of $H(A,X):=\frac{A}{ \what{\pi}(X)}$ as the clever covariate: %
targeting uses an adjusted nuisance estimate
$\what{\mu}(A,X)+\epsilon^\star H(A,X)$ %
in place of $\what{\mu}(A,X)$
in the plug-in for $\psi(P)=P[\mu(1,X)]$, 
where $\epsilon^\star$ is chosen to solve the targeted maximum likelihood problem
\vspace{-0.3em}
\begin{equation}
    \label{eqn:targeting}
    \epsilon^\star:=\argmin_{\epsilon \in \R} 
    ~~\frac{1}{n}\sum_{i=1}^n  A_i \left( 
    Y_i - \what{\mu}(A_i,X_i) - \epsilon \frac{A_i}{\what\pi(X_i)}\right)^2.
\end{equation}
From the KKT conditions, %
the solution
removes the 
finite-sample estimate of the first-order error term~\eqref{eqn:taylor} of the plug-in estimator using $\what{\mu}(A,X) + \epsilon^\star \frac{A}{ \what{\pi}(X)}$. Solving for $\epsilon\opt$, we obtain
\vspace{-0.3em}
\begin{equation}
    \label{eqn:tmle-eps}
    \epsilon^\star = 
    \left(
    \frac{1}{n}\sum_{i=1}^n \frac{A_i}{\what{\pi}(X_i)^2}
    \right)^{-1}
    \frac{1}{n} \sum_{i=1}^n 
    \frac{A_i}{\what{\pi}(X_i)} (Y_i - \what{\mu}(X_i)).
\end{equation}
Thus, we arrive at an explicit formula for the targeted maximum likelihood estimator
\vspace{-0.3em}
\begin{align}
    \what{\psi}^{\rm TMLE}
    & \defeq \frac{1}{n} \sum_{i=1}^n \Big(\what\mu(1,X_i)+\epsilon^\star H(1,X_i)\Big) %
    =
    \frac{1}{n} \sum_{i=1}^n \left(\what{\mu}(X_i)
    + \epsilon^\star \frac{1}{\what{\pi}(X_i)}\right) \nonumber \\
    & = \frac{1}{n} \sum_{i=1}^n \what{\mu}(X_i)
    + \frac{\sum_{i=1}^n \frac{1}{\what{\pi}(X_i)}}{\sum_{i=1}^n \frac{A_i}{\what{\pi}(X_i)^2}}
    \cdot \frac{1}{n} \sum_{i=1}^n 
    \frac{A_i}{\what{\pi}(X_i)} (Y_i - \what{\mu}(X_i)).
        \label{eqn:tmle}   
\end{align}

\paragraph{Targeted Regularization~\cite{chernozhukov2022riesznet,shi2019adapting}}
Instead of modifying the outcome model in the direction of the clever covariate in a post-processing step~\eqref{eqn:targeting},
we can instead regularize the usual training objective for the outcome model using a similar adjustment term throughout training, an approach referred to as \emph{targeted regularization}, %
where it is demonstrated on e.g. neural networks learned via stochastic gradient-based optimization. %
\ifdefined\msom\else\footnote{An additional difference between TMLE and targeted regularization is the data splits involved. In TMLE, the targeting objective~\eqref{eqn:targeting} for calculating $\epsilon^*$ uses the same data that is used for plug-in estimation (the first line in \eqref{eqn:tmle}),while in targeted regularization, the targeting objective is applied to the dataset used for training $\what\mu$, which has not explicitly described above, as so far, $\what\mu, \what\pi$ are taken as given. These data splits are often different; see \cref{sec:data_splitting} for more discussion on data splits.}\fi %

\section{Constrained Learning Framework}
\label{sec:methods}

The aforementioned 
approaches to first-order 
correction take the fitted nuisance estimate as given,
and make adjustments to either the estimator 
(one-step estimation~\eqref{eqn:debiased}) or the nuisance estimate 
(targeting~\eqref{eqn:targeting}). 
In this section, we propose 
the \emph{constrained learning framework} to first-order 
correction where we train the nuisance parameter to 
be the best nuisance estimator subject to the \emph{constraint} that 
the first-order error term~\eqref{eqn:taylor} is  zero. 

Our method, which we call the C-Learner, is a general 
method for adapting machine learning 
models to explicitly consider the semiparametric nature of the 
downstream task during training. 
For one-step estimation and targeting, in the sections before, 
we would use a fitted outcome model $\what\mu$ that minimizes the squared prediction loss (or equivalently maximizes likelihood, assuming outcomes are Gaussian), 
with 
\vspace{-0.3em}
\begin{align}
\label{eq:regular_outcome}
\what{\mu} \in \argmin_{\wt{\mu} \in \mc{F}} P_{\rm train}[ A (Y - \wt{\mu}(X))^2],
\end{align}
where the loss minimization is performed over an auxiliary training data split $P_{\rm train}$ which may be separate from the main sample $(X_i, A_i, Y_i)_{i=1}^n$ on which estimators are calculated. Instead, the C-Learner solves the constrained optimization problem
\vspace{-0.3em}
\begin{align}
\label{eq:c-learner}
\what{\mu}^C
&\in \argmin_{\wt\mu\in\mathcal F} \left\{
P_{\rm train}[ A (Y - \wt{\mu}(X))^2] : 
\frac{1}{n} \sum_{i=1}^n 
\frac{A_i}{\what{\pi}(X_i)}
(Y_i-\wt\mu(X_i))=0
\right\}, 
\\
\what{\psi}^{\textrm {C-Learner}} 
&:= 
\frac{1}{n} \sum_{i=1}^n \what{\mu}^C(X_i)
\label{eq:c-learner-est}
\end{align}
so that the constraint in Equation~\eqref{eq:c-learner} is applied to the same data used in the plug-in~\eqref{eq:c-learner-est}. 
Observe that the first-order error term being 0 is a sort of balancing constraint: it forces the inverse propensity weighted plug-in $\frac{1}{n}\sum_{i=1}^n A_i\what \mu^C(X_i)/\what\pi(X_i)$ to equal the inverse propensity weighted estimator $\frac{1}{n}\sum_{i=1}^n A_iY_i/\what\pi(X_i)$ \citep{ding2023course}. 
This constrained optimization is done over model class $\mathcal F$, which can be chosen to be suitable for the problem setting. 

The constraint~\eqref{eq:c-learner}
 ensures that the corresponding plug-in estimator \eqref{eq:c-learner-est} has a finite-sample estimate of the first-order error term~\eqref{eqn:taylor} of zero. 
\looseness=-1 The training objective for $\what\mu^C$ thus optimizes for the best outcome model fit using the training data, subject to the constraint that the plug-in estimator is asymptotically optimal. In practice, there 
are several computational approaches to (approximately) solve the  stochastic optimization problem~\eqref{eq:c-learner}
depending on the function class $\mathcal F$. 
In \cref{sec:methodology}, we describe how to instantiate C-Learner using linear models, gradient boosted regression trees, and neural networks. We empirically demonstrate these in \cref{sec:experiments}.

Like other first-order correction methods, the C-Learner can be implemented with cross-fitting~\cite{ChernozhukovChDeDuHaNeRo18}, in which models (nuisance parameters) are evaluated on data splits that are separate from the data splits on which models are trained; we discuss data splitting further in \Cref{sec:data_splitting}. By virtue of being debiased, the C-Learner enjoys
the same asymptotic properties as the AIPW and TMLE. See \Cref{sec:theory_text} for a formal treatment. 
\begin{theorem}[Informal]
The cross-fitted version of
$\what{\psi}^{\textrm {C-Learner}}$ is semiparametrically efficient and doubly robust. 
\end{theorem}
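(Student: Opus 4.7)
The plan starts from a purely algebraic observation: because $\what{\mu}^C$ satisfies the constraint $\frac{1}{n}\sum_{i=1}^n \frac{A_i}{\what{\pi}(X_i)}(Y_i - \what{\mu}^C(X_i)) = 0$, we can add this zero to the plug-in and write
$$\what{\psi}^{\textrm{C-Learner}} \;=\; \frac{1}{n}\sum_{i=1}^n \what{\mu}^C(X_i) \;+\; \frac{1}{n}\sum_{i=1}^n \frac{A_i}{\what{\pi}(X_i)}\bigl(Y_i - \what{\mu}^C(X_i)\bigr),$$
which is exactly the one-step/AIPW form \eqref{eqn:debiased} evaluated at the nuisance pair $(\what{\mu}^C, \what{\pi})$. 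Consequently, once we verify that $\what{\mu}^C$ behaves as a ``valid'' nuisance estimator, the asymptotic analysis reduces to the standard theory for cross-fitted one-step estimators, and the double robustness and efficiency claims follow from the usual template.

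Second, I would instantiate the standard decomposition from \eqref{eqn:taylor}: letting $\what{P}^C$ denote the plug-in law that substitutes $(\what{\mu}^C, \what{\pi})$, one writes
$$\what{\psi}^{\textrm{C-Learner}} - \psi(P) \;=\; (P_n - P)\bigl[\mu(X) + \varphi_X(Z; P)\bigr] \;+\; \textrm{drift} \;+\; R_2(\what{P}^C, P),$$
where the leading empirical-process term is asymptotically $N(0, P[\varphi^2])$ by CLT, matching the semiparametric efficiency bound. Cross-fitting turns the drift term into a conditional mean-zero sum of variance $O_P(\|\what{\mu}^C - \mu\|_{L_2}^2 + \|\what{\pi}-\pi\|_{L_2}^2)/n$, which is $o_P(n^{-1/2})$ under $L_2$-consistency without any Donsker hypothesis, as in \cite{ChernozhukovChDeDuHaNeRo18}. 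A direct calculation gives the ATE remainder as $R_2(\what{P}^C, P) = P\bigl[(\what{\mu}^C - \mu)(\what{\pi}-\pi)/\what{\pi}\bigr]$, which is $O_P(\|\what{\mu}^C - \mu\|_{L_2}\|\what{\pi}-\pi\|_{L_2})$ by Cauchy--Schwarz and overlap; this delivers $o_P(n^{-1/2})$ under the product-rate assumption (yielding efficiency), and the same expression vanishes whenever either nuisance equals its true value (yielding double robustness).

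The principal obstacle, and the step meriting genuine care, is controlling $\what{\mu}^C$ itself. Unlike a standard ERM, $\what{\mu}^C$ is defined through a stochastic equality constraint that involves the main sample, so it is not a purely pre-fit nuisance, and the cross-fitting independence between the nuisance and the evaluation data is not automatic. The plan is two-fold. First, show that the unconstrained minimizer $\what{\mu}$ of \eqref{eq:regular_outcome} already satisfies the constraint up to $O_P(n^{-1/2})$, since the constraint residual \emph{is} the first-order error whose rate is controlled by the CLT under $L_2$-consistency of $(\what{\mu}, \what{\pi})$. Second, use a Lagrangian or implicit-function perturbation argument (or, for function classes admitting a closed-form projection, an explicit formula) to show that $\what{\mu}^C$ differs from $\what{\mu}$ by $o_P(n^{-1/4})$ in $L_2$, so $\what{\mu}^C$ inherits the $L_2$-rate of $\what{\mu}$ at the order required for the product-rate condition, and the residual dependence on the evaluation sample contributes only lower-order terms. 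This step requires a local non-degeneracy / strong-convexity condition on the training loss around $\what{\mu}$ together with a realizability or approximation hypothesis on $\mathcal{F}$ ensuring the constraint set is non-empty. Precise conditions and verification are the content of \Cref{sec:theory_text}.
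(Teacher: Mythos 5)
Your skeleton matches the paper's own argument: the observation that the constraint makes the plug-in algebraically identical to the one-step estimator evaluated at $(\what\mu^C,\what\pi)$ is exactly how the paper kills the term $P_{k,n}\varphi(Z;\what P^C_{k,n})$ in its von Mises decomposition, the leading term is handled by the CLT, and the remainder $R_2$ is bounded by Cauchy--Schwarz plus overlap and the product-rate condition, yielding efficiency and (with only $o_P(1)$ demanded of the product) double robustness. You also correctly identify the one genuinely delicate point: $\what\mu^C$ is defined through a constraint on the evaluation fold, so the standard cross-fitting independence argument does not apply to the drift term.

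The gap is in your proposed resolution of that point. You argue that establishing $\|\what\mu^C-\what\mu\|_{L_2(P)}=o_P(n^{-1/4})$ suffices, so that ``the residual dependence on the evaluation sample contributes only lower-order terms.'' That implication is false in general, and the paper's Example~\ref{ex:ass_D} is a direct counterexample: it constructs $\what\mu^C$ with $P|\what\mu^C-\what\mu|=0$ (so the $L_2$ distance is exactly zero and every rate condition on it holds trivially) for which $(P_{k,n}-P)(\what\mu^C-\mu)=1+o_P(1)$, destroying the $\sqrt n$-asymptotics. The empirical-process term $(P_{k,n}-P)(\what\mu^C-\what\mu)$ is a separate quantity from the $L_2$ norm precisely because $\what\mu^C$ can be adapted to the evaluation sample, and the paper therefore isolates its control as a standalone hypothesis (Assumption~\ref{ass:empirical_proc_mean}, equivalently Assumption~\ref{ass:empirical_proc_mean_diff}) that it verifies only for restricted perturbation classes: for the self-normalized AIPW the perturbation is a constant $c_{k,n}$, so $(P_{k,n}-P)c_{k,n}=0$ identically, and for TMLE it is $\epsilon^\star_{k,n}$ times a function fit on the training fold, so the cross-fitting lemma applies to the fixed direction and the scalar rate does the rest. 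For a general class $\mathcal F$ no such factorization is available, and your Lagrangian/implicit-function argument would need to produce a uniform (entropy-type) bound on $(P_{k,n}-P)$ over the localized constraint set, not just an $L_2$ rate. To make your plan rigorous you must either add the empirical-process condition as an explicit assumption, as the paper does, or restrict the argument to the parametric-fluctuation special cases where it can be checked by hand.
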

\noindent Although we illustrate the C-Learner in the ATE setting for simplicity, %
our approach generalizes to other estimands that are 
continuous linear functionals of outcome $\mu$ (\cref{sec:extension}).

\subsection{Relationship With Other Approaches to First-Order Debiasing}
\label{sec:other_methods}
\looseness=-1 The constrained learning framework provides a unifying 
perspective to existing approaches to first-order correction. %
First, a variant of AIPW~\eqref{eqn:debiased} can also be thought of as a C-Learner over a very restricted class $\mc{F}$ of outcome models. For a given trained outcome model $\what{\mu}$,
consider the constrained optimization problem~\eqref{eq:c-learner} over the model class $\mathcal F := \{\what{\mu}(X)+\epsilon: \epsilon \in \R\}$. In order to satisfy the constraint~\eqref{eq:c-learner}, we must have
$
\epsilon^* = 
\left(\frac{1}{n} \sum_{i=1}^n 
\frac{A_i}{\what{\pi}(X_i)} \right)^{-1}
\frac{1}{n} \sum_{i=1}^n  \frac{A_i}{\what{\pi}(X_i)}(Y_i-\what\mu(X_i)),
$
which gives this special case of the C-Learner, %
\begin{align}
    \what{\psi}^{\textrm {AIPW-SN}}
    &= \frac{1}{n} \sum_{i=1}^n \Big(\what{\mu}(X_i)
    + \epsilon^\star\Big) %
    = \frac{1}{n} \sum_{i=1}^n \what{\mu}(X_i) + 
    \left(\frac{1}{n} \sum_{i=1}^n 
    \frac{A_i}{\what{\pi}(X_i)} \right)^{-1}
    \frac{1}{n} \sum_{i=1}^n \frac{A_i}{\what{\pi}(X_i)}(Y_i-\what\mu(X_i)). \label{eq:normalized_aipw} 
\end{align}
\looseness=-1 We refer to this as a \emph{self-normalized AIPW}. 
This is the same as the AIPW~\eqref{eqn:debiased} aside from a normalization term  $\frac{1}{n}\sum_{i=1}^n \frac{A_i}{\what\pi(X_i)}$ that has expectation 1 if we use the true propensity score $\pi$ instead of $\what \pi$.
We observe empirically (\Cref{sec:experiments}) that the self-normalized AIPW~\eqref{eq:normalized_aipw}, which can be thought of as a variant of AIPW motivated by our constrained optimization perspective, 
enjoys better finite-sample performance than the standard AIPW~\eqref{eqn:debiased}.

For general outcome variables (including unbounded continuous outcome variables), we show that a basic version of targeting~\eqref{eqn:tmle} can
be viewed as a C-Learner over a specific function class. 
By inspection, the first-order condition in~\eqref{eqn:targeting} is given by 
\begin{equation}
    \label{eqn:tmle-foc}
    \frac{1}{n} \sum_{i=1}^n 
    \frac{A_i}{\what{\pi}(X_i)} 
    \left(Y_i - \what{\mu}(X_i) 
    - \epsilon \frac{A_i}{\what{\pi}(X_i)}
    \right) = 0.
\end{equation}
Thus, this version of targeting is a C-Learner where a pre-trained outcome model (for $A=1$, assuming $Y(0):=0$)
$\what\mu(X)$ is perturbed along a specific direction to become 
$\what{\mu}^C(X):=\what\mu(X)+\epsilon \frac{1}{\what{\pi}(X)}$.
Reframing the constrained optimization problem~\eqref{eq:c-learner} with model class $\mathcal F := \left\{\what{\mu}(X)+\epsilon \frac{1}{\what{\pi}(X)}: \epsilon \in \R \right\}$ thus provides a new way to view this version of targeting~\eqref{eqn:tmle}. %

\looseness=-1 For clarity, we do not use ``C-Learner'' to refer %
to self-normalized AIPW \eqref{eq:normalized_aipw} or to targeting \eqref{eqn:targeting}, even though they can be thought of as C-Learners over a restricted model class. %

\subsection{C-Learner is Numerically Stable, Without Additional Heuristics}
\label{sec:c_learner_stable_comparison}
\looseness=-1 The aforementioned approaches to first-order correction rely on adding estimated ratios 
($A/\what \pi(X)$ for one-step estimation~\eqref{eqn:debiased})
or fluctuating the outcome model $\what\mu$ in a specific direction (along $A/\what\pi(X)$ for targeting~\eqref{eqn:targeting})
in order to de-bias plug-in estimators (\Cref{sec:optimality_background}).
In contrast, C-Learner achieves asymptotic optimality without using limited model classes $\mathcal F$ %
as in one-step estimation and targeting (\Cref{sec:other_methods}): it simply trains the nuisance parameter $\what{\mu}^C(\cdot)$ so the plug-in estimator $\frac{1}{n}\sum_{i=1}^n \what{\mu}^C(X_i)$ satisfies the criterion for asymptotic optimality in the most direct way possible, while using the entirety of the chosen model class.

\looseness=-1 In settings with regions of low overlap between treatment and control, the probability of treatment $\what\pi(X)$ can be very close to 0 for some values of $X$, so that $1/\what\pi(X)$ can be extremely large, causing numerical instability in one-step estimation and targeting. 
While there are variations for both AIPW and TMLE that are more stable, e.g., by self-normalizing propensity weights, truncating $\what\pi$ to avoid extreme values in AIPW, or assuming $Y$ is bounded and modeling $Y$ as a (scaled) logistic function of $(A,X)$
in TMLE (see Section~\ref{sec:ks}), 
C-Learner can avoid instability simply by having this $1/\what{\pi}(X)$ appear only in the constraint in the constrained optimization, rather than an additive term to the estimator.

Additionally, simple plug-in estimators of outcome models from their chosen model classes 
have been observed to be numerically stable (e.g.,~\cite{KangSc07}). 
C-Learner appears to inherit these benefits empirically in \cref{sec:experiments}, and also in a simple theoretical low-overlap example in \Cref{sec:theory_simple}.

\subsection{A ``Dual'' C-Learner and Connections to Covariate Balancing}
\label{sec:balance}
\Cref{eq:c-learner} starts by fitting and fixing a propensity score $\what\pi$, and then finding the best fitting outcome model $\what \mu^C$  subject to a constraint that depends on $\what\pi$ to ensure efficiency of the direct plug-in estimator that uses $\what \mu^C$. One could alternatively first fit an outcome model $\what \mu$, and then find the best fitting propensity model $\what \pi^C$ subject to a constraint to ensure efficiency of the resulting estimator. We call this the ``dual'' C-Learner:
\begin{align}
\label{eq:c_learner_dual}
    &\what \pi^C\in \argmax_{\tilde \pi\in \mathcal F_\pi} \left\{
P_{\rm train}[ A\log \tilde \pi(X) +(1-A)\log (1-\tilde \pi(X))] : 
\frac{1}{n}\sum_{i=1}^n \left(1-\frac{A_i}{\tilde \pi(X_i)}\right) \what\mu(X_i)=0
    \right\},\nonumber
    \\
    &\what\psi^\text{C-Learner-Dual}:=\frac{1}{n}\sum_{i=1}^n \frac{A_i}{\what \pi^C(X_i)}Y_i  
.\end{align}
While this constraint is algebraically distinct from the constraint for the usual C-Learner~\eqref{eq:c-learner}, both are chosen by the same principle: enforce a condition that eliminates the first-order bias term. In the usual C-Learner, the constraint sets the projected canonical gradient $P[\varphi_X(Z;\hat P)]$ to zero. In the dual C-Learner, we instead impose a balancing condition that plays an analogous role, which sets $P[\varphi(Z;\hat P)]$ to zero without relying on the projected canonical gradient $\varphi_X$. See \Cref{sec:asymptotics_dual} for details.

\looseness=-1 Other parts of this paper focuses on the C-Learner in Equation~\eqref{eq:c-learner}, and we include the dual C-Learner here primarily to relate to existing literature.
The dual C-Learner formulation is connected to covariate balancing propensity scores (CBPS)~\cite{imai2014covariate}. Specifically, 
the covariate balancing condition from \citep{imai2014covariate} for estimating the (full) ATE can be written as 
\begin{align}
\label{eq:imai_cbps}
\E\left[\frac{A f(X)}{ \pi\left(X\right)}-\frac{\left(1-A\right) f(X)}{1-\pi\left(X\right)}\right]=0
\end{align}
where $f(X)$ can be any function of $X$. For example, one practical choice could be $f(X)=X$ so that ensuring the condition in \eqref{eq:imai_cbps} results in balancing each covariate dimension. %

As our paper focuses on the mean missing outcome setting (equivalent to the ATE with $Y(0):=0$ for cleaner notation), 
we focus on a corresponding balancing condition, i.e. \citep{tan2010bounded}
\begin{align}
\label{eq:cbps_mmo}
\E\left[\left( 1 - \frac{A}{ \pi(X)}\right) f(X) \right]=0,   
\end{align}
again for any choice of $f$; note that if \eqref{eq:cbps_mmo} holds for both $A$ and $1-A$, i.e. that
\begin{align}
\E[f(X)]=\E\left[\frac{A}{ \pi(X)} f(X) \right]
\quad\text{ and }\quad
\E[f(X)]=\E\left[\frac{1-A}{ 1-\pi(X)} f(X) \right]
\end{align}
then \eqref{eq:imai_cbps} must also hold. A finite-sample counterpart of  \eqref{eq:cbps_mmo} can also be written \citep{tan2010bounded} as 
\vspace{-0.3em}
\begin{align}
\label{eq:cbps_mmo_finite}
\frac{1}{n}\sum_{i=1}^n \left(1-\frac{A_i}{\what\pi(X_i)}\right) f(X) = 0
.\end{align}
Notably, this condition is exactly the constraint in \Cref{eq:c_learner_dual} when we take $f$ to be $\what\mu$. 
CBPS can then be used to enforce this constraint if, for example, $\mu$ is linear in some basis of functions $\{f_1,\ldots,f_J\}$, and balance is enforced for all elements in this basis. In this way, covariate balancing can result in semiparametric efficiency.

\paragraph{Related works}
\citet{tan2010bounded} produces semiparametrically efficient IPW-based estimators, similar to a dual C-Learner, but their methods are analogous to targeting, as they learn an extended propensity score that is a parametric fluctuation from an initial propensity score model, where the magnitude of the fluctuation is determined by maximizing likelihood. %
Unlike targeting, their estimators have additional constraints to ensure boundedness of their estimator, improved local efficiency (better variance for a given fixed estimated outcome model; this is outside of the scope of our work), and double robustness. In contrast to our setting, their theoretical results focus on parametric nuisance models and notions of efficiency when certain nuisance parameters are well-specified or not. Similar to our proposed method, their estimators also achieve the semiparametric efficiency bound.

\citet{zhao2019covariate} proposes a loss function to minimize for learning a nonparametric propensity model, where the first-order conditions of the loss are the balancing conditions in CBPS. %
They also show their corresponding inverse propensity weighted estimators are efficient, but under different assumptions:  
\citet{zhao2019covariate} shows efficiency using a sieve approach with a growing basis of functions %
to model the propensity score through a logistic link,
while we assume standard non-parametric rates of convergence for nuisance parameters. %

\vspace{-0.3em}
\paragraph{Low-overlap settings} In \Cref{sec:theory_simple} we present a simple theoretical low-overlap setting in which the IPW estimator $\frac{1}{n}\sum_{i=1}^n \frac{1}{\pi(X_i)}Y_i$ has infinite variance, even with perfectly known propensity scores. 
In contrast, the (usual) C-Learner does have finite variance, suggesting one possible advantage of the usual C-Learner over more general inverse propensity weighting-based methods such as covariate balancing methods and the dual C-Learner.

\label{sec:balance-end}

\section{Methodology}
\label{sec:methodology}

The constrained learning framework can be instantiated in many ways depending on the function class $\mc{F}$ for outcome models. %
We 
present approximate solution methods to the constrained optimization problem~\eqref{eq:c-learner}
for linear models, gradient boosted regression trees, and deep neural networks. We empirically demonstrate these instantiations in Section~\ref{sec:experiments}.\footnote{Other frameworks~\citep{Nabi2024StatisticalLF} exist for constructing constrained outcome models, but do not restrict outcome models to $\mathcal F$; we exclude these from our analysis.}

\subsection{Data Splitting}
\label{sec:data_splitting}
We recommend sample splitting, where we split the data so
that nuisance estimators (e.g. $\what \pi, \what \mu, \what \mu^C$) are fitted on a training (auxiliary) fold and evaluated to form a
causal estimator on an evaluation (main) fold. 
Let $P_{\rm train}$ be the split on which nuisance parameters
such as the outcome model or the propensity score $\what{\pi}(\cdot)$ are trained, and let $P_{\rm val}$ be the split on which we perform model selection on them. 
We use a separate split $P_{\rm eval}$ to evaluate
our final causal estimators. %
Our constrained learning framework~\eqref{eq:c-learner} can thus be rewritten as
\vspace{-.2cm}
\begin{align}
& \what{\psi}^{\textrm {C-Learner}} := 
P_{\rm eval}\left[
\what{\mu}^C(X)
\right]~~~~~\mbox{where}  \nonumber\\
\label{eq:c-learner-datasets}
& \what{\mu}^C\in \argmin_{\wt\mu\in\mathcal F} \left\{
P_{\rm train}[ A (Y - \wt{\mu}(X))^2] : P_{\rm eval}
\left[\frac{A}{\what{\pi}(X)}
(Y-\wt\mu(X))\right]=0
\right\}.
\end{align}
For simplicity, we let $P_{\rm eval}=P_{\rm val}$ except where otherwise specified. 

\paragraph{Cross-fitting with $K$ folds}
\emph{Cross-fitting} with $K$ folds \citep{ChernozhukovChDeDuHaNeRo18} refers to the following sample splitting setup designed to utilize all of the data: 
first, we split the data into $K$ folds. Then, to evaluate on the $k$th fold in $P_{\rm eval}[\what \mu^C(X)]$, we train nuisance parameters on all but the $k$th fold in the data. We repeat this for all $K$ folds and average the results, so that the final estimator utilizes model evaluations over the entire dataset. 
Assuming $K$ evenly divides $n$ for simplicity, let $P_{k,n}$ be the empirical measure over data from the $k$-th fold, and $P_{-k,n}$ be the empirical measure over data from all other folds. 
Let $P_{\rm train}=P_{-k,n}$, and $P_{\rm val}=P_{\rm eval}=P_{k,n}$. 
For each $k=1,\ldots,K$, on the training fold $P_{-k, n}$, we train a propensity score model $\what{\pi}_{-k,n}(x)$ to estimate $P[A\mid X=x]$, as well as an unconstrained outcome model $\what{\mu}_{-k,n}(x)$, and a constrained outcome model $\what{\mu}_{-k,n}^C(x)$ to estimate $P[Y\mid A=1,X=x]$.  

Then, the AIPW and TMLE estimators from \Cref{sec:background} can be written as 
\begin{align*}
   \what\psi^{\rm AIPW}_{k,n}&=P_{k,n}[\what\mu_{-k,n}(X)]+P_{k,n}\left[\frac{A}{\pi_{-k,n}(X)}(Y-\what\mu_{-k,n}(X))\right]\\ 
   \what\psi^{\rm TMLE}_{k,n}&=P_{k,n}[\what\mu_{-k,n}(X)]+\frac{P_{k,n}[1/\pi_{-k,n}(X)]}{P_{k,n}[A/\pi_{-k,n}(X)^2]}P_{k,n}\left[\frac{A}{\pi_{-k,n}(X)}(Y-\what\mu_{-k,n}(X))\right].
\end{align*}

The C-Learner can be written as follows: the constrained outcome model optimizes the prediction loss evaluated under $P_{-k, n}$,  subject to the first-order correction constraint evaluated on the evaluation fold $P_{k, n}$, with final estimator $\what \psi_n^C$:
\vspace{-0.3em}
\begin{align}
\label{eq:constraint}
\what{\mu}_{-k, n}^C 
&\in \argmin_{\tilde{\mu} \in \mc{F}}
\left\{ P_{-k, n} [ A (Y - \tilde{\mu}(X))^2]:
P_{k,n}\left[\frac{A}{\what\pi_{-k,n}(X)}(Y-\tilde{\mu}(X))\right]=0
\right\}\\
    \what\psi^C_n &:=\frac{1}{K}\sum_{k=1}^K P_{k,n}[\what\mu_{-k,n}^C(X)].
\label{eq:c_learner_def}
\end{align}

In contrast to standard cross-fitting, in which nuisance parameters are learned only on the training split $P_{-k,n}$, 
the constraint~\eqref{eq:constraint} is over the eval split $P_{k,n}$. 

\subsection{Linear Models}
\label{sec:methods_linear}
When outcome models are linear functions of $X$, the
constrained learning problem~\eqref{eq:c-learner-datasets} 
has an analytic solution.
Using $\vec{\cdot}$ to denote stacked observations, define
\vspace{-0.3em}
\begin{equation*}
\begin{bmatrix}
\vec{Y}_{\rm train}  \\
\vec{X}_{\rm train} \\
\vec{H}_{\rm train} 
\end{bmatrix}
:= \begin{bmatrix}
\{Y_i\}_{i \in \mc{I}_{\rm train}}  \\
\{X_i\}_{i \in \mc{I}_{\rm train}} \\
\left\{ \frac{A_i}{\what{\pi}(X_i)} 
\right\}_{i \in \mc{I}_{\rm train}}
\end{bmatrix}
\qquad \mbox{and} \qquad
\begin{bmatrix}
\vec{Y}_{\rm eval}  \\
\vec{X}_{\rm eval} \\
\vec{H}_{\rm eval} 
\end{bmatrix}
:= \begin{bmatrix}
\{Y_i\}_{i \in \mc{I}_{\rm eval}}  \\
\{X_i\}_{i \in \mc{I}_{\rm eval}} \\
\left\{ \frac{A_i}{\what{\pi}(X_i)} 
\right\}_{i \in \mc{I}_{\rm eval}}
\end{bmatrix}
\end{equation*}
where $\mc{I}_{\rm train} = \{i \in \mbox{train}: A_i = 1\}$
and $\mc{I}_{\rm eval} = \{i \in \mbox{eval}: A_i = 1\}$ 
are indices with observations in each data split.
The constrained learning problem~\eqref{eq:c-learner-datasets} 
can be rewritten as \vspace{-1.5em}\begin{align*}
\what\theta^C 
    = \argmin_\theta \left\{
\frac{1}{2}\|\vec{Y}_{\rm train} - \vec{X}_{\rm train} \theta\|^2:\; 
\vec{H}_{\rm eval}^\top (\vec{Y}_{\rm eval} - \vec{X}_{\rm eval} \theta)=0 \right\}.
\end{align*}
The KKT conditions characterize the primal-dual optimum $(\what{\theta}^C, \what{\lambda})$
\vspace{-0.7em}
$$
\what{\theta}^C = (\vec{X}_{\rm train}^\top \vec{X}_{\rm train})^{-1} \vec{X}_{\rm train}^\top (\vec{Y}_{\rm train}+ \what{\lambda} \vec{H}_{\rm train})
~~\mbox{where}~~
\what{\lambda} = \frac{\vec{H}_{\rm eval}^\top
(\vec{Y}_{\rm eval} - \vec{Y}_{\rm ols})}{\vec{H}_{\rm eval}^\top \vec{X}_{\rm eval}
(\vec{X}_{\rm train}^\top \vec{X}_{\rm train})^{-1}
\vec{X}_{\rm train}^\top \vec{H}_{\rm train}}
$$
and $\vec{Y}_{\rm ols} := \vec{X}_{\rm train} (\vec{X}_{\rm train}^\top \vec{X}_{\rm train})^{-1}
\vec{X}_{\rm train}^\top \vec{Y}_{\rm train}$. %
Note that $\what \theta^C$ is the OLS with respect to the pseudo-label $\vec{Y}_{\rm train}+ \what{\lambda} \vec{H}_{\rm train}$ and the dual variable shifts the observed outcomes in the direction of $\vec{H}_{\rm train}$ similar to targeting~\eqref{eqn:targeting}, but with additional reweighting using covariates. 
When using our framework restricted to linear function classes, we obtain a new estimator that, to the best of our knowledge, cannot be recovered by existing methods. In \Cref{sec:ks}, we demonstrate that this new estimator improves upon existing methods.

\subsection{Gradient Boosted Regression Trees}
\label{sec:methods_boosting}
We consider outcome models that are gradient boosted regression trees. %
The gradient boosting framework~\cite{friedman2001greedy} iteratively estimates the functional gradient $g_j$ 
of the loss function evaluated on the current function estimate $\what\mu_{j}$.
For the standard  MSE loss $\ell(\mu;X,A,Y):=A(Y-\mu(X))^2$, 
we use weak learners in $\mc{G}$ (e.g., shallow decision trees) to compute
$$
\what{g}_{j+1} \in \argmin_{g \in \mathcal G} P_{\rm train} [(A(g_{j}(X,Y)-g(X)))^2]~~~\mbox{where}~~~g_{j}(X,Y;\what\mu_j) := \frac{\partial}{\partial \mu}\ell(\mu;X,Y)\Big|_{\mu = \what\mu_{j}}
$$
and set $\what \mu_{j+1} = \what\mu_j - \eta \what{g}_{j+1}$ for some step size $\eta$. 
For notational simplicity, since we assume $Y(0):=0$, let $\mu(X)$, $g_j(X,Y)$, $g(X,Y)$
be shorthand for $\mu(A,X)$, $g_j(X,A,Y)$, $g(X,A,Y)$ when $A=1$, and we let $\mu(0,X),g_j(0,X),g(0,X)=0$. 
The process repeats $J$ times until a maximum number of weak learners are fitted or an early stopping criterion is met. 

\begin{algorithm}[t]
\caption{C-Learner with Gradient Boosted Regression Trees}
\begin{algorithmic}[1]
\State \textbf{Input:} learning rate $\eta$, max trees $J$ and $K$, $\what \mu_0 := 0$
\For{$j = 0, 1, \dots, J-1$}
    \State Modify functional gradient $g_j = Y - \what{\mu}_j(X)$ to $\wt{g}_j = g_j + \epsilon^\star_j \cdot 1/\what{\pi}(X)$ where 
  $\epsilon_j^\star := \tfrac{P_{\rm train} [(Y - \what{\mu}_j(X)) \cdot A / \what{\pi}(X)]}{P_{\rm train} [A / \what{\pi}(X)^2]}$ as in~\eqref{eq:epsilon_xgb}
    \State Compute $\what{g}_{j} = \argmin_{g\in \mathcal G} P_{\rm train}[A( \wt{g}_j - g(X))^2]$ and update  $\what\mu_{j+1} = \what \mu_j - \eta \what{g}_{j}$
\EndFor
\For{$k = 0, 1, \dots, K-1$}
    \State Compute gradient $\wt g_k = \epsilon^\star_{J+k} \cdot 1 / \what{\pi}(X)$ where $\epsilon_{J+k}^\star =  \tfrac{P_{\rm eval} [(Y - \what\mu_{J+k}(X)) \cdot A / \what{\pi}(X)]}{P_{\rm eval}[A / \what{\pi}(X)^2]}$ 
    \State Compute $\what{g}_{k} = \argmin_{g\in \mathcal G} P_{\rm eval}[A( \wt{g}_k - g(X))^2]$ and update  $\what\mu_{J+k+1} = \what \mu_{J+k} - \eta \what{g}_{k}$
\EndFor
\State Return final outcome model $\what\mu^C_{\rm XGB} := \what\mu_{J+K}$
\end{algorithmic}
\label{alg:boosting}
\end{algorithm}

\looseness=-1 \paragraph{Constrained Gradient Boosting}
We want to minimize the squared loss subject to the constraint~\eqref{eq:c-learner} and propose a two-stage procedure. First, we perform gradient boosting where instead of the functional gradient of the loss $g_j$, we use a modified version 
\vspace{-0.3em}
\begin{align}
    \label{eq:epsilon_xgb}
    \wt{g}_{j} := g_{j} + \epsilon^\star_j \cdot \frac{1}{\what{\pi}(X)}~~~\mbox{where}~~~
\epsilon_j^\star = \argmin_\epsilon 
\left\{ P_{\rm train}\left[A\left(Y-\what \mu_j(X)-\epsilon \cdot \frac{A}{\what{\pi}(X)}\right)^2\right]\right\}
    \end{align}
\looseness=-1 given by the targeting objective~\eqref{eqn:targeting} applied to $\what\mu_j$ on the dataset $P_{\rm train}$. 
The modification $\wt g_j$ allows subsequent weak learners to be fit in a direction that reduces the loss \emph{and} makes the plug-in estimator closer to satisfying our constraint on $P_{\rm train}$. To ensure the constraint~\eqref{eq:c-learner-datasets} is satisfied on $P_{\rm eval}$, the second stage fits weak learners  to the gradient of constraint violation below:
\vspace{-0.3em}
\begin{equation*}
    \wt{g}_k := \epsilon^\star_{J+k} \cdot \frac{1}{\what{\pi}(X)}~~\mbox{where}~~
    \epsilon^\star_{J+k} = \argmin_{\epsilon} \left\{ P_{\rm eval}\left[A\left(Y-\what{\mu}_{J+k}(X)-\epsilon \cdot \frac{A}{ \what{\pi}(X)}\right)^2\right]\right\}.
\end{equation*}
We summarize the method above in pseudo-code in Algorithm~\ref{alg:boosting}, which we implement using the XGBoost package with custom objectives~\citep{xgb}. Hyperparameters for the first stage (learning rate $\eta$, and other properties of the weak learners such as max tree depth) are selected to have the lowest MSE loss on $P_{\rm val}$.  Other hyperparameters such as max number of trees $J$ and $K$, may be set on $P_{\rm val}$, or alternatively, by early stopping, with evaluation on different folds within $P_{\rm train}$. 
These hyperparameters are re-used in the second stage.

\subsection{Neural Networks}
\label{sec:methods_nn}
When outcome models are 
neural networks $\what\mu_\theta(x)$
with weights $\theta$,
we consider the usual MSE loss with a Lagrangian regularizer for the constraint~\eqref{eq:c-learner-datasets}
\vspace{-0.1cm}
\begin{align*}
    L(\theta) = P_{\rm train}\left[A(Y-\what\mu_\theta(X))^2\right] +
\lambda\cdot P_{\rm eval}\left[\frac{A}{\what\pi(X)}(Y-\what\mu_\theta(X))\right]^2.
\vspace{-0.3cm}
\end{align*}
\looseness=-1 We optimize the objective using stochastic gradient methods, where we take mini-batches of the training set to approximate the gradient of the first term and take a full-batch gradient on the evaluation set for the second term. At the end of every training epoch, the constraint on $P_{\rm eval}$ is enforced \emph{exactly} by
adjusting the constant bias term $\theta_{\rm bias}$ in the neural network:
\vspace{-0.1cm}
\begin{align}
\label{eq:const_shift}
    \theta_{\rm bias} \gets \theta_{\rm bias} +
    \left( P_{\rm eval}\left[\frac{A}{\what\pi(X)}\right] \right)^{-1} P_{\rm eval}\left[\frac{A}{\what\pi(X)}(Y-\what\mu_\theta(X))\right].
    \vspace{-0.3cm}
\end{align}
\looseness=-1 We choose to stop training at the epoch that minimizes $P_{\rm val}  \left[ A(Y-\what\mu_\theta(X))^2\right]$, the MSE loss on the validation split. We consider two options for choosing  hyperparameters (e.g. learning rate, $\lambda$).
The first option is to minimize MSE on $P_{\rm val}$. Second,
 since a small bias shift indicates a regularizer that is successful,
 we choose
  among hyperparameters with reasonable MSE loss on $P_{\rm val}$ the one that minimizes the size of the bias shift \eqref{eq:const_shift} in the first epoch. 
While the first method is more standard, the second encourages satisfying the constraint over the course of training, to avoid big jumps in optimization. 
Model selection for nuisance parameters is an active area of research \citep{hahn2019bayesian,rolling2013,setodji2017right}; we explore these in Section~\ref{sec:civilcomments}. 

\section{Experiments}
\label{sec:experiments}

In this section, we present a series of experiments to demonstrate how C-Learner\ifdefined\msom\else\footnote{Here we use ``C-Learner'' to refer to estimators using solutions to the constrained optimization problem in Section~\ref{sec:methods} for a chosen model class, even though other methods can also be thought of as C-Learners, as discussed in \Cref{sec:other_methods}.} \fi
is flexible across data types and model classes. \ifdefined\msom Here we use ``C-Learner'' to refer to estimators using solutions to the constrained optimization problem in Section~\ref{sec:methods} for a chosen model class, even though other methods can also be thought of as C-Learners, as discussed in \Cref{sec:other_methods}.\fi
We also show that C-Learner achieves good empirical performance among all these settings, especially variants with low overlap.
First, in \cref{sec:ks}, we consider well-studied tabular simulated settings where existing debiasing methods (e.g. AIPW) are known to perform poorly~\cite{KangSc07, robins2007comment}. 
Surprisingly, C-Learner is the only debiased method that performs comparably with the naive plug-in estimator, without additional heuristics or assumptions. 
We show that the same holds for a more flexible function class using gradient-boosted trees for the outcome models. We also show that the dual C-Learner (Section \ref{sec:balance}) performs at least as well as covariate balancing methods.

To test the scalability of our approach, in \cref{sec:civilcomments} we construct and study a high-dimensional setting with text features, where we 
fine-tune a language model \citep{distilbert} under our constrained learning framework. Again, we observe that C-Learner outperforms one-step estimation and targeting, especially in settings with low overlap. We summarize the results of our experiments in the above two settings in \Cref{fig:intro_bar_comparison}.

\ifdefined\pnasformat
\begin{figure*}[t]
\centering
\includegraphics[width=\textwidth]
{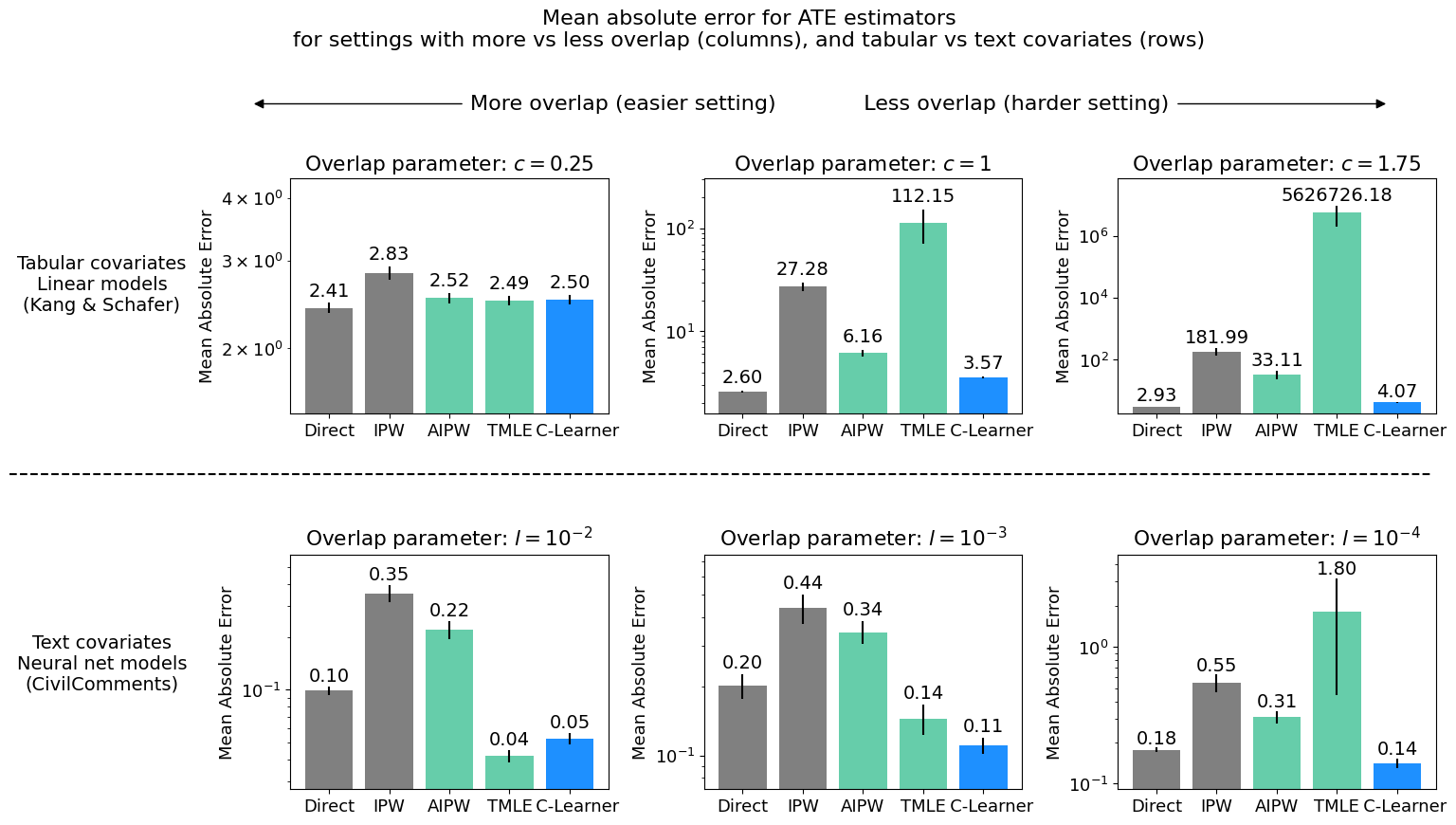}
\caption{Comparison of mean absolute error of various estimators, in two types of settings (tabular covariates in the top row with sample sizes of $N=200$ from \cref{sec:ks}, and text covariates in the bottom row with sample sizes of $N=2000$ from \cref{sec:civilcomments}), and with settings with a range of difficulty (more overlap in the left columns, less overlap in the right columns). %
Error bars represent $\pm 1$ standard error, over 1000 and 100 dataset draws for tabular and text settings, respectively. 
C-Learner performs well, even in settings with low overlap. ``Direct'' refers to the naive plug-in estimator of the best fit outcome model $\frac{1}{n}\sum_{i=1}^n \what\mu(X_i)$, ``IPW'' refers to inverse propensity score weighting, ``AIPW'' refers to the augmented IPW, ``TMLE'' refers to targeted maximum likelihood estimation. AIPW, TMLE, and C-Learner are asymptotically optimal. C-Learner performs well in all settings, even in ones with low overlap, for both tabular and text covariates.} %
\label{fig:intro_bar_comparison}
\end{figure*}
\else
\begin{figure}[t]
\centering
    \vspace{-1.5cm}
\includegraphics[width=\textwidth]
{images/cl_compare.png}
\caption{Comparison of mean absolute error of various estimators, in two types of settings (tabular covariates in the top row with sample sizes of $N=200$ from \cref{sec:ks}, and text covariates in the bottom row with sample sizes of $N=2000$ from \cref{sec:civilcomments}), and with settings with a range of difficulty (more overlap in the left columns, less overlap in the right columns). %
Error bars represent $\pm 1$ standard error, over 1000 and 100 dataset draws for tabular and text settings, respectively. 
C-Learner performs well, even in settings with low overlap. ``Direct'' refers to the naive plug-in estimator of the best fit outcome model $\frac{1}{n}\sum_{i=1}^n \what\mu(X_i)$, ``IPW'' refers to inverse propensity score weighting, ``AIPW'' refers to the augmented IPW, ``TMLE'' refers to targeted maximum likelihood estimation. AIPW, TMLE, and C-Learner are asymptotically optimal. %
} %
\label{fig:intro_bar_comparison}
\end{figure}
\fi

\ifdefined\msom\else Lastly, in \cref{sec:ihdp_experiment}, we study a common tabular dataset 
(Infant Health and Development Program~\cite{brooks1992effects}), where the C-Learner is implemented with gradient boosted regression trees matches the performance of one-step estimation and targeting. 
\fi

\subsection{Tabular Dataset from \citet{KangSc07}} 
\label{sec:ks}

We start with the  synthetic tabular setting constructed by \citet{KangSc07}, who
demonstrate empirically that the direct method (a naive plug-in estimator with a linear outcome model, labeled as ``OLS'' in \citep{KangSc07}) achieves better performance than
asymptotically optimal methods in their setting. 
\citet{robins2007comment} note that 
settings in which some subpopulations have a much higher
probability of being treated than others,
like 
in \citet{KangSc07},
are very challenging for existing asymptotically optimal methods. 
Thus, we %
evaluate the C-Learner on this well-known and challenging setting. %

Our goal is to estimate $\psi(P):=P[Y(1)]=P[P[Y| A=1,X]]=P[\mu(X)]$ from data $(X,A,AY)$
(assuming $Y(0) := 0$).
\ifdefined\pnas
The covariates $Z$ that generate the data are low-dimensional, and 
the covariates $X$ that are observed are a misspecified version of $Z$. See \cref{sec:ks_dgp} for more details. 
\else
The \textit{true} outcome and treatment  mechanisms depend on covariates $\xi \sim N(0, I) \in \mathbb R^4$ and $\varepsilon\sim N(0,1)$ via
$Y = 210 + 27.4 \xi_{1} + 13.7 \xi_{2} + 13.7 \xi_{3} + 13.7 \xi_{4} + \varepsilon$ and $ 
\pi(\xi)  = \frac{\exp(-\xi_{1} + 0.5 \xi_{2} - 0.25 \xi_{3} - 0.1 \xi_{4})}{1+\exp(-\xi_{1} + 0.5 \xi_{2} - 0.25 \xi_{3} - 0.1 \xi_{4})}.$
Here, $P[Y(1)]=P[Y\mid A = 1] = 200$ and $P[Y] = 210$ so that a naive average of the treated units is biased by $-10$. For a random sample of 100 data points, the true propensity score can be as low as 1\% and as high as 95\%. We focus on the misspecified setting, where instead of observing $\xi$, the modeler observes
$
X_{1} = \exp(\xi_{1}/2),\;X_{2} = \xi_{2}/(1+\exp(\xi_{1})) + 10,\;X_{3} = (\xi_{1}\xi_{3}/25 + 0.6)^3,\;X_{4} = (\xi_{2} + \xi_{4} + 20)^2.
$
\fi
Next, we demonstrate the instantiations of the C-Learner with two model classes: linear models and gradient boosted regression trees.

\begin{table}[t]
\footnotesize
\centering
\hfill
\begin{subtable}{.53\linewidth}
\centering
\caption{$N=200$} %
\begin{tabular}{lrrrr}
\toprule
 Method & \multicolumn{2}{c}{Bias} & \multicolumn{2}{c}{Mean Abs Err} \\
\cmidrule(lr){1-1}
\cmidrule(lr){2-3}
\cmidrule(lr){4-5}
Direct & -0.00 & (0.10) & 2.60 & (0.06)\\
IPW & 22.10 & (2.58) & 27.28 & (2.53) \\
IPW-SN & 3.36 & (0.29) & 5.42 & (0.26)\\
\greymidrule
AIPW & -5.08 & (0.474) & 6.16 & (0.46) \\
AIPW-SN & -3.65 & (0.20) & 4.73 & (0.18) \\
TMLE & -111.59 & (41.07) & 112.15 & (41.07) \\
C-Learner & \textbf{-2.45} & (0.12) & \textbf{3.57} & (0.09) \\
\greydashedmidrule
TMLE-L & -2.06 & (0.10) & 3.10 & (0.07) \\
C-Learner-L & \textit{\textbf{-0.792}} & (0.11) & \textit{\textbf{2.89}} & (0.07) \\
\bottomrule
\end{tabular}
\end{subtable}
\hspace{-3.15em}
\begin{subtable}{.42\linewidth}
\centering
\caption{$N=1000$} %
\begin{tabular}{rrrr}
\toprule
 \multicolumn{2}{c}{Bias} & \multicolumn{2}{c}{Mean Abs Err} \\
\cmidrule(lr){1-2}
\cmidrule(lr){3-4}
-0.43 & (0.04) & 1.17 & (0.03)\\
 105.46 & (59.84) & 105.67 & (59.84) \\
6.83 & (0.33) & 7.02 & (0.32) \\
\greymidrule
-41.37 & (24.82) & 41.39 & (24.82) \\
-8.35 & (0.43) & 8.37 & (0.43) \\
-17.51 & (3.49) & 17.51 & (3.49) \\
\textbf{-4.40} & (0.07) & \textbf{4.42} & (0.07) \\
\greydashedmidrule
-3.68 & (0.06) & 3.68 & (0.05) \\
\textit{\textbf{-1.89}} & (0.07) & \textit{\textbf{2.27}} & (0.06) \\
\bottomrule
\end{tabular}
\end{subtable}
\hfill
\hspace*{\fill}
\caption{%
Comparison of estimators on misspecified  \citet{KangSc07} settings in 1000 tabular simulations, for linear outcome models (\Cref{sec:ks_linear}). Asymptotically optimal methods are listed beneath the solid horizontal divider. 
Asymptotically optimal methods that use a logistic link are below the dashed horizontal divider. 
We highlight the best-performing \emph{asymptotically optimal} method, not including the logistic link, in \textbf{bold};
we highlight the best-performing \emph{asymptotically optimal} method \emph{overall} in \textbf{\textit{bold-italic}}. Standard errors are displayed in parentheses to the right of the point estimate.
}
    \label{tb:linear_simulation}
\end{table}

\subsubsection{Linear Outcome Models}
\label{sec:ks_linear}
Linear outcome models are appealing due to their simplicity and interpretability, and are a fundamental tool for both theorists and practitioners. 
\looseness=-1 Here, 
we fit a linear model $\what\mu(X)$ on covariates $X$ to predict the potential outcome $Y$. %
The outcome models we employ achieve an $R^2$ value close to 0.99, indicating a high degree of fit to the observed data. 
We also fit logistic propensity score models $\what \pi$; the ROC is approximately 0.75, suggesting reasonable %
classification performance. %
Following the original study by \citet{KangSc07}, 
we do not distinguish between data splits; there is only one split (%
$P_{\rm train}=P_{\rm val}=P_{\rm eval}$ in 
\Cref{sec:data_splitting}). %

We present a comparison of ATE estimators that use $\what \mu, \what \pi$ as described above in Table~\ref{tb:linear_simulation}. 
In this table, ``Direct'' refers to the plug-in with the outcome model trained as usual (``OLS'' in \citep{KangSc07}), ``IPW-SN'' \citep{KangSc07} refers to self-normalized IPW (a.k.a. Hajek estimator~\citep{ding2023course}), and ``AIPW-SN'' refers to self-normalized AIPW (see Section~\ref{sec:other_methods}). 
C-Learner performs best out of asymptotically optimal methods (AIPW, AIPW-SN, TMLE, C-Learner) in this setting, demonstrating strong numerical stability despite extreme inverse propensity weights. In some cases, the C-Learner improves upon AIPW and TMLE by orders of magnitude.\ifdefined\msom\else \footnote{The median absolute error is reported in \cite{KangSc07}. Qualitative results are the same under this additional metric, which we omit for easier exposition. In \cref{sec:ks_more_results}, we show similar results for other metrics such as RMSE and median absolute error.} \fi\ifdefined\msom  
In \cref{sec:ks_more_results}, we show similar results for other metrics such as RMSE and median absolute error.\fi

Even when outcomes are not truly bounded (here they are Gaussian), it is common to implement the TMLE via a logistic link for improved estimator stability. Therefore, %
in addition to TMLE~\eqref{eqn:tmle} with the squared loss, we also compare with the logistic formulation of TMLE (``TMLE-L'') using the \texttt{tmle} R package~\citep{gruber2012tmle}. To use the logistic link, one needs to normalize the values of $Y$ to be between zero and one, which requires an upper and lower bound for $Y$. Even when $Y$ is not actually bounded, such lower and upper bounds can be roughly estimated from data. %
We can also formulate the C-Learner using the logistic link, which we call ``C-Learner-L''. We find that C-Learner-L improves upon TMLE-L as well, demonstrating that C-Learner can be used in conjunction with other heuristics to improve and stabilize estimates. Details of C-Learner with the logistic link are in \Cref{sec:exp_details}.

\paragraph{Numerical Instability and Common Ways To Mitigate}

ATE estimation methods involving inverse propensity weights are known to be numerically unstable if the propensity score $\what\pi(X)$ is close to 0. One way to address this instability %
is through normalizing these inverse propensity score weights. We find that self-normalized versions of IPW and AIPW are more stable and have better performance than their original versions, e.g. in Table~\ref{tb:linear_simulation}.  

Another common heuristic to handle extreme estimated propensity scores is to truncate $\what\pi(X)$ for a chosen small $\eta>0$ so that if $\what\pi(X)<\eta$, we redefine $\what \pi(X) := \eta$ \citep{ding2023course,cole2008constructing}. %
As dealing with extreme estimated propensity scores $\what \pi(X)$ can be challenging, another option is to avoid these difficult $X$ with small $\what \pi(X)$ entirely by adjusting the estimand to exclude or down-weight values of $X$ for which $\what \pi(X)$ takes extreme values~\citep{CrumpHoImMi06,li18}. 
We show in \cref{sec:ks_more_results} that while AIPW and TMLE perform poorly using raw estimated propensity scores $\what\pi$, both perform better after truncation. 
We emphasize that C-Learner outperforms other asymptotically optimal methods in settings with low overlap, without needing to use heuristics like normalization or truncation. 

Lastly, additional assumptions, such as bounded outcomes, can be used to improve estimator stability. We also compare with the logistic formulation of TMLE in \Cref{tb:linear_simulation}. %

\begin{wrapfigure}[10]{r}{0.5\textwidth}
    \centering
    \vspace{-1cm}
    \includegraphics[scale=0.4]{images/plot_prop_1000_2.pdf}
    \caption{Empirical density of fitted propensity
    scores $\what\pi$, for \citet{KangSc07} settings modified with parameter $c\in\{0.25,1,1.75\}$. Histograms across 1000 datasets with size 200 each.}
    \label{fig:density_propensity}
\end{wrapfigure}

\looseness=-1 \paragraph{Estimator Performance When Varying Overlap Between Treatment and Control} In order to understand how quickly estimator performance deteriorates in settings with low overlap, we compare how estimators perform in different data settings that vary by the degree of overlap or variation in propensity scores by modifying the treatment assignment. Specifically, we scale the logit of the treatment mechanism by a parameter $c$ such that if $c = 0$, every observation has an equal chance of being observed, while as $c$ increases, treatment probabilities become more extreme (i.e. closer to 0 or 1). 
See \Cref{sec:ks_sensitivity} for more details. 
In \Cref{fig:density_propensity}, we display the empirical density of the the fitted propensity scores for $c \in \{ 0.25, 1, 1.75\}$. The case in which $c = 1$ is precisely the original setting of \citet{KangSc07}. In \Cref{fig:mae_vs_prop}, we plot in log scale the mean absolute error of each estimator as a function of the scaling parameter $c$. The other asymptotically optimal methods and IPW deteriorate quickly as the fitted propensities become extreme, while the C-Learner deteriorates an order of magnitude slower. In \Cref{tb:prop_stats_ks} and \Cref{tb:1_prop_stats_ks} in~\Cref{sec:ks_sensitivity}, we compute statistics for learned propensity scores $\what\pi(X)$ for various values of $c$, such as the minimum, maximum and standard deviation.

\citet{robins2007comment} noted that methods like the AIPW, for example, perform poorly in the setting proposed by \citet{KangSc07} 
due to high variability in propensity scores. They thus suggest considering the flipped version of the task, in which we want to estimate $\wt\psi(P):= P[Y(0)]$ instead; there, propensity scores are not extreme, and asymptotically optimal methods such as AIPW work well. \citet{robins2007comment} thus ask the following question (rewritten to match our notation): ``Can we find doubly robust estimators that, under the authors' chosen joint distribution for $(X,A,(1-A)Y)$, both perform almost as well as the direct method for $P[Y(1)]$ 
and yet perform better than the direct method for $P[Y(0)]$?''
We observe that the C-Learner is such an estimator, as it performs well for estimating both $P[Y(0)]$ and $P[Y(1)]$. See \Cref{tb:flipped_ks} in \cref{sec:ks_more_results} for 
$P[Y(0)]$. 

\ifdefined\pnas
\else
\begin{figure}[t]
    \centering
    \includegraphics[scale=.45]{images/mae_bar_plot_2.pdf}
    \caption{Existing debiased methods (in teal) perform worse in settings with low overlap (larger values of $c$), in contrast to the simple plug-in estimator (``direct'') and C-Learner. 
    Mean average error in the modified \citet{KangSc07} setting (\Cref{sec:ks}), for linear outcome models, with different values of scaling parameter $c$. Results are averaged over 1000 dataset draws for each $c$, each with $N=200$. Error bars are $\pm 1$ standard error.  
    This figure depicts the same information as the top row in Figure~\ref{fig:intro_bar_comparison}, but with more values of  $c$.
    }
    \label{fig:mae_vs_prop}
\end{figure}

\fi

\begin{table}[t]
\footnotesize
\centering
\hfill
\begin{subtable}{.53\linewidth}
\centering
\caption{$N=200$} %
\begin{tabular}{lrrrr}
\toprule
Method & \multicolumn{2}{c}{Bias} & \multicolumn{2}{c}{Mean Abs Err} \\
\cmidrule(lr){1-1}
\cmidrule(lr){2-3}
\cmidrule(lr){4-5}
Direct & -5.12 & (0.10) & 5.30 & (0.09)\\
IPW & 1192 & (919) & 1206 & (919) \\
IPW-SN & -1.01 & (0.10) & 7.29 & (0.33)\\
Lagrangian & -4.44 & (0.10) & 3.57 & (0.09) \\
\greymidrule
AIPW & 275 & (215) & 280 & (215) \\
AIPW-SN & \textbf{-0.82} & (0.24) & 4.53 & (0.19) \\
TMLE & 487 & (345) & 10927 & (493) \\
C-Learner & -2.89 & (0.10) & \textbf{3.53} & (0.07) \\
\bottomrule
\end{tabular}
\end{subtable}
\hspace{-3.15em}
\begin{subtable}{.42\linewidth}
\centering
\caption{$N=1000$} %
\begin{tabular}{rrrr}
\toprule
 \multicolumn{2}{c}{Bias} & \multicolumn{2}{c}{Mean Abs Err} \\
\cmidrule(lr){1-2}
\cmidrule(lr){3-4}
-3.48 & (0.04) & 3.49 & (0.04)\\
28.3 & (3.23) & 32.30 & (3.23) \\
2.26 & (0.29) & 4.85 & (0.26)\\
-2.29 & (0.04) & 2.34 & (0.05) \\
\greymidrule
2.28 & (0.52) & 4.58 & (0.51) \\
\textbf{0.36} & (0.14) & 2.74 & (0.15) \\
17.3 & (10.20) & 20.05 & (10.23) \\
-1.92 & (0.04) & \textbf{2.03} & (0.04) \\
\bottomrule
\end{tabular}
\end{subtable}
\hfill
\hspace*{\fill}
\caption{\looseness=-1 Comparison of estimators on misspecified \citet{KangSc07} settings, in 1000 tabular simulations, for gradient boosted regression tree outcome models (\Cref{sec:ks_xgb}). Asymptotically optimal methods are listed beneath the horizontal divider. We highlight the best-performing \emph{asymptotically optimal} method in \textbf{bold}. %
Standard errors are displayed in parentheses to the right of the point estimate.
}

    \label{tb:linear_simulation_xgb}
\end{table}

\ifdefined\response\newpage\newpage\clearpage\pagebreak\else\fi
\subsubsection{The Dual C-Learner and Covariate Balancing.}
\label{sec:ks_ipw}
Here, we include experiment results in this setting for the dual C-Learner and for balancing methods for the propensity score, as introduced in \Cref{sec:balance}. We now compare an alternative formulation of the C-Learner presented in Equation \eqref{eq:c_learner_dual}, in which we take the linear outcome model (learned with no constraint) as fixed and we solve for the best propensity model subject to the efficiency constraints for the propensity weighted estimator. Finally, we use this learned propensity model for inverse propensity weighting to obtain the dual C-Learner estimator. We present in \Cref{tb:linear_ipw} a comparison between the following propensity weighted estimators: IPW, which would be analogous to a direct implementation, IPW with the standard CBPS formulation that forces balancing across the covariates $X$, a parametrically fluctuated model (``Param-Fluc'') based on \cite{tan2010bounded}, which is analogous to the parametric fluctuation of TMLE for the IPW, and the dual C-Learner (``C-Learner-Dual'') as just described. We also include self-normalized versions of each method (denoted with ``-SN''). Implementation details are in \Cref{sec:exp_details}. For both sample sizes, the dual C-Learner is the best-performing propensity-weighted estimator.
See  \Cref{sec:ks_more_results} for additional results that use covariate-balanced propensity scores. 

\begin{table}[t]
\footnotesize
\centering
\hfill
\begin{subtable}{.53\linewidth}
\centering
\caption{$N=200$} %
\begin{tabular}{lrrrr}
\toprule
 Method & \multicolumn{2}{c}{Bias} & \multicolumn{2}{c}{Mean Abs Err} \\
\cmidrule(lr){1-1}
\cmidrule(lr){2-3}
\cmidrule(lr){4-5}
IPW & 22.10 & (2.58) & 27.28 & (2.53) \\
IPW-SN & 3.36 & (0.29) & 5.42 & (0.26)\\
CBPS & 1.98 & (0.18) & 4.30 & (0.13)\\
CBPS-SN & -1.20 & (0.10) & 2.76 & (0.06)\\
Param-Fluc & -2.77 & (0.13) & 3.61 & (0.11) \\
Param-Fluc-SN & -1.81 & (0.10) & 2.79 & (0.07) \\
C-Learner-Dual & \textbf{-0.01} & (0.10) & \textbf{2.59} & (0.06) \\
C-Learner-Dual-SN & -9.20 & (0.11) & 9.22 & (0.11) \\
\bottomrule
\end{tabular}
\end{subtable}
\hspace{-1.15em}
\begin{subtable}{.42\linewidth}
\centering
\caption{$N=1000$} %
\begin{tabular}{rrrr}
\toprule
 \multicolumn{2}{c}{Bias} & \multicolumn{2}{c}{Mean Abs Err} \\
\cmidrule(lr){1-2}
\cmidrule(lr){3-4}
-0.43 & (0.04) & 1.17 & (0.03)\\
 105.46 & (59.84) & 105.67 & (59.84) \\
  -1.85 & (0.07) & 2.36 & (0.06) \\
   -1.35 & (0.04) & 1.59 & (0.04) \\
-1.74 & (0.04) & 1.85 & (0.04) \\
-1.72 & (0.04) & 1.83 & (0.04) \\
\textbf{-0.41} & (0.04) & \textbf{1.16} & (0.03) \\
-9.49 & (0.05) & 9.49 & (0.05) \\
\bottomrule
\end{tabular}
\end{subtable}
\hfill
\hspace*{\fill}
\caption{%
Comparison of estimator performance on mispecified datasets from \citet{KangSc07} in 1000 tabular simulations, for linear logit propensity models (\Cref{sec:ks_ipw}). %
We highlight the best-performing method \emph{overall} in \textbf{bold}. Standard errors are displayed within parentheses to the right of the point estimate.
}
    \label{tb:linear_ipw}
\end{table}

\label{sec:ks_ipw-end}

\ifdefined\response\newpage\newpage\clearpage\pagebreak\else\fi

\subsubsection{Gradient Boosted Regression Tree Outcome Models.}
\label{sec:ks_xgb}
We now demonstrate the flexibility of the C-Learner by using gradient boosted regression trees as outlined in \cref{sec:methods_boosting}. As before, the propensity model $\what\pi$ is fit as a logistic regression on covariates $X$.  
Since sample splitting is more appropriate for this flexible model class, we use cross-fitting with $K=2$ folds as 
we describe in 
\cref{sec:data_splitting}, and treat more formally in \cref{sec:theory_text}.  Additional implementation details, such as the grid of hyperparameters for tuning and coverage results are deferred to \cref{sec:ks_experiment_details}. 

The results for the mean absolute error and their respective standard errors are displayed in Table~\ref{tb:linear_simulation_xgb}. The C-Learner outperforms the direct method and achieves the best mean absolute error (MAE) among \emph{all} estimators and across \emph{all} sample sizes. For comparison, we also include
a plug-in method where the outcome model is learned using only the first stage in 
\cref{sec:methods_boosting}. 
Note that the first stage by itself does not aim to make the estimate of the first-order error term zero, so that the resulting estimator is not asymptotically optimal. 
In the results in Table~\ref{tb:linear_simulation_xgb} this is labeled as ``Lagrangian'', as it can be seen as a Lagrangian relaxation of the C-Learner framework. Especially for small sample sizes (e.g. $N=200$), the IPW, AIPW, and TMLE estimators perform very poorly, likely due to being sensitive to extreme propensity weights. 
Self-normalization for both IPW (``IPW-SN'') and AIPW (``AIPW-SN'') is crucial in these settings. 

\looseness=-1 In \Cref{tb:linear_simulation_xgb_trim} in \cref{sec:ks_more_results}, we present results when truncating $\what \pi(X)$ at arbitrary thresholds of of 0.1\% and 5\%. There, C-Learner again performs better (truncating at 0.1\%) or similarly to TMLE and AIPW-SN (truncating at 5\%), and better than other methods.

\subsection{CivilComments Dataset and Neural Network Language Models}\label{sec:civilcomments}
Neural networks can learn good feature representations for image and text data. 
We construct a new semisynthetic causal inference dataset using text covariates.

\paragraph{Setting}
Content moderation is a fundamental problem 
for maintaining the integrity of social media platforms. 
We consider a setting in which we wish to
measure the average level of toxicity across all user
comments. 
It is infeasible to have human experts label all comments for 
toxicity, and the comments that 
get flagged for human labeling may be 
a biased sample. 
This is a mean missing outcome problem (\cref{sec:ate}).

We use the CivilComments dataset \citep{civilcomments}, which contains 
real-world online comments and corresponding human-labeled toxicity scores.
The dataset contains toxicity labels $Y(1)$ for all 
comments $X$ (which provides a ground truth to compare to), 
and we construct the labeling (treatment) mechanism $A \in \{0, 1\}$
to induce selection bias.
Specifically, 
whether the toxicity label for a comment can be observed is drawn as 
$    A\sim \textrm{Bernoulli}( g(X) )$ where 
$g(X)=\textrm{clip}(b(X), l, u)$ 
and $\textrm{clip}(y,l,u)$ is $        \max(l,y)$ if $y\leq l$, and $\min(u,y)$ if $y>u$. 
We set $u=0.9$ and $l=10^{-4}$. 
A lower $l$ implies more extreme $\pi(X)$ (and thus also $\what \pi(X)$), i.e. less overlap. %
Here, $b(X)\in[0,1]$ is a continuous measure of whether comment $X$ relates to the demographic identity ``Black'',
 from the dataset.
The labeled data suffers from the following selection bias: within this dataset, 
comments mentioning the demographic identity ``Black'' tend to be labeled as more toxic, compared to ones that don't, so a naive average of toxicity over labeled units would overestimate the overall toxicity. From a causal perspective, we have induced confounding, can be handled by ATE estimators. %

\paragraph{Procedure}
\ifdefined\pnas
    We demonstrate how C-Learner can be instantiated with neural networks. To learn $\what{\pi}$, $\what{\mu}$, and $\what{\mu}^C$,  we fine-tune a pre-trained DistilBERT model \citep{distilbert} with a linear head using stochastic gradient descent. 
    
    We fit nuisance estimators on 100 re-drawn datasets of size 2000 each, from the full dataset of size 405,130. 
    For C-Learner, we consider two ways of picking hyperparameters ($\lambda$ and learning rate), as described in \cref{sec:methods_nn}: one in which we choose hyperparameters for the best outcome model loss on validation data, and one in which we choose hyperparameters that minimize the size of the bias shift. 
    Additional experiment details are in \cref{sec:nlp_details}. 
\else
    We demonstrate C-Learner with neural networks. To learn $\what{\pi}$, $\what{\mu}$, and $\what{\mu}_C$, we fine-tune a pre-trained DistilBERT model \citep{distilbert} with a linear head using stochastic gradient descent, with $\what\mu^C$ learned using the procedure described in \cref{sec:methods_nn}. 
    We train $\what \mu$
    using squared loss on labeled units, and propensity models $\what \pi$ using the logistic loss. 
    We fit nuisance estimators on 100 re-drawn datasets of size 2000 each, from the full dataset of size 405,130. On each dataset draw, we use cross-fitting, as described in \cref{sec:theory_text}, with $K=2$ folds, for all estimators. We consider two ways of picking hyperparameters ($\lambda$ and learning rate), as described in \cref{sec:methods_nn}: one in which we choose hyperparameters for the best mean squared error on validation data, and one in which we choose hyperparameters that minimize the size of the bias shift. 
    Additional  details are in \cref{sec:nlp_details}. 
\fi

\paragraph{Results}

\begin{table}[t]
\vspace{-1.5cm}
\centering
\footnotesize
\begin{tabular}{lrrrr}
\toprule
Method & \multicolumn{2}{c}{Bias} & \multicolumn{2}{c}{Mean Abs Err} \\ 
\cmidrule(lr){1-1}
\cmidrule(lr){2-3}
\cmidrule(lr){4-5}
Direct & 0.173 & (0.008) & 0.177 & (0.007) \\
IPW & 0.504 & (0.084) & 0.546 & (0.081) \\
IPW-SN & 0.114 & (0.017) & 0.153 & (0.014) \\
\greymidrule
AIPW & 0.084 & (0.043) & 0.307 & (0.032) \\
AIPW-SN & 0.116 & (0.018) & 0.161 & (0.014) \\
TMLE & -1.264 & (1.361) & 1.802 & (1.355) \\
C-Learner (best val MSE) & 0.103 & (0.015) & 0.141 & (0.011) \\
C-Learner (smallest bias shift) & \textbf{0.075} & (0.012) & \textbf{0.115} & (0.008) \\
\bottomrule
\vspace{0.05em}
\end{tabular}

\caption{Comparison of estimators in the CivilComments~\citep{civilcomments} semi-synthetic dataset (\Cref{sec:civilcomments}) over 100 re-drawn datasets, with $l=10^{-4}$. Asymptotically optimal methods are listed beneath the horizontal divider. We highlight the best-performing method in \textbf{bold}. Estimators (besides the ``smallest bias shift'' C-Learner) are analogous to those in \Cref{sec:ks}. Standard errors are displayed within parentheses to the right of the point estimate.}
    \label{tab:LLM_results}
\end{table}
In Table~\ref{tab:LLM_results} (a low overlap setting with $l=10^{-4})$, both variants of C-Learner %
In \cref{sec:more_nlp_results} we also investigate settings with both low and high overlap ($l=10^{-2},10^{-3},10^{-4}$). We observe C-Learner performs better compared to other asymptotically optimal methods in datasets with low overlap, and more comparably to other asymptotically optimal methods with higher overlap. 
We also display a subset of these results in the second row of \Cref{fig:intro_bar_comparison}, with 
$\lambda$ chosen to have the best validation MSE.

\looseness=-1 These results with text covariates echo our tabular results (\Cref{sec:ks}), %
suggesting C-Learner's strong performance in low-overlap extends to more complex data settings. 

\section{C-Learner Has Finite Variance in a Simple Low-Overlap Setting, While AIPW and TMLE Do Not}
\label{sec:theory_simple}
We have discussed how the direct method (naive plug-in estimator) is stable, including in challenging settings with low overlap, but lacks the asymptotic properties of debiased methods like one-step estimation and targeting. On the other hand, basic versions of one-step estimation and targeting achieve asymptotic optimality but are unstable in settings with low overlap. It appears C-Learner is the best of both worlds, as it has desirable asymptotic properties (\Cref{sec:theory_text}) while producing stable estimates (\Cref{sec:experiments}). 
Here, we use a simple theoretical example to show how C-Learner inherits the stability of the direct method. 
As C-Learner, AIPW, and TMLE are all asymptotically equivalent under (roughly) standard assumptions (\Cref{sec:theory_text}), the assumptions in this section are necessarily nonstandard. 
We do not claim that C-Learner is strictly superior to AIPW or TMLE; rather, we articulate specific scenarios in which C-Learner outperforms AIPW and TMLE.

The intuition is that we have outcome models that are well-behaved, but inverse propensity terms (such as $A/\what\pi(X)\cdot (Y-\what\mu(X))$) that are very poorly behaved, so that estimators that are plug-ins of outcome models have finite variance but estimators with additive inverse propensity terms have infinite variance. Such poorly-behaved inverse propensity terms can happen when there is low overlap.

More specifically, we can specify the function class of outcome models $\mathcal F$ to only consist of outcome models contained within some square-integrable envelope $C(x)$ with $P[C(X)^2]<\infty$, so that $P_{k,n}[f(X)]$ for any $f\in \mathcal F$ has finite variance. 
At the same time, we make assumptions so that the inverse propensity term in the AIPW and TMLE have infinite variance, for example, because $1/\pi(X)$ is heavy-tailed and not integrable due to low overlap. 
For simplicity, we also assume the true propensity weights are known and used for estimation. 
Under these assumptions, the AIPW and TMLE estimators do not have finite variance, while C-Learner does have finite variance. 
Note that the assumption of heavy-tailed $1/\pi(X)$ is a departure from standard assumptions in Section~\ref{sec:theory_text}, where we assume that $\pi(X)$ is bounded away from 0.

Although assuming known propensity scores is quite strong, the results are still interesting as AIPW, TMLE, and C-Learner all use these propensity weights but only C-Learner has finite variance. 
Additionally, although our example will show that the direct method has finite variance like the C-Learner, our goal is to compare with estimators that are asymptotically optimal under more standard assumptions (\Cref{sec:theory_text}).

\subsection{Setting}
We focus on the mean missing outcome setting, as described in \Cref{sec:ate}.
Consider binary treatments (actions) $A\in\{0,1\}$, covariates $X\in\R^d$, and potential outcomes $Y(1)$ and $Y(0):=0$. Let $(X,A,Y)$ be i.i.d. observations drawn from distribution $P$. Our goal is to estimate the mean missing outcome $\psi:=\int Y(1) dP$, which we assume is finite.

Let $\|\cdot \|$ denote the Euclidean norm. For consistency with previous sections, let $P[\cdot]$ denote the expectation operator under $P$. 
We consider the cross-fitted~\citep{van2011cross,ChernozhukovChDeDuHaNeRo18} estimators as defined in \Cref{sec:data_splitting}, but using true propensities $\pi$ instead of fitted ones $\what\pi_{-k,n}$.

\subsection{Assumptions}
To show that C-Learner remains well-behaved even when inverse propensity terms blow up, we assume all outcome models lie within a square-integrable envelope $C(x)$, %
i.e. for any $f\in \mathcal F$,
$|f(X)|\leq C(X)$ almost surely, and $P[C(X)^2]<\infty$. %
This is much weaker than assuming that the model class is bounded everywhere, for example. Below we provide examples, such as where $C(x)$ is a maximum of linear functions of $x$. Note that bounding a function class with square-integrable envelopes is also standard in empirical process theory. 

We note that while our proposed methodology for C-Learner (\Cref{sec:methodology})  does not constrain outcome models to be within any specific bounded envelopes, one can choose a wide enough envelope to encompass reasonable outcome models for the setting, and then proceed with C-Learner within this newly constrained function class (i.e. let $\mathcal F$ be outcome models constrained to be within the chosen envelope). 

\begin{assumption}[Outcome models are contained within a square-integrable envelope]
\label{ass:envelope}
There is a square-integrable envelope $C(x)$ (i.e. $P[C(X)^2]<\infty$) such that all elements in the model class $f\in \mathcal F$ are contained in the envelope (i.e. $|f(X)|\leq C(X)$ almost surely).\end{assumption}
Below we give examples to show that a linear function is square-integrable if $X$ has finite second moments, and that the maximum of square-integrable functions is also square-integrable. Thus, if we assume $X$ has finite second moments, then one can construct square-integrable envelopes $C(x)$ that are the maximum of linear functions;
this shows that assuming $f$ is contained in a square-integrable envelope is a weaker assumption than assuming $f$ is bounded everywhere by constants, for example.

\begin{example}[Linear models with finite coefficients are square-integrable if $X$ has finite second moments]
\label{ex:linear_square_int}
    Let $g(x):=\beta^\top x$ for some fixed $\beta$. Assume $P[\|X\|^2]<\infty$. Then $g(x)$ is square-integrable. %
    This follows by Cauchy-Schwarz: 
    $P[(\beta^\top X)^2]\leq \|\beta\|^2 P[\|X\|^2]<\infty.$
\end{example}

\begin{example}[Maximum of a finite number of square-integrable functions is square-integrable]\label{ex:max_square_int}
Consider $g_1,\ldots,g_k$ that are all square-integrable so that $P[g_i(X)^2]<\infty$ for $i=1,\ldots,k$. 
Note that $\max(g_1(x)^2,\ldots,g_k(x)^2)\leq |g_1(x)|^2+\ldots+ |g_k(x)|^2$
so that taking expectations %
$$P[\max(|g_1(X)|,\ldots,|g_k(X)|)^2]
= P[\max(g_1(X)^2,\ldots,g_k(X)^2)]
\leq \sum_{i=1}^k P[|g_i(X)|^2]<\infty$$
with the last inequality by square-integrability of each $g_i$. 
\end{example}

Below, we also assume that a feasible solution to the constraint exists in $\mathcal F$ with high probability. This high probability depends on the size of the envelope. In Example~\ref{ex:feasible_envelope_whp} we present a simple example where for a fixed $\epsilon>0$, we can choose $\mathcal F$ (and envelope $C(x)$) such that the feasible solution to the constraint exists in $\mathcal F$  with probability $\geq 1-\epsilon$. 

\begin{assumption}[C-Learner solution exists in $\mathcal F$]
\label{ass:soln_exists}
With probability $\geq 1-\epsilon$ over draws of data of size $n$, 
the C-Learner solution exists in $\mathcal F$. \end{assumption}
A simple sufficient (but not necessary) condition for the C-Learner solution to exist in $\mathcal F$ is that the constant functions corresponding to the maximum and minimum values in $P_{k,n}$, which we denote $f_+$ and $f_-$, respectively, are both in $\mathcal F$, and $\mathcal F$ is closed under convex combinations. This is sufficient because 
$P_{k,n}\left[\frac{A}{\pi(X)}(Y-f_+(X))\right]\leq 0$ and $P_{k,n}\left[\frac{A}{\pi(X)}(Y-f_-(X))\right]\geq 0$, 
so there will be a convex combination of $f_+$ and $f_-$ for which the constraint is exactly 0. In Example~\ref{ex:feasible_envelope_whp} below we construct a model class $\mathcal F$ contained in envelope $C(x)$ where $\mathcal F$ and $C(x)$ are defined in such a way that the maximum and minimum values of $Y$ in the sample $P_{k,n}$ are within $\mathcal F$ and envelope $C(x)$ at least $1-\epsilon$ of the time. Note that randomness over $P_{-k,n}$ is not directly relevant to this condition. 

\begin{example}[Constructing $\mathcal F$ where C-Learner solution exists in $\mathcal F$ with high probability]
\label{ex:feasible_envelope_whp}
Consider a model class $\mathcal F_0$ that contains functions of choice, and also all constant functions, that is closed under convex combinations. Also consider an envelope function of choice $C_0(x)$ that is chosen to contain most reasonable unconstrained solutions. There are many choices for $C_0$ (\Cref{ex:linear_square_int},~\Cref{ex:max_square_int}). It is not required that $|f(X)|\leq C_0(X)$ almost surely for all $f\in \mathcal F_0$. 
Fix $m$, the number of samples in $P_{k,n}$, and probability threshold $\epsilon>0$. 
Assume $Y$ is continuous and that the CDF of $Y$ is known: $F(x)=P(Y\leq x)$. 
Let $\alpha=\frac{1}{2}(1-(1-\epsilon)^{1/m})$. 
Then consider lower and upper bounds
$a=F^{-1}(\alpha), \; b=F^{-1}(1-\alpha).$
By construction of $a$ and $b$, 
$P( \cap_{i=1}^m \{Y_i\in[a,b]\})\geq 1-\epsilon$
so that we can choose a new envelope
$$C(x)=\max(C_0(x), |a|, |b|)$$
that contains each of $Y_1,\ldots,Y_m$ with probability $\geq 1-\epsilon$ and is square-integrable (\Cref{ex:max_square_int}). 
Then define $\mathcal F$ to be all $f\in\mathcal F_0$ for which $|f(x)|\leq C(x)$ almost surely.
\end{example}

Lastly, we make the following assumption so that the inverse propensity terms in AIPW and TMLE have infinite variance. We state the assumption, then discuss its two parts. 
\begin{assumption}[Inverse propensity term is not square-integrable]
\label{ass:prop_not_square_int}
(I) There is some $v_0>0$ such that 
for all $X$ in the support of $P$, $\text{Var}(Y\mid X, A=1)\geq v_0.$ (II) Also, $\pi(X)>0$ almost surely and
$P\left[\frac{1}{ \pi(X)}\right]=\infty.$
\end{assumption}
In the first part of the assumption above, we assume that the conditional variance of the outcome, conditional on $X$ and treatment $A=1$, is above a threshold $v_0>0$. In the vast majority of real settings, the conditional variance of the outcome will be nonzero, and it may be plausible to assume a lower bound.

In the second part of the assumption above, we require that 
$P[1/\pi(X)]=\infty$, which corresponds to low overlap settings.
We also assume $\pi(X)>0$ almost surely so the estimators are well-defined. %
There are many choices of $\pi(X)$ for which $P[1/\pi(X)]=\infty$; we provide simple examples. %

\begin{example}[Beta $\pi(X)$]
Let $\pi(X)\sim \text{Beta}(a,1)$ with density
$f(p)=a\,p^{a-1}$, then $P[1/\pi(X)]=a/(a-1)$ if $a>1$, and $P[1/\pi(X)]=\infty$ otherwise. 
As a simple example, if $\pi(X)\sim\text{Unif}(0,1)$ then $\pi(X)\sim\text{Beta}(1,1)$, and $P[1/\pi(X)]=\infty$. 
\end{example}
\subsection{Result}
Now we proceed to the main result, which is that in the specific theoretical scenario defined above, C-Learner has finite variance, while AIPW and TMLE do not. 
\begin{theorem}
\label{thm:simple_variance}
Assume Assumptions~\ref{ass:envelope},~\ref{ass:soln_exists},~\ref{ass:prop_not_square_int}, 
i.e. fix an $\epsilon>0$, assume true propensities $\pi(x)$ are known, and assume the following:
\begin{enumerate}
    \item There is an envelope $C(x)$ with  $P[C(X)^2]<\infty$ such that for all $f\in \mathcal F$, $|f(X)|\leq C(X)$ almost surely. 
    \item With probability $\geq 1-\epsilon$ over draws of data of size $n$,  C-Learner solution exists in $\mathcal F$.
\item There is a $v_0>0$ such that 
$\text{Var}(Y\mid X, A=1)\geq v_0$
for all $X$ in the support of $P$. Also, $\pi(X)>0$ almost surely and
$P\left[\frac{1}{ \pi(X)}\right]=\infty$.
\end{enumerate}
Then the TMLE and AIPW estimators have infinite variance. In contrast, 
with probability $\geq 1-\epsilon$,
the C-Learner estimator has finite variance. 
\end{theorem}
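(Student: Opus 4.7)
The argument splits into a short finite-variance bound for $\what\psi^{\textrm{C-Learner}}$ and a harder infinite-variance lower bound for $\what\psi^{\textrm{AIPW}}$ and $\what\psi^{\textrm{TMLE}}$.

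For C-Learner I would condition on the solvability event $\mathcal{E}$ (probability $\geq 1-\epsilon$ by Assumption~\ref{ass:soln_exists}) on which $\what\mu^C_{-k,n}\in\mathcal{F}$ for every fold $k$. Assumption~\ref{ass:envelope} then gives $|\what\mu^C_{-k,n}(X)|\leq C(X)$ almost surely, so Jensen's inequality yields $(P_{k,n}[\what\mu^C_{-k,n}(X)])^2 \leq P_{k,n}[C(X)^2]$; averaging over folds and taking expectations gives $E\bigl[(\what\psi^{\textrm{C-Learner}})^2\,\mathbf{1}_{\mathcal{E}}\bigr]\leq P[C(X)^2]<\infty$, so the estimator has finite variance on $\mathcal{E}$.

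For AIPW I would condition on the out-of-fold nuisance $\what\mu_{-k,n}$ so that on fold $k$ the summands $Z_i = \what\mu(X_i) + (A_i/\pi(X_i))(Y_i-\what\mu(X_i))$ are i.i.d., then condition further on $X_i$ and exploit ignorability plus $Y(0):=0$ to obtain
$$\mathrm{Var}(Z \mid X, \what\mu) = \frac{1}{\pi(X)}\bigl[\mathrm{Var}(Y\mid X, A{=}1) + (\mu(X)-\what\mu(X))^2\bigr] - (\mu(X)-\what\mu(X))^2 \geq \frac{v_0}{\pi(X)},$$
where the last step uses $\pi(X)\leq 1$ so that the two $(\mu-\what\mu)^2$ terms cannot over-cancel. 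Integrating over $X$ and invoking $P[1/\pi(X)]=\infty$ gives $E[Z_1^2]=\infty$. Pairwise products $E[Z_iZ_j]$ for $i\neq j$ reduce to $\psi^2$ after conditioning on the nuisance, so the expansion $E[(n^{-1}\sum_i Z_i)^2] = n^{-2}\sum_i E[Z_i^2] + n^{-2}\sum_{i\neq j}E[Z_iZ_j]$ forces $E[(\what\psi^{\textrm{AIPW}})^2]=\infty$.

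For TMLE I would use the representation $\what\psi^{\textrm{TMLE}} = \bar{M}_n + \epsilon^\star\bar{U}_n$ with $\bar{U}_n = P_{k,n}[1/\pi(X)]$, and decompose the second moment by conditioning on the positive-probability event $\{A_1=1,\, A_i=0\text{ for all }i\geq 2\}$. On this event $\epsilon^\star = (Y_1-\what\mu(X_1))\pi(X_1)$ in closed form, and $\epsilon^\star\bar{U}_n$ contains unbounded contributions proportional to $(Y_1-\what\mu(X_1))\pi(X_1)/(n\pi(X_j))$ for each untreated index $j\geq 2$. Since the conditional law of $1/\pi(X_j)$ given $A_j=0$ retains infinite second moment whenever $P[1/\pi(X)]=\infty$, and since $\mathrm{Var}(Y\mid X,A=1)\geq v_0>0$ keeps $E[(Y_1-\what\mu(X_1))^2\pi(X_1)^2\mid A_1=1]$ strictly positive, the event alone contributes $\infty$ to $E[(\epsilon^\star\bar{U}_n)^2]$, and the finite-second-moment $\bar{M}_n$ piece cannot cancel this thanks to the elementary bound $(a+b)^2\geq (1-1/\alpha)a^2-\alpha b^2$. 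The main obstacle is precisely this TMLE step: unlike AIPW, the self-normalizing ratio $\bar{U}_n/\bar{V}_n$ can temper blow-ups on realizations where numerator and denominator scale together, so one cannot copy the AIPW conditional-variance calculation; the low-treatment-count decomposition circumvents this because $\epsilon^\star$ is driven by a single treated observation while the untreated indices pump uncanceled $1/\pi$ mass into $\bar{U}_n$ without a compensating factor in $\bar{V}_n$.
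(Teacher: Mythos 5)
Your proposal is correct, and the C-Learner and AIPW parts track the paper's own argument closely: the paper likewise bounds the C-Learner plug-in by the envelope via Jensen, and for AIPW it lower-bounds the conditional second moment of the correction term by $v_0/\pi(X)$ given $X$ and the training fold (your version, which works with $\mathrm{Var}(Z\mid X)$ for the full summand rather than the second moment of the correction alone, absorbs the $\what\mu(X)$ piece slightly more cleanly). The genuine divergence is in the TMLE step. The paper keeps all of the treatment randomness: writing $\Delta = WS/T$, it bounds $P[S^2\mid X_{1:m},A_{1:m}]\geq (v_0/m)T$ so the $T$'s cancel to give $P[\Delta^2\mid X_{1:m},A_{1:m}]\geq (v_0/m)W^2/T$, then applies Jensen to $t\mapsto 1/t$ together with the identity $P[T\mid X_{1:m}]=P_{k,n}[\pi(X)/\pi(X)^2]=W$ to get $P[\Delta^2\mid X_{1:m}]\geq (v_0/m)W$, and finishes with $P[W]=P[1/\pi(X)]=\infty$. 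Your route instead restricts to the positive-probability event of exactly one treated unit, where $\epsilon^\star=\pi(X_1)(Y_1-\what\mu(X_1))$ in closed form and the untreated indices feed uncompensated $1/\pi(X_j)$ mass into $\bar U_n$; the key facts you need --- that $P[(1-\pi(X))/\pi(X)^2]\geq P[1/\pi(X)]-1=\infty$ so the conditional law given $A=0$ keeps an infinite second moment, and that $P[(Y_1-\what\mu(X_1))^2\pi(X_1)^2\mid A_1=1]\geq v_0\,P[\pi(X)^2\mid A=1]>0$ with the two factors decoupled once you drop the $j=1$ term from $\bar U_n$ --- all check out. The paper's argument is more global and shorter; yours makes the failure mechanism more transparent and correctly identifies why the AIPW computation cannot be copied verbatim for the self-normalized ratio. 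One caveat applies equally to both proofs: concluding that the full estimator (not just the correction term $\Delta$ or $\epsilon^\star\bar U_n$) has infinite variance requires the plug-in piece $P_{k,n}[\what\mu(X)]$ of the \emph{unconstrained} outcome model to have finite second moment, i.e.\ $\what\mu\in\mathcal F$ or at least square-integrability of $\what\mu$; you at least make this explicit through the $(a+b)^2\geq(1-1/\alpha)a^2-\alpha b^2$ step, whereas the paper leaves it implicit.
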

See \Cref{sec:theory_simple_more} for the proof of \Cref{thm:simple_variance}. 

\label{sec:theory_simple-end}

\section{Asymptotic Properties}
\label{sec:theory_text}
In \Cref{sec:theory_simple} we focused on a low-overlap setting with $P[1/\pi(X)]=\infty$ where true propensities are known. In contrast, here we consider a more standard setting and identify conditions under which  the C-Learner is semiparametrically efficient and doubly robust. In particular, our theoretical treatment also shows the asymptotic optimality of one-step estimation (self-normalized AIPW) and targeting (TMLE with unbounded real outcomes) since they are also C-Learners satisfying the requisite conditions. 

The proof techniques here are common across those used for other first-order de-biasing methods~\citep{van2006targeted,van2011targeted,van2016one,van2017generally,Kennedy22} as we share the goal of setting the first-order error term in the distributional Taylor expansion to zero, and also showing second-order terms are negligible.

As in \Cref{sec:theory_simple}, we focus on the cross-fitted~\citep{van2011cross,ChernozhukovChDeDuHaNeRo18} formulation of the C-Learner in \Cref{sec:data_splitting}. %
In addition to its practical benefits,
cross-fitting simplifies the proof of  asymptotic optimality, especially when nuisance parameter models can be large and complex. When nuisance  model classes are Donsker---converging at $\sqrt{n}$-rates---asymptotic optimality can be shown without explicit sample splitting~\citep{Kennedy22,vaart2023empirical,van2000asymptotic}.
The constraint~\eqref{eq:constraint} is over $P_{k,n}$. %

\ifdefined\response\newpage\newpage\clearpage\pagebreak\else\fi
\label{sec:ass_d_note}
We shortly identify sufficient conditions that guarantee asymptotic optimality  (Assumption~\ref{ass:empirical_proc_mean}), in addition to standard conditions on the nuisance parameters $\what{\pi}_{-k,n},\what{\mu}^C_{-k,n}$ (\Cref{ass:overlap}, \Cref{ass:consistency}) and on outcomes (\Cref{ass:bounded_outcomes}). 
\begin{assumption}[Overlap]
\label{ass:overlap}
    For some $\eta>0$ and for all $k,n$, we have  
    \vspace{-0.2em}
    $$
    \eta\leq  \pi(X) \le 1-\eta, \quad \eta \leq \what \pi_{-k,n}(X) \leq 1-\eta \quad \text{a.s.}
    $$  
\end{assumption}
\vspace{-1.2em}
\begin{assumption}[Convergence rates of propensity and constrained outcome models]
\label{ass:consistency}
For all~$k \in \{1,\ldots,K\}$, 
both $\what\pi, \what\mu^C$ are consistent,\vspace{-0.3em}
    $$
    \|\what\pi_{-k,n} - \pi\|_{L_2(P)}=o_P(1), \quad  \|\what\mu_{-k,n}^C - \mu\|_{L_2(P)}=o_P(1)
    $$ 
$$\text{and also} \quad    \|\what\pi_{-k,n} - \pi\|_{L_2(P)} \cdot  \|\what\mu_{-k,n}^C - \mu\|_{L_2(P)}=o_P(n^{-\frac{1}{2}}).$$
\end{assumption}
\noindent As we discuss below, we can relax these assumptions
when guaranteeing double robustness (consistency under misspecified nuisance parameters).
As is typical, we assume that outcomes do not differ too much from their means, conditional on covariates. 
\begin{assumption}
[Outcomes close to conditional means] \label{ass:bounded_outcomes} For all $k,n$ and some $0<B<\infty$, 
\vspace{-0.3em}
$$\|A(Y-P[Y\mid X])\|_{L_2(P)} =\|A(Y-\mu(X))\|_{L_2(P)} \leq B. $$
\end{assumption}

We also require the following condition (\Cref{ass:empirical_proc_mean}) on the C-Learner outcome model. This condition is new to our setting, and warrants discussion. %

\begin{assumption}[Empirical process assumption]%
\label{ass:empirical_proc_mean}\ \vspace{-0.3em}
$$(P_{k,n}-P)(  \what\mu_{-k,n}^C(X)- \mu(X)) =o_P(n^{-1/2}).$$
\end{assumption}
\noindent In \cref{sec:other_methods}, we showed how versions of one-step estimation  and targeting can also be considered C-Learners. We show in \cref{sec:show_constant_shift} and \cref{sec:show_tmle} that cross-fitted versions of the self-normalized AIPW and TMLE for continuous and unbounded outcomes, under standard assumptions on $\what\pi_{-k,n}, \what \mu_{-k,n}$ (with $\what \mu_{-k,n}$ the usual outcome model learned using Equation~\eqref{eq:regular_outcome}) also satisfy Assumption~\ref{ass:empirical_proc_mean}. Thus, C-Learner results in Theorems~\ref{thm:asymp},~\ref{thm:dr} (to come) apply directly to these estimators. %
One future research direction is to understand the model classes and constrained optimization methods for which this assumption holds. 

\Cref{ass:empirical_proc_mean} is not implied by previous assumptions. Consider the following example: 

\begin{example}[Assumption \ref{ass:empirical_proc_mean} does not follow from Assumptions \ref{ass:overlap}, \ref{ass:consistency}, \ref{ass:bounded_outcomes}]\label{ex:ass_D}
Assume \Cref{ass:overlap}.
Let $\what\mu_{-k,n}$ be an \emph{unconstrained} solution to
$\what\mu_{-k,n}\in \argmin_{\tilde\mu\in\mathcal F} P_{-k,n}[A(Y-\tilde \mu(X))^2]$,
in contrast txo the constrained problem in \eqref{eq:constraint}. 
Then, define 
$\what\mu^C_{-k,n}(x)=\what\mu_{-k,n}(x)+\sum_{i=1}^{n/K} \mathbf{1} \left\{x=x_i\right\}$, with the sum of indicators over elements in the $P_{k,n}$ fold.
Then 
\begin{align}
\label{eq:01_counterex}
    P\left|\what\mu^C_{-k,n}(X)-\what\mu_{-k,n}(X)\right|=0\;\text{ and } \;
P_{k,n}\left(\what\mu^C_{-k,n}(X)-\what\mu_{-k,n}(X)\right)=1.
\end{align}
We make the following additional assumptions for this example. 
We also assume that $\|\what\mu_{-k,n}(X)-\mu(X)\|=o_P(1)$ (analogous to \Cref{ass:consistency}).
Let $\what\mu$ denote $\argmin_{\tilde \mu\in\mathcal F} P[A(Y-\tilde\mu(X))^2]$, the best fitted model over the entire population. For simplicity, assume that $Y$ is bounded to satisfy \Cref{ass:bounded_outcomes}, and that $\what\mu(X)$ is bounded as well. Then
\vspace{-0.7em}
\begin{align*}
(P_{k,n}-P)\left(\what\mu^C_{-k,n}(X)-\mu(X)\right)&=
(P_{k,n}-P)\left(\what\mu^C_{-k,n}(X)-\what\mu_{-k,n}(X)\right)
\\&+
(P_{k,n}-P)\left(\what\mu_{-k,n}(X)-\what\mu(X)\right)\\
&+(P_{k,n}-P)\left(\what\mu(X)-\mu(X)\right)  
\end{align*}
where the first term on the RHS is 1 from \eqref{eq:01_counterex}, the second is $o_P(1)$ by the cross-fitting lemma (\Cref{lem:cross_fitting} in \Cref{sec:show_for_aipw_tmle}), and the third is $o_P(1)$ by Chebyshev. 
Therefore, 
\vspace{-0.7em}
$$(P_{k,n}-P)\left(\what\mu^C_{-k,n}(X)-\mu(X)\right)=1+o_P(1)$$
so that 
Assumption~\ref{ass:empirical_proc_mean} does not hold for this example, while Assumptions \ref{ass:overlap}, \ref{ass:consistency}, \ref{ass:bounded_outcomes} do. 
\end{example}
\label{sec:ass_d_note-end}
\ifdefined\response\newpage\newpage\clearpage\pagebreak\else\fi

\looseness=-1 The C-Learner~\eqref{eq:c_learner_def} enjoys the following asymptotics; see \cref{sec:asymptotics} for the proof.
\begin{theorem}[Asymptotic variance of C-Learner]
\label{thm:asymp}
 Under Assumptions~\ref{ass:overlap}, \ref{ass:consistency}, 
 \ref{ass:bounded_outcomes}, and
\ref{ass:empirical_proc_mean},
 \begin{equation*}
    \sqrt{n} 
    (\what{\psi}_{n}^C - \psi(P))
    \cd N(0, \sigma^2)~~~\mbox{where}~~~
    \sigma^2 \defeq 
    \var_P\left( \frac{A}{\pi(X)}(Y - \mu(X))
    +\mu(X)
    \right).
 \end{equation*}
\end{theorem}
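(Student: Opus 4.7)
The plan is to show that the cross-fitted estimator is asymptotically linear with the efficient influence function $\varphi(Z;P) = \mu(X) + \frac{A}{\pi(X)}(Y-\mu(X)) - \psi(P)$, from which the stated asymptotic normality and variance follow by the CLT. Since $\what\psi^C_n - \psi(P) = \frac{1}{K}\sum_{k=1}^K\{P_{k,n}[\what\mu^C_{-k,n}(X)] - \psi(P)\}$, I would work one fold at a time and show
$P_{k,n}[\what\mu^C_{-k,n}(X)] - \psi(P) = P_{k,n}[\varphi(Z;P)] + o_P(n^{-1/2})$; averaging then yields $\what\psi^C_n - \psi(P) = P_n[\varphi(Z;P)] + o_P(n^{-1/2})$.

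For a single fold, I would perform the standard add-and-subtract decomposition
\begin{align*}
P_{k,n}[\what\mu^C_{-k,n}(X)] - \psi(P)
&= (P_{k,n}-P)[\mu(X)] \\
&\quad + (P_{k,n}-P)[\what\mu^C_{-k,n}(X)-\mu(X)] + P[\what\mu^C_{-k,n}(X)-\mu(X)].
\end{align*}
The middle empirical-process term is $o_P(n^{-1/2})$ by Assumption~\ref{ass:empirical_proc_mean} directly. For the last (bias) term, I would do the classical distributional-remainder computation: using the tower property with $P[AY\mid X] = \pi(X)\mu(X)$,
\[
P\!\left[\what\mu^C_{-k,n}(X)-\mu(X)\right]
= -P\!\left[\tfrac{A}{\what\pi_{-k,n}(X)}(Y-\what\mu^C_{-k,n}(X))\right] + P\!\left[(\mu(X)-\what\mu^C_{-k,n}(X))\cdot\tfrac{\pi(X)-\what\pi_{-k,n}(X)}{\what\pi_{-k,n}(X)}\right].
\]
The second term on the right is a product of nuisance errors; by overlap (Assumption~\ref{ass:overlap}), Cauchy–Schwarz, and the product rate in Assumption~\ref{ass:consistency}, it is $o_P(n^{-1/2})$. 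The first term equals $(P_{k,n}-P)[\tfrac{A}{\what\pi_{-k,n}(X)}(Y-\what\mu^C_{-k,n}(X))]$ because the C-Learner constraint sets $P_{k,n}[\tfrac{A}{\what\pi_{-k,n}(X)}(Y-\what\mu^C_{-k,n}(X))]=0$ exactly.

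The remaining step is to replace $(\what\pi_{-k,n},\what\mu^C_{-k,n})$ by $(\pi,\mu)$ inside this empirical process term. This is where cross-fitting pays off: conditional on the training fold $P_{-k,n}$, the nuisance estimates are fixed, so a conditional Chebyshev argument (as in the standard cross-fitting lemma used for AIPW/TMLE, cf.~\Cref{lem:cross_fitting}) shows
\[
(P_{k,n}-P)\!\left[\tfrac{A}{\what\pi_{-k,n}(X)}(Y-\what\mu^C_{-k,n}(X)) - \tfrac{A}{\pi(X)}(Y-\mu(X))\right] = o_P(n^{-1/2}),
\]
using Assumption~\ref{ass:overlap} to bound inverse propensities, Assumption~\ref{ass:bounded_outcomes} to control $\|A(Y-\mu(X))\|_{L_2(P)}$, and the $L_2$-consistency from Assumption~\ref{ass:consistency}. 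Since $P[\tfrac{A}{\pi(X)}(Y-\mu(X))]=0$, this contribution is exactly $P_{k,n}[\tfrac{A}{\pi(X)}(Y-\mu(X))] + o_P(n^{-1/2})$. Collecting pieces gives the desired influence-function representation, and the classical CLT applied to $P_n[\varphi(Z;P)]$ yields $\sqrt{n}(\what\psi^C_n-\psi(P)) \cd N(0,\sigma^2)$ with $\sigma^2 = \var_P(\varphi(Z;P))$.

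\paragraph{Main obstacle.}
The delicate step is the middle term $(P_{k,n}-P)[\what\mu^C_{-k,n}(X)-\mu(X)]$: unlike in standard cross-fitted AIPW/TMLE, where $\what\mu_{-k,n}$ is fit purely on the training fold and a cross-fitting lemma immediately delivers the $o_P(n^{-1/2})$ bound, here $\what\mu^C_{-k,n}$ is defined via a constraint evaluated on the \emph{evaluation} fold $P_{k,n}$, so it is not independent of $P_{k,n}$. This is precisely why Assumption~\ref{ass:empirical_proc_mean} is introduced as a primitive: it directly encodes what cross-fitting alone would otherwise give. Verifying Assumption~\ref{ass:empirical_proc_mean} for specific constrained estimators (as the paper does for self-normalized AIPW and TMLE in \Cref{sec:show_constant_shift,sec:show_tmle}) is then the real work, but it is off-loaded from this theorem.
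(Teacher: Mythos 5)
Your proposal is, up to reorganization, the same argument as the paper's. The paper writes the per-fold error via the von Mises expansion at $\what P^C_{k,n}$, so that the first-order term $P_{k,n}\varphi(Z;\what P^C_{k,n})$ vanishes exactly by the C-Learner constraint, the remainder $R_2$ is controlled by overlap, Cauchy--Schwarz and the product rate in Assumption~\ref{ass:consistency}, the empirical-process term is charged to Assumption~\ref{ass:empirical_proc_mean} (plus a cross-fitting argument), and the CLT applies to $(P_n-P)\varphi(Z;P)$. Your add-and-subtract decomposition, your tower-property identity for $P[\what\mu^C_{-k,n}-\mu]$ (whose second term is the paper's $R_2(\what P^C_{k,n},P)$ up to sign), and your use of the constraint to convert $-P[\tfrac{A}{\what\pi_{-k,n}}(Y-\what\mu^C_{-k,n})]$ into $(P_{k,n}-P)[\tfrac{A}{\what\pi_{-k,n}}(Y-\what\mu^C_{-k,n})]$ reproduce this term by term.

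The one step that does not go through as written is your justification of $(P_{k,n}-P)\bigl[\tfrac{A}{\what\pi_{-k,n}}(Y-\what\mu^C_{-k,n})-\tfrac{A}{\pi}(Y-\mu)\bigr]=o_P(n^{-1/2})$ by conditional Chebyshev ``since conditional on the training fold the nuisance estimates are fixed.'' That premise is false for $\what\mu^C_{-k,n}$: its defining constraint is evaluated on $P_{k,n}$, so it is not measurable with respect to the training fold and \Cref{lem:cross_fitting} does not apply to it. This is exactly the subtlety you correctly isolate for $(P_{k,n}-P)[\what\mu^C_{-k,n}-\mu]$ in your ``main obstacle'' paragraph, and it recurs here. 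The repair is to split the integrand as $A(Y-\mu)\bigl(\tfrac{1}{\what\pi_{-k,n}}-\tfrac{1}{\pi}\bigr)+\tfrac{A}{\what\pi_{-k,n}}(\mu-\what\mu^C_{-k,n})$: the first piece involves only $\what\pi_{-k,n}$, which is trained on the other folds, and is genuinely handled by the cross-fitting lemma together with Assumptions~\ref{ass:overlap}, \ref{ass:consistency}, and \ref{ass:bounded_outcomes}; the second piece requires an empirical-process condition on the propensity-weighted difference $\what\mu^C_{-k,n}-\mu$ of the same kind as Assumption~\ref{ass:empirical_proc_mean}. To be fair, the paper's own main proof is equally terse at this point---it attributes all of $T_{1k}$ to Assumption~\ref{ass:empirical_proc_mean} and only carries out the splitting explicitly in \Cref{sec:show_for_aipw_tmle} when verifying the assumptions for self-normalized AIPW and TMLE---but your write-up asserts an independence that does not hold, so the justification as stated would fail and should be replaced by this decomposition (or by strengthening the invoked assumption to cover the weighted difference).
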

\vspace{-0.7em}
\noindent  Since $\sigma^2$ is the semiparametric efficiency bound for the ATE estimand~\cite{Tsiatis07,Kennedy22}, 
the C-Learner~\eqref{eq:c_learner_def}  achieves the tightest  confident interval  and is optimal in the usual local asymptotic minimax sense~\citep[Theorem 25.21]{van2000asymptotic}.
\vspace{-0.3em}
\paragraph{Double robustness}

By virtue of their first-order correction, standard approaches like one-step estimation and targeting enjoy double robustness: if either of the propensity model or the outcome model is consistent, then the resulting estimator is consistent. We show a similar guarantee for the C-Learner. 
Here, 
we assume that either $\what\pi_{-k,n}$ $\what\mu_{-k,n}^C$ is consistent.
\vspace{-0.9em}
\begin{assumption}[At least one of $\what\pi,\what\mu^C$ is consistent]\label{ass:risk_decay}
For all $k$, the product of the errors for the outcome and propensity
models decays as
\vspace{-0.7em}
    $$\|\what\pi_{-k,n} - \pi\|_{L_2(P)}
    \cdot \|\what\mu^C_{-k,n} - \mu\|_{L_2(P)}=o_P(1).$$
\end{assumption}

Using Assumption~\ref{ass:risk_decay} in place of Assumption~\ref{ass:consistency}, we arrive at the following result; see \cref{sec:double_robustness} for the proof.
\begin{theorem}[C-Learner is doubly robust]
\label{thm:dr}
The C-Learner~\eqref{eq:c_learner_def} is consistent under Assumptions~\ref{ass:overlap}, \ref{ass:bounded_outcomes}, \ref{ass:empirical_proc_mean}, and 
\ref{ass:risk_decay}.
\end{theorem}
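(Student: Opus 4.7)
The plan is to adapt the argument of \Cref{thm:asymp} by tracking only $o_P(1)$ rates rather than $o_P(n^{-1/2})$ rates, with the key first step being to exploit the constraint~\eqref{eq:constraint} to rewrite $\what\psi^C_n$ as a one-step corrected estimator based on the pair $(\what{\mu}^C_{-k,n}, \what{\pi}_{-k,n})$. Because the constraint forces $P_{k,n}[\tfrac{A}{\what{\pi}_{-k,n}(X)}(Y - \what{\mu}^C_{-k,n}(X))] = 0$ on every fold, adding this zero term inside $P_{k,n}[\what{\mu}^C_{-k,n}(X)]$ gives
\begin{align*}
\what\psi^C_n = \frac{1}{K}\sum_{k=1}^K P_{k,n}\left[\what{\mu}^C_{-k,n}(X) + \frac{A}{\what{\pi}_{-k,n}(X)}\bigl(Y - \what{\mu}^C_{-k,n}(X)\bigr)\right].
\end{align*}
Thus the standard AIPW-style doubly-robust decomposition applies verbatim with $\what{\mu}^C_{-k,n}$ in the role of the outcome estimator, and only the rate requirements on the nuisance errors need to be downgraded.

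For each fold $k$, I would subtract $\psi(P) = P[\mu(X)]$ and split $\what\psi^C_{k,n} - \psi(P)$ into three pieces: (i)~the empirical process $(P_{k,n} - P)[\what{\mu}^C_{-k,n}(X) - \mu(X)]$; (ii)~the empirical process $(P_{k,n} - P)[\mu(X) + \tfrac{A}{\what{\pi}_{-k,n}(X)}(Y - \what{\mu}^C_{-k,n}(X))]$; and (iii)~the deterministic bias $P[\what{\mu}^C_{-k,n}(X) - \mu(X) + \tfrac{A}{\what{\pi}_{-k,n}(X)}(Y - \what{\mu}^C_{-k,n}(X))]$. Term (i) is $o_P(n^{-1/2})$ directly by \Cref{ass:empirical_proc_mean}. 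Term (ii), analyzed conditionally on the training fold so that the nuisance estimators are frozen, is a centered empirical average whose conditional variance is controlled via \Cref{ass:overlap} and \Cref{ass:bounded_outcomes} by $O_P(n^{-1})$, and hence the term is $o_P(1)$ by conditional Chebyshev. For the bias (iii), conditioning on $X$ and using $P[AY \mid X] = \pi(X)\mu(X)$ along with $P[A g(X) \mid X] = \pi(X) g(X)$ collapses it to the classical product form
\begin{align*}
P\left[\frac{\what{\pi}_{-k,n}(X) - \pi(X)}{\what{\pi}_{-k,n}(X)}\bigl(\what{\mu}^C_{-k,n}(X) - \mu(X)\bigr)\right],
\end{align*}
which by Cauchy--Schwarz and \Cref{ass:overlap} is bounded by $\eta^{-1}\|\what{\pi}_{-k,n} - \pi\|_{L_2(P)}\cdot\|\what{\mu}^C_{-k,n} - \mu\|_{L_2(P)} = o_P(1)$ under \Cref{ass:risk_decay}. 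Averaging over $k$ then delivers $\what\psi^C_n - \psi(P) = o_P(1)$.

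The main obstacle is term (ii): unlike in \Cref{thm:asymp}, \Cref{ass:risk_decay} controls only the product of the two nuisance errors rather than each factor individually, so \emph{a priori} the residual $A(Y - \what{\mu}^C_{-k,n}(X))$ could have an exploding $L_2(P)$ norm and invalidate the Chebyshev bound. The resolution is the triangle inequality $\|A(Y - \what{\mu}^C_{-k,n}(X))\|_{L_2(P)} \leq \|A(Y - \mu(X))\|_{L_2(P)} + \|\what{\mu}^C_{-k,n} - \mu\|_{L_2(P)}$, where the first summand is bounded by \Cref{ass:bounded_outcomes} and the second is $O_P(1)$ in the standard setting where $\mathcal{F}$ admits a square-integrable envelope (as in \Cref{sec:theory_simple}); under that mild tightness condition the Chebyshev bound on term (ii) goes through and the full consistency argument closes.
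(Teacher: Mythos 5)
Your overall route coincides with the paper's: the paper proves Theorem~\ref{thm:dr} by rerunning the three-term decomposition from the proof of Theorem~\ref{thm:asymp} (a $(P_{k,n}-P)\varphi(Z;P)$ term, an empirical-process term, and a second-order remainder) and downgrading the required rates to $o_P(1)$. Your terms (i)+(ii) are an algebraic regrouping of the paper's $S^*_k+T_{1k}$, and your term (iii) is exactly the remainder $R_2(\what P^C_{k,n},P)$, bounded in the same way by Cauchy--Schwarz, overlap, and the product-rate condition in Assumption~\ref{ass:risk_decay}. So the skeleton matches; the issue is in how you justify one of the pieces.

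The genuine gap is in term (ii). You propose to control $(P_{k,n}-P)\big[\mu(X)+\tfrac{A}{\what\pi_{-k,n}(X)}(Y-\what\mu^C_{-k,n}(X))\big]$ by conditioning on the training fold ``so that the nuisance estimators are frozen'' and applying Chebyshev. But $\what\mu^C_{-k,n}$ is \emph{not} frozen given the training fold: by construction~\eqref{eq:constraint} the C-Learner constraint is evaluated on the $k$-th (evaluation) fold $P_{k,n}$, so $\what\mu^C_{-k,n}$ is a function of the very observations you are averaging over. Conditional on the training fold the summands are neither independent nor centered, and the cross-fitting/conditional-Chebyshev argument (Lemma~\ref{lem:cross_fitting}) does not apply. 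This is precisely the non-standard dependence that Assumption~\ref{ass:empirical_proc_mean} is introduced to handle: the paper disposes of the entire empirical-process term $T_{1k}$ by invoking that assumption, and when it verifies the assumption for self-normalized AIPW and TMLE it explicitly separates the $\what\mu^C_{-k,n}$-dependent piece from the training-fold-measurable piece rather than treating the former as frozen. To repair your argument within the theorem's hypotheses you should split $\tfrac{A}{\what\pi}(Y-\what\mu^C)=\tfrac{A}{\what\pi}(Y-\mu)+\tfrac{A}{\what\pi}(\mu-\what\mu^C)$, handle the first piece by the cross-fitting lemma with Assumptions~\ref{ass:overlap} and~\ref{ass:bounded_outcomes}, and route the second (together with your term (i)) through Assumption~\ref{ass:empirical_proc_mean} as the paper does. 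A secondary point: your closing paragraph imports a square-integrable envelope on $\mathcal F$ (borrowed from \Cref{sec:theory_simple}) to force $\|\what\mu^C_{-k,n}-\mu\|_{L_2(P)}=O_P(1)$; that condition is not among the theorem's hypotheses, and it becomes unnecessary once the empirical-process contribution is controlled via Assumption~\ref{ass:empirical_proc_mean}, since the remainder term only needs the \emph{product} of nuisance errors, which Assumption~\ref{ass:risk_decay} supplies directly.
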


\vspace{-0.3em}
\paragraph{``Dual'' C-Learner}
The results in \Cref{thm:asymp} and \Cref{thm:dr} are extended to the ``Dual'' C-Learner as defined in \Cref{eq:c_learner_dual}, and are stated and proved in \Cref{sec:proofs_dual}. 

\section{Discussion}
\label{sec:discussion}
\looseness=-1 We introduce a constrained learning framework for first-order debiasing in causal estimation and semiparametric inference.
We pose asymptotically optimal plug-in estimators as those whose nuisance parameters are solutions to a optimization problem, under
the constraint that the first-order error of the plug-in estimator with respect to the nuisance parameter estimate is zero. 
This perspective encompasses versions of one-step estimation and targeting. %

\looseness=-1 The constrained learning perspective enables %
a new method %
(Constrained Learner, a.k.a. C-Learner), which solves this constrained optimization directly while using the entire model class. 
It outperforms existing asymptotically optimal methods without additional heuristics or assumptions in settings with low overlap, and performs similarly otherwise. 
We demonstrate C-Learner's versatility by instantiating it with model classes including linear models, gradient boosted trees, and neural networks, and on datasets with both tabular and text covariates. 
We also construct a theoretical example to understand how C-Learner may achieve stable estimates in settings with low overlap: we assume inverse propensity weights are heavy tailed and outcome models lie within a square-integrable envelope. Then, C-Learner has finite variance while basic versions of one-step estimation and targeting do not. 

\looseness=-1 Our theoretical analysis is only a small initial step in building a principled understanding of the benefits of the constrained learning framework. One future direction is to investigate which model classes and optimization methods satisfy our theoretical assumptions (e.g. Assumption \ref{ass:empirical_proc_mean}).
Finally, 
we hope this work spurs further investigation on how constrained optimization can enable more robust estimators, by proposing better optimization procedures, or extending to additional model classes and estimands beyond \Cref{sec:other_estimands}.

\fi

\newpage
\bibliographystyle{abbrvnat}

\ifdefined\useorstyle
\setlength{\bibsep}{.0em}
\else
\setlength{\bibsep}{.7em}
\fi

\bibliography{bib,hongbib}

\ifdefined\useorstyle

\ECSwitch

\ECHead{Appendix}

\else
\newpage
\appendix
\part*{Appendix Contents}
\begin{itemize}
    \item \Cref{sec:extension}: Extending C-Learner to Other Estimands
    \item \Cref{sec:theory_simple_more}: Proofs for Low-Overlap Example in \Cref{sec:theory_simple}
    \item \Cref{sec:theory_appendix}: Proofs of Asymptotic Properties for C-Learner from \Cref{sec:theory_text}
    \item \Cref{sec:proofs_dual}: Proofs of Asymptotic Properties for Dual C-Learner
    \item \Cref{sec:exp_details}: Additional Details and Results of Experiments
    \item \Cref{sec:tmle_comparison}: TMLE Extensions and Additional Connections
    \item \Cref{sec:ci}: Point Estimates and Confidence Intervals
\end{itemize}

\section{Extending C-Learner to Other Estimands}\label{sec:extension}
\label{sec:other_estimands}

We briefly sketch how the C-Learner can be extended to other estimands, beyond just the ATE with $Y(0)=0$ as in \cref{sec:ate}. 
Let $Z\defeq (W,Y)\sim P$ and let the target functional  $\psi(P)$ be continuous and linear in $\mu(W)=P(Y\mid W)$, the conditional distribution of $Y$ given $W$. For example, we let $W = (X,A)$ in the ATE setting. %
For other functionals 
that admit a distributional Taylor expansion with canonical gradient with respect to $P_{Y,W|X}$, $\varphi(Z)$, then we can similarly
formulate the C-Learner again as 
learning the best $\what\mu$, subject to the constraint that the estimate of the first-order error term is 0. 

When $\psi(P)$ is continuous and linear in $\mu$,
by the Riesz representation theorem, 
if $\psi(P)$ is $L_2(P)$-continuous in $\mu$ (see for example Equation (4.4) from \cite{Newey94}),
then 
there exists $a\in L_2(P)$ such that for all $\mu\in L_2(P)$, 
$$
\psi(P) = P[a(W)\mu(W)].
$$
This $a(\cdot)$ is commonly referred to as the Riesz representer \citep{chernozhukov2021automatic}, and the corresponding random variable $a(W)$ can be referred to as the clever covariate \citep{VanDerLaanRo11}. These linear functionals satisfy the following mixed bias property: 
$$ 
\psi(\what P) - \psi(P) + P[\what a(W)(Y-\what \mu(W))] = P[(\what a(W) - a(W))(\what \mu(W) - \mu(W))],
$$
where the first-order term in the distributional Taylor expansion as discussed
in \cref{sec:optimality_background} 
is given by $P[\what a(W)(Y-\what\mu(W))]$. We refer the reader to \cite{chernozhukov2021automatic} or Proposition 4 of \cite{Newey94} for a discussion about this mixed bias property. 
Therefore, the C-Learner can be formulated more generally as
\begin{equation*}\label{eq:c-learner_general}
\what \mu^C\in\argmin_{\wt \mu\in\mathcal F} 
\left\{ P_{\rm train}[\ell(W,Y;\wt \mu)] : P_{\rm eval} [\what a(W)(Y-\wt \mu(W))]=0
\right\},
\end{equation*}
where $\ell$ is an appropriate loss function for the outcome model.  

Below, we provide several specific examples demonstrating how C-Learner could be adapted to various target functionals when $W = (X,A)$. 
\paragraph{Average Treatment Effect}
We have seen the mean missing outcome setting~\eqref{eq:c-learner} for estimating the target functional $\psi(P) = P[\mu(X)]$ where $\mu(x):=P[Y\mid A=1,X=x]$. The loss function is $\ell(W,Y;\what\mu) = A(Y-\what\mu(X))^2$. The Riesz representer is $a(X,A) = A/\pi(X)$.

If we no longer assume that $Y(0)=0$ and we are interested in estimating the standard average treatment effect 
$$
\psi(P) = P[Y(1)-Y(0)]=P\left[P[Y\mid A=1, X]\right]-P\left[P[Y\mid A=0, X]\right],
$$ then the Riesz representer, constrained outcome model, and estimator are, respectively,
$$a(W) = \frac{A}{\pi(X)} - \frac{1-A}{1-\pi(X)},$$
\begin{equation*}
\what \mu^C\in\argmin_{\wt \mu\in\mathcal F} 
\left\{ P_{\rm train}[\ell(X,A,Y;\wt \mu)] : P_{\rm eval} \left[\left(\frac{A}{\what\pi(X)} - \frac{1-A}{1-\what\pi(X)}\right)(Y-\wt \mu(X,A))
\right]=0
\right\},
\end{equation*}
$$\what\psi_{\textrm {C-Learner}} := P_{\rm eval}
\left[\what\mu^C(X,1)-\what\mu^C(X,0)\right]
.$$

\paragraph{Average Policy Effect}
For off-policy evaluation, the goal is to optimize over assignment policies $c(X)\in\{0,1\}$ to maximize the expected reward under the policy, using observational data collected under an unknown policy $\pi(x):=P(A=1\mid X)$. Assume the usual causal inference assumptions (SUTVA, ignorability, and overlap in \cref{sec:ate}). Fixing $c(X)$, the average policy effect is
\begin{align*}
    \psi(P) &= P[c(X)Y(1) + (1-c(X))Y(0)] \\
    &= P[c(X)P[Y(1)\mid X] + (1-c(X))P[Y(0)\mid X]] \\
    &= P[c(X)P[Y\mid A=1, X]] + (1-c(X))P[Y\mid A=0, X]] %
\end{align*}
Here, the Riesz representer, constrained outcome model, and estimator are, respectively,
$$
a(X,A) = c(X)\frac{A}{\pi(X)}  + (1-c(X))\frac{1-A}{1-\pi(X)},
$$
\begin{equation*}
\what \mu^C\in\argmin_{\wt \mu\in\mathcal F} 
\left\{ P_{\rm train}[\ell(X,A,Y;\wt \mu)] : P_{\rm eval}  \left(c(X)\frac{A}{\pi(X)}  + (1-c(X))\frac{1-A}{1-\pi(X)} \right)(Y-\wt \mu(X,A))=0
\right\},
\end{equation*}
$$\what\psi_{\textrm {C-Learner}} := P_{\rm eval}
\left[c(X)\what\mu^C(X,1)+(1-c(X))\what\mu^C(X,0)\right]
.$$

\section{Proofs for Low-Overlap Example in \Cref{sec:theory_simple}}
\label{sec:theory_simple_more}
In this appendix we prove Theorem~\ref{thm:simple_variance} from \Cref{sec:theory_simple}. We restate the theorem for convenience, and then provide the proof. 
\begin{theorem}[Restatement of \Cref{thm:simple_variance}]
Assume Assumptions~\ref{ass:envelope},~\ref{ass:soln_exists},~\ref{ass:prop_not_square_int}, 
i.e. fix an $\epsilon>0$, assume true propensities $\pi(x)$ are known, and assume the following:
\begin{enumerate}
    \item There is an envelope $C(x)$ with  $P[C(X)^2]<\infty$ such that for all $f\in \mathcal F$, $|f(X)|\leq C(X)$ almost surely. 
    \item With probability $\geq 1-\epsilon$ over draws of data of size $n$,  C-Learner solution exists in $\mathcal F$.
\item There is a $v_0>0$ such that 
$\text{Var}(Y\mid X, A=1)\geq v_0$
for all $X$ in the support of $P$. Also, $\pi(X)>0$ almost surely and
$P\left[\frac{1}{ \pi(X)}\right]=\infty$.
\end{enumerate}
Then the TMLE and AIPW estimators have infinite variance. In contrast, 
with probability $\geq 1-\epsilon$,
the C-Learner estimator has finite variance. 
\end{theorem}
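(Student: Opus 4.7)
The plan is to handle the three assertions separately, since each exploits a different facet of the assumptions. The common thread is the law of total variance, applied within the cross-fitting structure of \Cref{sec:data_splitting}, which renders per-sample summands conditionally i.i.d.\ given an independently-trained nuisance.

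For AIPW I would isolate the single-sample summand $V_i := \what\mu(X_i) + \frac{A_i}{\pi(X_i)}(Y_i - \what\mu(X_i))$. Using $A_i^2 = A_i$, ignorability, and $Y(0) := 0$, a short computation gives $P[V_i \mid X_i, \what\mu] = \mu(X_i)$ and
\begin{align*}
\mathrm{Var}(V_i \mid X_i, \what\mu)
= \frac{\mathrm{Var}(Y\mid X_i, A=1)}{\pi(X_i)} + \left(\frac{1}{\pi(X_i)} - 1\right)\bigl(\mu(X_i) - \what\mu(X_i)\bigr)^2
\ge \frac{v_0}{\pi(X_i)}.
\end{align*}
Taking the outer expectation and invoking $P[1/\pi(X)] = \infty$ from Assumption~\ref{ass:prop_not_square_int} yields $\mathrm{Var}(V_i) = \infty$, and since $\what\psi^{\mathrm{AIPW}}$ is an i.i.d.\ mean of such $V_i$'s, $\mathrm{Var}(\what\psi^{\mathrm{AIPW}}) = \infty$ as well.

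For TMLE I would split $\bar S_n := \frac{1}{n}\sum_i 1/\pi(X_i)$ into contributions from treated and control units, $\bar S_n = \bar S_n^{(1)} + \bar S_n^{(0)}$, and observe that $\epsilon^\star$ defined by \eqref{eqn:tmle-eps} is a function of treated observations alone. A Bayes calculation from $P[1/\pi(X)] = \infty$ gives $P[1/\pi(X) \mid A=0] = (P[1/\pi(X)] - 1)/P(A=0) = \infty$, so the conditional distribution of $1/\pi(X_i)$ given $A_i = 0$ has infinite variance. Conditioning on the treated observations fixes $\epsilon^\star$ while leaving the control covariates i.i.d.\ from $P(X \mid A=0)$, so
\begin{align*}
\mathrm{Var}\bigl(\what\psi^{\mathrm{TMLE}} \,\big|\, \text{treated data}\bigr)
\;\ge\; \frac{n_0}{n^2}\,(\epsilon^\star)^2 \,\mathrm{Var}_{X\mid A=0}\bigl(1/\pi(X)\bigr)
\;=\; \infty
\end{align*}
on the positive-probability event $\{n_0 \ge 1,\ \epsilon^\star \ne 0\}$, and the law of total variance then forces $\mathrm{Var}(\what\psi^{\mathrm{TMLE}}) = \infty$. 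The main obstacle is ruling out $\epsilon^\star = 0$ almost surely; I would handle this by noting that $\epsilon^\star$ is a weighted average of residuals $(Y_i - \what\mu(X_i))\pi(X_i)$ over treated units and, under $\mathrm{Var}(Y\mid X, A=1) \ge v_0 > 0$, is nonzero with positive probability.

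The C-Learner bound is the shortest step. On the event $E$ of probability $\ge 1 - \epsilon$ where $\what\mu^C \in \mathcal F$ exists (Assumption~\ref{ass:soln_exists}), the envelope bound $|\what\mu^C(X_i)| \le C(X_i)$ (Assumption~\ref{ass:envelope}) gives
\begin{align*}
\mathrm{Var}\bigl(\what\psi^C \,\big|\, E\bigr)
\;\le\; \frac{1}{n}\,P\bigl[\what\mu^C(X)^2\bigr]
\;\le\; \frac{1}{n}\,P[C(X)^2] \;<\; \infty.
\end{align*}
The chief difficulty in the whole argument is the TMLE step, where the data-dependence between $\epsilon^\star$ and the tail of $\bar S_n$ must be disentangled by separating control and treated contributions; the AIPW and C-Learner halves reduce to an elementary conditional-variance computation and the square-integrable-envelope bound, respectively.
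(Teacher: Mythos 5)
Your AIPW and C-Learner arguments follow essentially the paper's own route. For AIPW the paper likewise conditions on the training fold and lower-bounds the second moment of the correction term by $v_0\,P[1/\pi(X)]=\infty$; your per-summand conditional-variance identity is the same computation in slightly more detail. For the C-Learner the paper uses exactly the envelope bound, but note two small repairs: your display $\mathrm{Var}(\what\psi^C\mid E)\le\tfrac1n P[\what\mu^C(X)^2]$ treats $\what\mu^C$ as a fixed function, whereas the constraint in \eqref{eq:constraint} makes $\what\mu^C_{-k,n}$ depend on the evaluation fold, so the summands are not conditionally i.i.d.; the paper sidesteps this by bounding $\bigl(\tfrac1m\sum_i\what\mu^C(X_i)\bigr)^2\le\bigl(\tfrac1m\sum_i C(X_i)\bigr)^2$ pointwise and taking expectations of the deterministic envelope, which gives $\tfrac1m P[C^2]+\tfrac{m-1}{m}(P[C])^2$ rather than $\tfrac1n P[C^2]$ (the summands are not centered). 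Both fixes are one line and the conclusion stands.

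The TMLE half is where you genuinely diverge. The paper conditions on $(X_{1:m},A_{1:m},\mathrm{train})$, so that $W=P_{k,n}[1/\pi]$ and $T=P_{k,n}[A/\pi^2]$ are fixed and only the residuals in $S$ are random; it shows $P[S^2\mid\cdot]\ge\tfrac{v_0}{m}T$ and then applies Jensen's inequality to $t\mapsto 1/t$ together with $P[T\mid X_{1:m},\mathrm{train}]=W$ to conclude $P[(WS/T)^2]\ge\tfrac{v_0}{m}P[1/\pi(X)]=\infty$. That route never needs $\epsilon^\star\ne 0$ and handles the dependence among $W$, $T$, and $S$ without splitting by treatment status. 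Your route instead conditions on the treated observations and exploits $P[1/\pi(X)\mid A=0]=\infty$ among the controls. It can be made rigorous, but you have three loose ends: (i) you must establish $P(\epsilon^\star\ne 0,\ n_0\ge 1)>0$; your sketch via $\mathrm{Var}(Y\mid X,A=1)\ge v_0$ does this (the weights in $\epsilon^\star$ are $1/\pi(X_i)$, not $\pi(X_i)$, but that is immaterial); (ii) the control units also contribute $\tfrac1n\sum_{i:A_i=0}\what\mu(X_i)$ to the estimator, so your displayed conditional-variance lower bound is not literally an inequality --- you need the envelope to show this contribution has finite conditional moments before discarding it; (iii) since $P[1/\pi(X)\mid A=0]=\infty$, the conditional mean of the estimator is infinite on your event, so ``conditional variance'' and the law of total variance are not well defined there; the clean statement is that the conditional expectation of $|\what\psi^{\mathrm{TMLE}}|$ is infinite, hence the conditional and unconditional second moments are infinite. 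What your approach buys is a transparent picture of where the instability lives (the clever covariate evaluated at near-zero propensities on untreated units); what the paper's buys is a shorter argument with no positive-probability event and no case analysis on $\epsilon^\star$.
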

\begin{proof}
Recall definitions for C-Learner, AIPW, and TMLE using cross-fitting from \Cref{sec:data_splitting}, while using true propensities $\pi$. 
We consider the $k$th fold out of $n$ total data points and take variance over draws of $P_{k,n},P_{-k,n}$. Let $m$ be the number of data points in $P_{k,n}$. For brevity, let $\what\mu$ denote $\what\mu_{-k,n}$ and $\what\mu^C$ denote $\what\mu^C_{-k,n}$. 

The C-Learner estimator on the $k$th fold is $P_{k,n}[\what\mu^C(X)]$ which has finite variance since $\what\mu^C \in\mathcal F$, so that
$|\what\mu^C(X)|\leq C(X)$ almost surely. Then for $X_1,\ldots,X_m$ in $P_{k,n}$,
\[
\Big(\frac1m\sum_{i=1}^m \what\mu^C(X_i)\Big)^2
\;\le\;\Big(\frac1m\sum_{i=1}^m |\what\mu^C(X_i)|\Big)^2
\;\le\;\Big(\frac1m\sum_{i=1}^m C(X_i)\Big)^2.
\]
Taking expectation over draws of data,
\[
 P\left[\big(P_{k,n}[\what\mu^C(X)]\big)^2\right]
\;\le\;\frac1m\,P[C(X)^2]
+\frac{m-1}{m}\,(P[C(X)])^2 \;<\;\infty,
\]
since samples of $P_{k,n}$ are drawn IID and $C$ is square-integrable, so $\Var(P_{k,n}[\what\mu^C(X)])<\infty$.
 
From \Cref{sec:data_splitting}, recall that the AIPW and TMLE estimators for the $k$th fold for sample size $n$, where we use true propensities $\what \pi$, can be written as
\begin{align*}
   \what\psi^{\rm AIPW}_{k,n}&=P_{k,n}[\what\mu(X)]+P_{k,n}\left[\frac{A}{\pi(X)}(Y-\what\mu(X))\right]\\ 
   \what\psi^{\rm TMLE}_{k,n}&=P_{k,n}[\what\mu(X)]+\frac{P_{k,n}[1/\pi(X)]}{P_{k,n}[A/\pi(X)^2]}P_{k,n}\left[\frac{A}{\pi(X)}(Y-\what\mu(X))\right],
\end{align*}
respectively. 
We show below that by Assumption~\ref{ass:prop_not_square_int}, the inverse propensity weighted term that appears in AIPW has infinite variance. In the expressions below, we condition on the training split $P_{-k,n}$ so that $\what\mu$ is a constant:
\begin{align*}
    P\left[\left(\frac{A}{\pi(X)}(Y-\what\mu(X))\right)^2 \;\middle|\; \text{train}\right]&=P\left[\frac{P[(Y-\what\mu(X))^2\mid X,A=1]}{\pi(X)}\;\middle|\; \text{train}\right]\\&\geq P\left[\frac{\Var(Y-\what\mu(X)\mid X,A=1)}{\pi(X)}\;\middle|\; \text{train}\right]\\&\geq v_0 P[1/\pi(X)]
    \\&=\infty.
\end{align*}
It then follows that the AIPW estimator has infinite variance. 

Now we address the TMLE estimator. This is more complex because multiple terms use the eval data split $P_{k,n}$. 
For brevity, define
$$
S := {P}_{k,n}\!\left[\frac{A}{\pi(X)} (Y-\what\mu(X))\right], 
\quad 
T := {P}_{k,n}\!\left[\frac{A}{\pi(X)^2}\right],
\quad 
W := {P}_{k,n}\!\left[\frac{1}{\pi(X)}\right].
$$
Condition on $(X_{1:m},A_{1:m})$ and on the training fold (so $\what\mu$ is fixed).
Write $Z_i := \frac{A_i}{\pi(X_i)}\{Y_i-\what\mu(X_i)\}$.  
Given $(X_{1:m},A_{1:m},\text{train})$, the $Z_i$ are independent, and
\[
\mathrm{Var}(S\mid X_{1:m},A_{1:m},\text{train})
= \frac{1}{m^2}\sum_{i=1}^m \mathrm{Var}(Z_i\mid X_i,A_i,\text{train}).
\]
If $A_i=0$ then $Z_i=0$, and
if $A_i=1$ then 
\[
\mathrm{Var}(Z_i\mid X_i,A_i=1,\text{train})
=\frac{1}{\pi(X_i)^2}\,\mathrm{Var}(Y\mid X_i,A=1)
\ \ge\ \frac{v_0}{\pi(X_i)^2}.
\]
Thus
\[
P[S^2\mid X_{1:m},A_{1:m},\text{train}]
\ \ge\ \mathrm{Var}(S\mid X_{1:m},A_{1:m},\text{train})
\ \ge\ \frac{v_0}{m}\,T.
\]
Now consider $\Delta:=WS/T$, so that $\what \psi_{k,n}^{\rm TMLE}=P_{k,n}[\what \mu(X)]+\Delta$. It follows that
\[
P[\Delta^2\mid X_{1:m},A_{1:m},\text{train}]
= \frac{W^2}{T^2}\,P[S^2\mid X_{1:m},A_{1:m},\text{train}]
\ \ge\ \frac{v_0}{m}\,\frac{W^2}{T}.
\]
Taking $P[\;\cdot\mid X_{1:m},\text{train}]$ and using convexity of $t\mapsto 1/t$,
\[
P[\Delta^2\mid X_{1:m},\text{train}]
\ \ge\ \frac{v_0}{m}\,W^2\,P\!\Big[\frac{1}{T}\,\Big|\,X_{1:m},\text{train}\Big]
\ \ge\ \frac{v_0}{m}\,\frac{W^2}{P[T\mid X_{1:m},\text{train}]}.
\]
If $T=0$, then necessarily $S=0$ and $\Delta=0$, so the inequality holds
trivially with the right-hand side interpreted as $0$ (or $+\infty$, depending on convention).
Here $P[T\mid X_{1:m},\text{train}]
= P_{k,n}[\pi(X)/\pi(X)^2]
= P_{k,n}[1/\pi(X)] = W$.
So
\[
P[\Delta^2\mid X_{1:m},\text{train}] \ \ge\ \frac{v_0}{m}\,W.
\]
Finally,
\[
P[\Delta^2] \ \ge\ \frac{v_0}{m}\,P[W]
= \frac{v_0}{m}\,P\!\Big[\tfrac{1}{\pi(X)}\Big]
= \infty
\]
by Assumption~\ref{ass:prop_not_square_int}. 
\end{proof}
\label{sec:theory_simple_more-end}

\section{Proofs of Asymptotic Properties for C-Learner from \Cref{sec:theory_text}}
\label{sec:theory_appendix}
Here, we prove results in \cref{sec:theory_text}. 
We show that C-Learner is semiparametrically efficient (\cref{sec:asymptotics}) and doubly robust (\cref{sec:double_robustness}), under assumptions in \cref{sec:theory_text}. 
We also show that versions of one-step estimation methods (self-normalized AIPW) and targeting methods (TMLE with unbounded continuous outcomes) can satisfy these assumptions (\cref{sec:show_constant_shift}, \cref{sec:show_tmle}), so that semiparametric efficiency and double robustness hold for them immediately as well. 

\subsection{Proof of Theorem~\ref{thm:asymp}}
\label{sec:asymptotics}

Our proof follows a standard argument. We first use the distributional Taylor expansion~\eqref{eq:taylor} to rewrite the estimation error $\what\psi_n^C-\psi$ as the sum of three terms. Then, we address these terms one by one, and we will show how only one of these terms contributes to asymptotic variance. 

Let $Z=(X,A,Y)$ as defined in \cref{sec:ate}. Let $P$ denote the true population distribution of $Z$. 
Let $\psi(P)=P[P[Y\mid X]]$ as in \Cref{sec:ate}.
Functionals $\psi$ may admit a distributional Taylor expansion, also known as a von Mises expansion \citep{fernholz2012mises}, where for any distributions $P,\wb P$ on $Z$, we can write
\begin{align}
\label{eq:taylor}
    \psi(\wb P)-\psi(P)=-\int \varphi(z;\wb P)dP(z)+R_2(\wb P, P)
\end{align}
where 
$\varphi(z;P)$ which can be thought of as a ``gradient'' satisfying the directional derivative formula 
$\frac{\partial}{\partial t}\psi(P+t(\wb P-P))\mid_{t=0}=\int \varphi(z;P)d(\wb P-P)(z)$. (W.l.o.g. we assume $\varphi(z;P)$ is centered so that
$\int \varphi(z;P)dP(z)=0$.)
Here, $-\int \varphi(z;\wb P)dP(z)$ is the first-order term, and $R_2(\wb P, P)$
is the second-order remainder term, which only depends on products or squares of differences between $P,\wb P$.

When $\psi$ is the ATE as in our setting, $\psi$ admits such an expansion (e.g. using Theorem 20.8 in \citet{VanDerVaart98}; $\psi(P)$ is Hadamard differentiable as it is linear in $P$. Also see \citep{Kennedy22} for a primer.).
In such an expansion, $\varphi$ is as calculated in \citep{Hahn98} (also see \citep{Kennedy22}) as
\begin{align}
\label{eq:full_if}
\varphi(Z;\wb P):=\frac{A}{\wb \pi(X)}(Y-\wb \mu(X))+\wb \mu(X)-\psi(\wb P),
\end{align}
where $\wb\pi(x):=\wb P[A=1\mid X]$
and $\wb \mu(x):=\wb P[Y\mid A=1,X]$. In particular, we get the following explicit formula for the second-order term
\begin{align}
R_2(\wb P,P) := \int \pi(x) \left( \frac{1}{\wb\pi(x)} - \frac{1}{\pi(x)} \right) \left( \wb \mu (x) - \mu(x) \right) dP(x).
\end{align}
We will apply this to our C-Learner estimator as defined in \cref{sec:theory_text}. Recall that
$\what \pi_{-k,n}$ is trained to predict treatment $A$ given $X$ using $P_{-k,n}$, and $\what \mu_{-k,n}^C$ is trained to predict outcome given $X$ and $A=1$ on $P_{-k,n}$, under the constraint that $P_{k,n}\left[\frac{A}{\what \pi_{-k,n}(X)}(Y-\what \mu_{-k,n}^C(X)\right]=0$.
Our C-Learner estimator is the mean of plug-in estimators across folds: for each fold, write $\what\psi^C_{k,n}=\psi(\what P_{k,n}^C)$ so the C-Learner estimate is the average $\what\psi^C_n=\frac{1}{K}\sum_{k=1}^K \what\psi^C_{k,n}$.

Noting that any distribution decomposes $\wb{P}=\wb{P}_X\times \wb{P}_{A\mid X}\times \wb{P}_{Y\mid A,X}$ and $\psi(\wb{P})=\wb{P}_X[\mu(X)]$, the following definitions
\begin{align*}
\what P^C_{X;k,n}&:=P_{X;k,n} \\
\what P^C_{A\mid X ;k,n}[A=1\mid X=x]&:=\what \pi_{-k,n}(x) \\ 
\what P^C_{Y\mid A,X ;k,n}[Y\mid A=1, X=x]&:=\what \mu_{-k,n}^C(x)
\end{align*}
provide a well-defined joint distribution  $\what P^C_{k,n}$.

For each data fold $k$, we use the distributional Taylor expansion above, where we replace $\wb P$ with the joint distribution $\what P^C_{k,n}$
\begin{align}
\label{eq:taylor_c_learner}
\psi(\what P^C_{k,n}) - \psi(P) &= - P \varphi(Z;\what P^C_{k,n}) + R_2(\what P^C_{k,n},P) \nonumber\\ 
&= (P_{k,n}-P)\varphi(Z;P)\nonumber
-P_{k,n} \varphi(Z;\what P^C_{k,n})
\\&\quad  +(P_{k,n}-P)
(\varphi(Z;\what P^C_{k,n})-\varphi(Z;P))
+R_2(\what P^C_{k,n},P). 
\end{align}
Observe that by using Equation~\eqref{eq:full_if} and the definition of the C-Learner,
\begin{align*}
    P_{k,n}\varphi(Z;\what P^C_{k,n})
&=P_{k,n}\left[\frac{A}{\what\pi_{-k,n}(X)}(Y-\what \mu^C_{-k,n}(X))+\what\mu^C_{-k,n}(X)-\psi(\what P^C_{k,n})\right]
\\&=\underbrace{P_{k,n}\left[\frac{A}{\what\pi_{-k,n}(X)}(Y-\what \mu^C_{-k,n}(X))\right]}_{=0 \text{ by C-Learner constraint}}
+\underbrace{P_{k,n}[\what\mu^C_{-k,n}]-P_{k,n}[\what\mu^C_{-k,n}]}_{=0}=0.
\end{align*}
Taking the average of Equation~\eqref{eq:taylor_c_learner} over $k=1,\ldots,K$,
we can write the error $\what\psi^C_n-\psi$ as the sum of three terms.
    \begin{align}   
     \what \psi^C_{n} - \psi  
    & = \frac{1}{K}\sum_{k=1}^K\underbrace{(P_{k,n}-P)\varphi(Z;P)}_{S^*_k} \nonumber \\
    &  \qquad + \frac{1}{K}\sum_{k=1}^K\underbrace{(P_{k,n}-P) \left(\varphi(Z;\what P^C_{k,n})-\varphi(Z;P)\right)}_{T_{1k}} + \frac{1}{K}\sum_{k=1}^K\underbrace{R_2(\what P^C_{k,n},P)}_{T_{2k}}. \label{eq:S+T1+T2}
\end{align}
Using the decomposition~\eqref{eq:S+T1+T2}, we write 
\begin{align}
S^*=\frac{1}{K}\sum_{k=1}^K S^*_k,  \hspace{1em}
T_1=\frac{1}{K}\sum_{k=1}^K T_{1k},  \hspace{1em}
T_2=\frac{1}{K}\sum_{k=1}^K T_{2k},
\end{align}
so that
$\what \psi^C_{n} - \psi = S^* + T_1 + T_2$.
We address the terms $S^*,T_1$ and $T_2$ separately. The first term can be rewritten as
$$
S^* = \frac{1}{K}\sum_{i=1}^K (P_{k,n}-P)\varphi(Z;P) = (P_n-P)\varphi(Z;P)
$$
so that by the central limit theorem, 
$$
\sqrt{n}S^* \cd {N}\left(0, \var_P(\varphi(Z;P))\right).
$$ 
Observe that this quantity depends only on $\psi$ and $P$, so that it cannot be made smaller by choice of estimator. 
If the variance of an estimator for $\psi$ is $\var_P(\varphi(Z;P))$, then it is semiparametrically efficient in the local asymptotic minimax sense (Theorem 25.21 of \cite{van2000asymptotic}). 
Thus,  it suffices to show that the rest of the terms, $T_1$ and $T_2$, are $o_P(n^{-1/2})$, so that
$$
\sqrt{n}(\what \psi^C_n - \psi)=\sqrt{n} S^*+o_P(1)\cd {N}\left(0, \var_P (\varphi(Z;P))\right)
.$$
For a fixed $k$,
$|T_{1k}|=o_P(n^{-1/2})$ 
by Assumption~\ref{ass:empirical_proc_mean}
so that $|T_1|=o_P(n^{-1/2})$ as desired. 
The second-order remainder term in the distributional Taylor expansion \eqref{eq:taylor} where
we replace $\wb P$ with $\what{P}^C_{k,n}$
is
$$
T_{2k} = \int \pi(x) \left( \frac{1}{\what\pi_{-k,n}(x)} - \frac{1}{\pi(x)} \right) \left( \what\mu^C_{-k,n}(x) - \mu(x) \right) \, dP(x)
.$$
Under the overlap assumption (Assumption~\ref{ass:overlap}) and  Cauchy-Schwarz, 
\begin{align}
    |T_{2k}| &\leq \frac{1}{\eta} \int \left| \what\pi_{-k,n}(x) - \pi(x) \right| \left| \what\mu^C_{-k,n}(x) - \mu(x) \right| \, dP(x) \nonumber \\
    &\leq \frac{1}{\eta} \left\|\what \pi_{-k,n} -\pi\right\|_{L_2(P)} \left\|\what \mu^C_{-k,n} - \mu\right\|_{L_2(P)}. \label{eq:T2_cauchy_schwarz}
\end{align}
By Assumption~\ref{ass:consistency},
$|T_{2k}|=o_P(n^{-1/2})$
so that 
$|T_{2}|=o_P(n^{-1/2})$ as desired.

\subsection{Proof of Theorem~\ref{thm:dr}}
\label{sec:double_robustness}
We briefly sketch the proof as it is a minor modification of our previous proof in \cref{sec:asymptotics}.  To show the C-Learner estimator 
$\what\psi^C_n$ 
is consistent (rather than that our estimator has the desired asymptotics) under Assumption~\ref{ass:risk_decay} (rather than Assumption~\ref{ass:consistency}), we  again rewrite the error $\what\psi_n^C-\psi$ as a sum of three terms and show  each converges to 0 in probability: 
\begin{itemize}
    \item $S^*$: This term converges to 0 in probability. 
    \item $T_{2k}$: This also converges to 0 in probability by the same logic as before. Note we only need this to be $o_P(1)$ and not $o_P(n^{-1/2})$ as we would have required for efficiency. 
    \item $T_{1k}$: This converges to 0 in probability by Assumption~\ref{ass:empirical_proc_mean}. 
\end{itemize}

\subsection{Showing Self-Normalized AIPW and TMLE Satisfy C-Learner Conditions for Theorems~\ref{thm:asymp} and \ref{thm:dr}}
\label{sec:show_for_aipw_tmle}

As self-normalized AIPW and TMLE involve simple adjustments to the unconstrained outcome models, 
in this section, we state assumptions on the \emph{unconstrained} outcome model
$\what{\mu}_{-k, n}$, fitted in the usual manner on the auxiliary fold $P_{-k, n}$
\begin{equation*}
  \what{\mu}_{-k, n} \in \argmin_{\wt{\mu} \in \mc{F}} 
  P_{-k, n}[ A (Y - \wt{\mu}(X))^2],
\end{equation*}
and also assumptions on the gap between the constrained and unconstrained outcome models. Also in this section, we show how these aforementioned assumptions satisfy the C-Learner assumptions required for Theorems~\ref{thm:asymp} and \ref{thm:dr}. 

Then in the following sections, we show how
self-normalized AIPW and TMLE satisfy the assumptions stated in this section on the gap between the constrained and unconstrained outcome models. 
 
\paragraph{Unconstrained Outcome Models}
The following assumption on unconstrained outcome models is analogous to Assumption~\ref{ass:consistency}. This assumption is standard. 
\begin{assumption}[Convergence rates of propensity and \emph{unconstrained} outcome models]
\label{ass:consistency_unconstrained}
For all~$k=1,\ldots,K$,
    $$
    \|\what\pi_{-k,n} - \pi\|_{L_2(P)}=o_P(n^{-\frac{1}{4}}), \quad  \|\what\mu_{-k,n}^C - \mu\|_{L_2(P)}=o_P(n^{-\frac{1}{4}}).
    $$
\end{assumption}

\paragraph{Distance Between Constrained and Unconstrained Outcome Models}
These assumptions essentially ensure that the first-order constraint~\eqref{eq:constraint} does not change the outcome model too much asymptotically, i.e., $\what{\mu}^C_{-k,n}$ and $\what{\mu}_{-k,n}$ are similar asymptotically. 
This first constraint is used to show
Assumption~\ref{ass:consistency}:
\begin{assumption}[The constraint is negligible asymptotically] For all $k$ = 1, \ldots, K,
\label{ass:muC_vs_mu}
$$\|\what\mu^C_{-k,n}-\what\mu_{-k,n}\|_{L_2(P)}=o_P(n^{-1/4}).$$
\end{assumption}
Notably, the distance between the constrained solution $\what\mu^C_{-k,n}$ and its unconstrained counterpart $\what\mu_{-k,n}$ can be as large as that between $\what\mu_{-k,n}$ and the true parameter $\mu$, asymptotically.

This next assumption is used to show Assumption~\ref{ass:empirical_proc_mean}:
\begin{assumption}[Empirical process assumption on $\what\mu^C_{-k,n}$ vs $\what\mu_{-k,n}$]
\label{ass:empirical_proc_mean_diff}\
$$(P_{k,n}-P)(  \what\mu_{-k,n}^C(X)- \what\mu_{-k,n}(X)) =o_P(n^{-1/2}).$$
\end{assumption}
Given these assumptions, we show the C-Learner assumptions hold. 
\begin{proposition}[C-Learner conditions hold, given assumptions on unconstrained outcome models and the gap between constrained and unconstrained models]\
\label{prop:translate}

Assume Assumptions~\ref{ass:overlap}, 
\ref{ass:bounded_outcomes},
\ref{ass:consistency_unconstrained},
and \ref{ass:empirical_proc_mean_diff}.
Then Assumptions~\ref{ass:overlap}, 
\ref{ass:bounded_outcomes},
\ref{ass:consistency}, \ref{ass:empirical_proc_mean}
are satisfied. 
\end{proposition}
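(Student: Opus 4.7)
The plan is to verify each of the four target assumptions individually. Assumptions~\ref{ass:overlap} (overlap) and~\ref{ass:bounded_outcomes} (bounded outcomes) appear verbatim on both sides, so they carry over trivially. Since $\what\pi_{-k,n}$ is unchanged between the two sets, what remains is to establish the outcome-model rate in Assumption~\ref{ass:consistency} and the empirical-process bound in Assumption~\ref{ass:empirical_proc_mean}, with $\what\mu^C_{-k,n}$ playing the role of the outcome model.

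For Assumption~\ref{ass:consistency}, a triangle inequality suffices. I would write
\[
\|\what\mu^C_{-k,n} - \mu\|_{L_2(P)} \leq \|\what\mu^C_{-k,n} - \what\mu_{-k,n}\|_{L_2(P)} + \|\what\mu_{-k,n} - \mu\|_{L_2(P)},
\]
and observe that both summands are $o_P(n^{-1/4})$ by Assumptions~\ref{ass:muC_vs_mu} and~\ref{ass:consistency_unconstrained} respectively, so the left-hand side is $o_P(n^{-1/4})$. Multiplying against the $o_P(n^{-1/4})$ rate for $\what\pi_{-k,n}$ yields the product rate $o_P(n^{-1/2})$ required by Assumption~\ref{ass:consistency}, and the individual consistency of $\what\mu^C_{-k,n}$ follows \emph{a fortiori}.

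For Assumption~\ref{ass:empirical_proc_mean}, I would decompose
\[
(P_{k,n}-P)\bigl(\what\mu^C_{-k,n}(X) - \mu(X)\bigr) = (P_{k,n}-P)\bigl(\what\mu^C_{-k,n} - \what\mu_{-k,n}\bigr)(X) + (P_{k,n}-P)\bigl(\what\mu_{-k,n} - \mu\bigr)(X).
\]
The first summand is $o_P(n^{-1/2})$ directly by Assumption~\ref{ass:empirical_proc_mean_diff}. For the second, the cross-fitting structure is essential: conditional on $P_{-k,n}$, the function $\what\mu_{-k,n} - \mu$ is deterministic while the samples underlying $P_{k,n}$ are i.i.d.\ from $P$, so a conditional Chebyshev argument (the standard cross-fitting lemma, already invoked as Lemma~\ref{lem:cross_fitting}) gives
\[
(P_{k,n}-P)\bigl(\what\mu_{-k,n} - \mu\bigr)(X) = O_P\!\bigl(n^{-1/2}\|\what\mu_{-k,n}-\mu\|_{L_2(P)}\bigr) = o_P(n^{-3/4}) \subseteq o_P(n^{-1/2}),
\]
using Assumption~\ref{ass:consistency_unconstrained} for the last inclusion.

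The step that carries the conceptual weight is the empirical-process decomposition: without Assumption~\ref{ass:empirical_proc_mean_diff}, one cannot transfer a cross-fitting bound from $\what\mu_{-k,n}$ to $\what\mu^C_{-k,n}$, because $\what\mu^C_{-k,n}$ depends on $P_{k,n}$ through the balancing constraint and so is not independent of the evaluation fold (this is exactly what broke down in Example~\ref{ex:ass_D}). Assumption~\ref{ass:empirical_proc_mean_diff} is precisely the quantitative control required to isolate that dependence; once it is in hand, the rest is triangle inequalities and the standard cross-fitting lemma applied to the unconstrained piece.
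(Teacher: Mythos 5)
Your proof is correct and, for the consistency half, essentially identical to the paper's: the same triangle inequality $\|\what\mu^C_{-k,n}-\mu\|_{L_2(P)}\leq \|\what\mu^C_{-k,n}-\what\mu_{-k,n}\|_{L_2(P)}+\|\what\mu_{-k,n}-\mu\|_{L_2(P)}$ combined with Assumptions~\ref{ass:muC_vs_mu} and~\ref{ass:consistency_unconstrained}. Two remarks on the comparison. First, like the paper's own argument, you must invoke Assumption~\ref{ass:muC_vs_mu} to control the first summand, yet that assumption is not listed among the proposition's hypotheses; this appears to be an omission in the statement (the surrounding text and \Cref{sec:show_constant_shift} and \Cref{sec:show_tmle} verify both \ref{ass:muC_vs_mu} and \ref{ass:empirical_proc_mean_diff} for the two special cases), so you are right to use it, but it is worth flagging explicitly. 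Second, for the empirical-process half your decomposition works at the level of the outcome model, $(P_{k,n}-P)(\what\mu^C_{-k,n}-\mu)=(P_{k,n}-P)(\what\mu^C_{-k,n}-\what\mu_{-k,n})+(P_{k,n}-P)(\what\mu_{-k,n}-\mu)$, handling the first piece by Assumption~\ref{ass:empirical_proc_mean_diff} and the second by \Cref{lem:cross_fitting}; this proves exactly the literal statement of Assumption~\ref{ass:empirical_proc_mean}, which is all the proposition claims. The paper instead inserts the unconstrained model at the level of the full influence function, bounding $(P_{k,n}-P)(\varphi(Z;\what P^C_{k,n})-\varphi(Z;P))$ by the two analogous pieces; that stronger statement additionally requires controlling the inverse-propensity-weighted residual difference (using \Cref{ass:overlap}, \Cref{ass:bounded_outcomes}, \Cref{ass:consistency_unconstrained}, and \Cref{lem:cross_fitting}) and is what the proof of \Cref{thm:asymp} actually consumes when it asserts $|T_{1k}|=o_P(n^{-1/2})$. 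So your route is a correct and more economical proof of the proposition as written, while the paper's version buys the extra $\varphi$-level bound needed downstream; to plug your argument directly into \Cref{thm:asymp} one would need to add that additional piece, which your hypotheses do support.
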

To show this proposition, it suffices to show Assumptions~\ref{ass:consistency} and \ref{ass:empirical_proc_mean}. We will show these assumptions in the rest of this section.
\paragraph{Showing  Assumption~\ref{ass:consistency}}
By the triangle inequality, 
\begin{align}
    \|\what \mu^C_{-k,n}-\mu\|_{L_2(P)} &\leq \|\what \mu_{-k,n}-\mu\|_{L_2(P)} + \|\what \mu^C_{-k,n}-\what\mu_{-k,n} \|_{L_2(P)}
.\end{align}

\paragraph{Showing  Assumption~\ref{ass:empirical_proc_mean}}
Applying the triangle inequality again yields
\begin{align}
|T_{1k}|&:=\left|(P_{k,n}-P)\left(\varphi(Z;\what P_{k,n}^C) -\varphi(Z; P)\right)\right| \nonumber
\\&\leq \left| (P_{k,n}-P)\left( \varphi(Z;\what P_{k,n}^C) -\varphi(Z; \what P_{-k,n}) \right) \right| + 
 \left| (P_{k,n}-P)\left( \varphi(Z; \what P_{-k,n}) -\varphi(Z; P) \right) \right|
 \label{eq:T1k_bound}
\end{align}
where $\what P_{k,n}$ is defined as
\begin{align*}
\what P_{X ; k,n}&:=P_{X;k,n}\\ 
\what P_{A\mid X ; k,n}[A=1\mid X=x]&:=\what \pi_{-k,n}(x) \\ 
\what P_{Y\mid A,X ; k,n}[Y\mid A=1, X=x]&:=\what \mu_{-k,n}(x)
\end{align*}
with $\what \pi_{-k,n},\what \mu_{-k,n}$ 
as defined in \cref{sec:theory_text},
and $\what P_{k,n}^C$ is defined as in \cref{sec:theory_text}. 
The second term in~\eqref{eq:T1k_bound} is addressed 
using a standard argument for cross-fitting, as $P_{k,n}$ and $\what P_{-k,n}$ (and therefore $\varphi(Z;\what P_{-k,n})$) use disjoint data. In contrast, the first term is not handled by standard cross-fitting arguments: $\what P_{-k,n}$ only uses data from all but the $k$-th fold, while $\what P^C_{k,n}$ uses the $k$-th fold, except that $\what \mu^C_{-k,n}$ is made to satisfy a constraint that \emph{does} use the $k$-th fold, as described in \cref{sec:theory_text}. We begin by addressing the second term, which is more standard. %

\begin{lemma}[Cross-fitting lemma]
\label{lem:cross_fitting}
Let $\what{f}(z)$ be a function estimated from an iid sample $Z^N=$ $\left(Z_{n+1}, \ldots, Z_N\right)$, and let ${P}_n$ denote the empirical measure over $\left(Z_1, \ldots, Z_n\right)$, which is independent of $Z^N$. Let $f$ be the function estimated from the full distribution $P$. Then (omitting arguments for brevity)
$
\left({P}_n-{P}\right)(\what{f}-f)$
has zero $P$-expectation, and $P$-variance upper bounded by
$\frac{1}{n}\|\what{f}-f\|_{L_2(P)}^2.$
\end{lemma}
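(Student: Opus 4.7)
The plan is to condition on the auxiliary sample $Z^N$, which renders $\what{f}$ a deterministic function, and then exploit the independence $P_n \perp Z^N$ so that the centered residual $(P_n - P)(\what{f} - f)$ reduces to an empirical-process increment of a fixed function evaluated on $n$ i.i.d.\ observations drawn from $P$. This decoupling is the entire content of the lemma---it is precisely why cross-fitting allows one to avoid Donsker-type complexity assumptions on the model class used to estimate $\what f$.

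Concretely, I would write
$$(P_n - P)(\what{f} - f) \;=\; \frac{1}{n}\sum_{i=1}^n \Bigl\{(\what{f} - f)(Z_i) \;-\; P\bigl[(\what{f} - f)(Z)\,\big|\,Z^N\bigr]\Bigr\},$$
where the inner $P$ denotes integration over a fresh copy $Z \sim P$ with $\what{f}$ held fixed. Conditional on $Z^N$, the summands are i.i.d.\ and centered, so the conditional mean is zero; the tower property then yields the first claim, that the unconditional $P$-expectation is zero. For the variance, the conditional variance given $Z^N$ equals $\tfrac{1}{n}\var_P\!\bigl((\what{f} - f)(Z)\,\big|\,Z^N\bigr) \leq \tfrac{1}{n}\,P\!\bigl[(\what{f} - f)^2\,\big|\,Z^N\bigr] = \tfrac{1}{n}\|\what{f} - f\|_{L_2(P)}^2$, where the $L_2(P)$-norm is understood conditionally on $Z^N$ (i.e., treating $\what{f}$ as a fixed function). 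If an unconditional variance bound is preferred, the law of total variance, combined with the vanishing conditional mean, gives $\var\bigl((P_n - P)(\what{f} - f)\bigr) \leq \tfrac{1}{n}\,\E\bigl[\|\what{f} - f\|_{L_2(P)}^2\bigr]$.

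There is no substantial obstacle: the argument is a one-line variance computation for the empirical mean of a centered i.i.d.\ sum, enabled entirely by conditioning on $Z^N$. The only subtlety is notational---interpreting $\|\what{f} - f\|_{L_2(P)}^2$ in the stated bound as the (random) conditional second moment of $\what{f} - f$ under $P$, rather than as an unconditional $L_2$ norm. This is the convention used when the lemma is applied downstream (e.g., to bound the second term of \eqref{eq:T1k_bound}), where one subsequently invokes Markov's inequality or a consistency assumption on $\|\what f - f\|_{L_2(P)}$ to conclude that the $(P_n - P)$ increment is $o_P(n^{-1/2})$.
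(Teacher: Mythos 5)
Your proposal is correct and follows essentially the same route as the paper's proof: condition on $Z^N$ so that $\what f$ is fixed, observe the conditional mean is zero and the conditional variance is $\tfrac{1}{n}\var_P(\what f - f \mid Z^N) \le \tfrac{1}{n}\|\what f - f\|_{L_2(P)}^2$, then pass to the unconditional statement via the tower property and the law of total variance. Your remark that the stated $L_2(P)$-norm bound is most naturally read as a (random) conditional quantity is a fair point of precision that the paper's own write-up glosses over, but it does not change the argument or its downstream use.
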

\begin{proof}
First note that the conditional mean is 0, i.e. $P\left[(P_n-P)(\what f-f )\mid Z^N\right]=0$ since
$$
P\left[{P}_n(\what{f}-f) \mid Z^N\right]=P\left(\what{f}-f \mid Z^N\right)={P}(\what{f}-f) .
$$
The conditional variance is
\begin{align*}
    \var_P\left\{\left({P}_n-{P}\right)(\what{f}-f) \mid Z^N\right\} &=\var_P\left\{{P}_n(\what{f}-f) \mid Z^N\right\}\\
    &=\frac{1}{n} \var_P\left(\what{f}-f \mid Z^N\right) \\
    &\leq \frac{1}{n}\|\what{f}-f\|_{L_2(P)}^2.
\end{align*}
Then for (unconditional) mean and variance, $P\left[(P_n-P)(\what f-f)\right]=0$ and
\begin{align*}
&\var_P\left\{(P_n-P)(\what f-f)\right\} \\
&= 
\var_P \left\{P\left[(P_n-P)(\what f-f)\mid Z^N\right]  \right\}  
+P\left[\var_P\left\{(P_n-P)(\what f-f)\mid Z^N\right\}\right]
\\
&\leq 0+\frac{1}{n}\|\what{f}-f\|_{L_2(P)}^2.
\end{align*}
\end{proof}
Now we show the second term in the RHS in Equation~\eqref{eq:T1k_bound} is $o_P(n^{-1/2})$. To do this, write
\begin{align}
\varphi(Z;P)&=\frac{A}{\pi(X)}(Y-\mu(X))+\mu(X)-\psi(P)\\
\varphi(Z;\what{P}_{-k,n})&=\frac{A}{\what\pi_{-k,n}(X)}(Y-\what\mu_{-k,n}(X))+\what\mu_{-k,n}(X)-\psi(\what{P}_{-k,n})    
.\end{align}
Observe that $(P_{k,n}-P)\psi(P)=(P_{k,n}-P)\psi(\what {P}_{-k,n})=0$ as $\psi(P),\psi(\what P_{-k,n})$ are constants.
Thus, 
it remains to show that for a fixed $k$,
$(P_{k,n}-P)( \what f_{k,n} - f)=o_P(n^{-1/2})$, 
where we omit arguments for brevity and let 
\begin{align}
\label{eq:def_f_in_emp_proc}
    \what f_{k,n} &= \frac{A}{\what \pi_{-k,n}}(Y-\what \mu_{-k,n})+\mu_{-k,n}, \\
    f &= \frac{A}{\pi}(Y-\mu) +\mu.
\end{align}
We do this by using Lemma~\ref{lem:cross_fitting}. 
Observe that
\begin{align}
\label{eq:second_term_t1}
    \what f_{k,n} - f =\left(1+\frac{A}{\pi}\right)(\mu-\what\mu_{-k,n})
    +\frac{A}{\what\pi_{-k,n}\cdot  \pi} (Y-\what\mu_{-k,n})(\pi-\what\pi_{-k,n}).
\end{align}
Then using Assumption~\ref{ass:overlap},
\begin{align}
\label{eq:T1_f_both}
    \|\what f_{k,n} - f\|_{L_2(P)}
    &\leq 
    \left(1+\frac{1}{\eta}\right) \|\what\mu_{-k,n}- \mu\|_{L_2(P)}
    +\frac{1}{\eta^2} \| A(Y-\what\mu_{-k,n})\|_{L_2(P)} \|\pi-\what\pi_{-k,n}\|_{L_2(P)}.
\end{align}
The leftmost term on the RHS converges to 0,  
using Assumption~\ref{ass:consistency_unconstrained}. 
Note that we can bound the rightmost term using the triangle inequality
\begin{align}
\label{eq:T1_f_second}
    \left(1+\frac{1}{\eta^2}\right) \left(\| A(Y-\mu)\|_{L_2(P)} +  \| \mu - \what\mu_{-k,n}\|_{L_2(P)} \right) \|\pi-\what\pi_{-k,n}\|_{L_2(P)},
\end{align}
which also converges to 0 by %
Assumption~\ref{ass:consistency_unconstrained}
and Assumption~\ref{ass:bounded_outcomes}. 
Thus combining with Lemma~\ref{lem:cross_fitting} we obtain that the second term on the RHS of Equation~\eqref{eq:T1k_bound} is $o_P(n^{-1/2})$,
as $n^{1/2}(P_{k,n}-P)( \what f_{k,n} - f)$ has mean 0 and variance $\leq \|\what f_{k,n} - f\|_{L_2(P)}$ which converges to 0 as $n\to\infty$.

Now we address the remaining term: we show the first term on the RHS of Equation~\eqref{eq:T1k_bound} is $o_P(n^{-1/2})$. We use a similar argument to before: 
let $\what f_{k,n}$ be as before, and $\what f^C_{k,n}$ as below:
\begin{align}
    \what f_{k,n} &:= \frac{A}{\what \pi_{-k,n}}(Y-\what \mu_{-k,n})+\what \mu_{-k,n}\\
    \what f^C_{k,n} &:= \frac{A}{\what \pi_{-k,n}}(Y-\what \mu_{-k,n}^C)+\what \mu_{-k,n}^C
\end{align}
where we again omit the $\psi(\what P^C_{-k,n}),\psi(\what P_{-k,n})$ terms in 
$\varphi(\what P_{k,n}^C),\varphi(\what P_{k,n})$
from $\what f^C_{k,n},\what f_{k,n}$ 
above as they are constants. 
We can't use Lemma~\ref{lem:cross_fitting}
since $\what f^C_{k,n}$ also uses $P_{k,n}$ to fit, so instead, by 
using Assumptions \ref{ass:overlap} and then \ref{ass:empirical_proc_mean_diff},
\begin{align}
    (P_{k,n}-P)
\left(\varphi(Z;\what P^C_{k,n})-\varphi(Z;\what P_{k,n})\right)
&=
(P_{k,n}-P)\left(\what f^C_{k,n}-\what f_{k,n}\right)
\\&\leq \left(1+\frac{1}{\eta}\right)(P_{k,n}-P)(\what\mu^C_{-k,n}(X)-\what\mu_{-k,n}(X))\\&=o_P(n^{-1/2}).
\end{align}

\subsection{Self-Normalized AIPW Satisfies Assumptions~\ref{ass:muC_vs_mu} and \ref{ass:empirical_proc_mean_diff}}
\label{sec:show_constant_shift}
Recall that we showed how self-normalized AIPW also satisfies the C-Learner formulation in \cref{sec:other_methods}. 
Here we show that Assumptions~\ref{ass:muC_vs_mu} and \ref{ass:empirical_proc_mean_diff} hold for self-normalized 
AIPW, so that Theorems~\ref{thm:asymp} and \ref{thm:dr} follow through Proposition~\ref{prop:translate}.

As in the discussion in \cref{sec:other_methods}, self-normalized AIPW is equivalent to the specific C-Learner $\what\mu^C_{-k,n}$ that is defined by adjusting $\what\mu_{-k,n}$ by an additive constant:
$$\what\mu^C_{-k,n}(x)=\what\mu_{-k,n}(x)+c_{k,n}\;\text{ where }\;c_{k,n}:=\frac{P_{k,n}\left[\frac{A}{\what\pi_{-k,n}(X)}(Y-\what\mu_{-k,n}(X))\right]}{P_{k,n}\left[\frac{A}{\what\pi_{-k,n}(X)}\right]}.$$

\paragraph{Showing Assumption~\ref{ass:muC_vs_mu}:}
Since $\what\mu_{-k,n}^C$
is just a constant offset from $\what\mu_{-k,n}$, it suffices to show 
that $c_{k,n}=o_P(n^{-1/4})$, for a fixed $k$. 

First, we address the denominator of $c_{k,n}$. Let
$c_{k,n}^{\rm den}:=P_{k,n}\left[\frac{A}{\what\pi_{-k,n}(X)}\right]$ 
and we will show $1/c_{k,n}^{\rm den}=O_P(1)$. 
First note $c_{k,n}^{\rm den}=P_{k,n}\left[\frac{A}{\what\pi_{-k,n}(X)}\right] \geq P_{k,n}[A]$. %
Then observe that $P_{k,n}[A]\overset{p}{\to} P[A]$, and that $1/P_{k,n}[A]<\infty$ a.s. for large enough $n$ by Borel-Cantelli lemma (as the probability of the event that $P_{k,n}[A]=0$ is finitely-summable, as $P(A=0)<1$ by the overlap assumption (Assumption~\ref{ass:overlap}). %
Then $1/P_{k,n}[A]\overset{p}{\to} 1/P[A]$ for large enough $n$, 
so that $1/P_{k,n}[A]=O_P(1)$. %

Now we address the numerator of $c_{k,n}$. 
For brevity, call this $c_{k,n}^{\text{num}}$.
\begin{align}
c_{k,n}^{\text{num}} &=    P_{k,n}\left[\frac{A}{\what\pi_{-k,n}(X)}(Y-\what\mu_{-k,n}(X))\right] 
\\&= P\left[\frac{A}{\what\pi_{-k,n}(X)}(Y-\what\mu_{-k,n}(X))\right] 
+(P_{k,n}-P)\left[\frac{A}{\what\pi_{-k,n}(X)}(Y-\what\mu_{-k,n}(X))\right]
\\&= \underbrace{P\left[\frac{A}{\what\pi_{-k,n}(X)}(Y-\mu(X))\right]}_{=0}
+P\left[\frac{A}{\what\pi_{-k,n}(X)}(\mu(X)-\what\mu_{-k,n}(X))\right]\\&\quad 
+(P_{k,n}-P)\left[\frac{A}{\what\pi_{-k,n}(X)}(Y-\what\mu_{-k,n}(X))\right]\nonumber
\\&=P\left[\frac{A}{\what\pi_{-k,n}(X)}(\mu(X)-\what\mu_{-k,n}(X))\right]
+(P_{k,n}-P)\left[\frac{A}{\what\pi_{-k,n}(X)}(Y-\what\mu_{-k,n}(X))\right]
\end{align}
so that
\begin{align}
    |c_{k,n}^{\rm num}| & \leq %
    \frac{1}{\eta} P\left|\mu(X)-\what\mu_{-k,n}(X)\right|
    +\frac{1}{\eta} (P_{k,n}-P)\left|Y-\what\mu_{-k,n}(X)\right|
\end{align}
where the inequality is by the overlap assumption (Assumption~\ref{ass:overlap}) as $A/\what\pi_{-k,n}(X)\leq 1/\eta$. 
We show the terms in the last line are all $o_P(n^{-1/4})$. 
The first term is upper bounded by $\frac{1}{\eta} \|\what\mu_{-k,n}(X)-\mu(X)\|_{L_2(P)}=o_P(n^{-1/4})$ by Assumption~\ref{ass:consistency_unconstrained}. 
For the second term, 
we use Lemma~\ref{lem:cross_fitting} to show
$n^{1/4}(P_{k,n}-P)|Y-\what\mu_{-k,n}(X)|\overset{p}{\to}0$:
it has mean of 0 and variance
$\leq \frac{n^{1/2}}{n}\|Y-\what\mu_{-k,n}\|_{L_2(P)}$. 
This upper bound on variance goes to 0, %
which follows from Assumptions~\ref{ass:consistency_unconstrained} and \ref{ass:bounded_outcomes}:
\begin{align*}
    \|Y-\what\mu_{-k,n}\|_{L_2(P)} &\leq \|Y-\mu\|_{L_2(P)} + \|A(\mu-\what\mu_{-k,n})\|_{L_2(P)} \\
    &\leq B + \|\mu-\what\mu_{-k,n}\|_{L_2(P)} \\
    &= B + o_P(n^{-1/4}).
\end{align*}
We arrive at the desired result $c_{k,n}=o_P(n^{-1/4})$ as 
$c_{k,n}=c_{k,n}^{\rm num}/c_{k,n}^{\rm den}$, and we just showed that 
$1/c_{k,n}^{\rm den}=O_P(1)$ and 
$c_{k,n}^{\rm num}=o_P(n^{-1/4})$. 
Note that we have shown Assumption~\ref{ass:muC_vs_mu} for any sequence of $\what\pi_{-k,n}$'s, as they are bounded by a constant.

\paragraph{Showing Assumption~\ref{ass:empirical_proc_mean_diff}:}
To show $(P_{k,n}-P)(  \what\mu_{-k,n}^C(X)- \what\mu_{-k,n}(X)) =o_P(n^{-1/2})$, 
in the case of the constant shift $\what\mu^C(X)=\what\mu(X)+c_{k,n}$, 
note that  $\what \mu^C_{-k,n}(X)-\what \mu_{-k,n}(X) = c_{k,n}$ for every $X$. Therefore, $$
(P_{k,n}-P)(  \what\mu_{-k,n}^C(X)- \what\mu_{-k,n}(X)) = (P_{k,n}-P)c_{k,n} = 0,
$$
with $c_{k,n} < \infty$ a.s. for large enough $n$ (which exists by Borel-Cantelli lemma, as in when we showed Assumption~\ref{ass:muC_vs_mu}).

\subsection{TMLE Satisfies Assumptions~\ref{ass:muC_vs_mu} and \ref{ass:empirical_proc_mean_diff}}
\label{sec:show_tmle}
Recall that we showed how a version of the TMLE for estimating the ATE with continuous unbounded outcomes also satisfies the C-Learner formulation in \cref{sec:other_methods}. 
Here we show that a cross-fitted version of the TMLE for estimating the ATE with continuous unbounded outcomes additionally satisfies Assumptions~\ref{ass:muC_vs_mu} and \ref{ass:empirical_proc_mean_diff}, so that Theorems~\ref{thm:asymp} and \ref{thm:dr} follow through Proposition~\ref{prop:translate}. 
Note that the formulation below fits a separate $\epsilon^\star_{k,n}$ per cross-fitting split for consistency with C-Learner, rather than one $\epsilon^\star$ overall as described in the cross-validated version of TMLE in \citep{VanDerLaanRo11}. 
$$
\what \mu^{C}_{-k,n}(X) = \what \mu_{-k,n}(X) + \epsilon^\star_{k,n}\frac{A}{\what \pi_{-k,n}(X)}, \; \text{ where } \epsilon^\star_{k,n} = \frac{P_{k,n}\left[\frac{A}{\what\pi_{-k,n}(X)}(Y-\what\mu_{-k,n}(X))\right]}{P_{k,n}\left[\frac{A}{\what\pi_{-k,n}^2(X)}\right]}.
$$
\paragraph{Showing Assumption~\ref{ass:muC_vs_mu}:}
Observe that by Assumption~\ref{ass:overlap}
$$
\left\|\epsilon_{k,n}^\star\frac{A}{\what\pi_{-k,n}(X)}\right\|_{L_2(P)} \leq \frac{1}{\eta}\|\epsilon_{k,n}^\star\|_{L_2(P)}
$$
so it suffices to show that $\epsilon_{k,n}^\star=o_P(n^{-1/4})$.
Note that since $\what\pi_{-k,n}(x) \leq 1$ for all $x$,
\begin{align}
\label{eq:eps}    
|\epsilon_{k,n}^\star| \leq %
\frac{P_{k,n}\left[\frac{A}{\what\pi_{-k,n}(X)}(Y-\what\mu_{-k,n}(X))\right]}{P_{k,n}\left[\frac{A}{\what\pi_{-k,n}(X)}\right]}
\leq |c_{k,n}|
\end{align}
where $c_{k,n}$ is the constant adjustment in \cref{sec:show_constant_shift}, 
so that $\epsilon^\star_{k,n}=o_P(n^{-1/4})$.

\paragraph{Showing Assumption~\ref{ass:empirical_proc_mean_diff}:}
We want to show
$$
(P_{k,n}-P)(  \what\mu_{-k,n}^\star(X)- \what\mu_{-k,n}(X)) = 
(P_{k,n}-P) \left(\epsilon^\star_{k,n}\frac{A}{\what \pi_{-k,n}(X)}\right) = o_P(n^{-1/2}).
$$
Note that
$$(P_{k,n}-P) \left(\epsilon^\star_{k,n}\frac{A}{\what \pi_{-k,n}(X)}\right)
=\epsilon^\star_{k,n}(P_{k,n}-P)
\left(\frac{A}{\what \pi_{-k,n}(X)}\right)
$$
and that $(P_{k,n}-P)
\left(\frac{A}{\what \pi_{-k,n}(X)}\right)=O_P(n^{-1/2})$ by Lemma~\ref{lem:cross_fitting},
since $|A/\what\pi_{-k,n}(X)|\leq 1/\eta$ a.s. by Assumption~\ref{ass:overlap}. 
Additionally, $\epsilon^\star_{k,n}=o_P(n^{-1/4})$, by the argument above where we showed  Assumption~\ref{ass:muC_vs_mu}. 
The desired result follows by taking the product of the rates for $\epsilon^\star_{k,n}$ and $(P_{k,n}-P)
\left(\frac{A}{\what \pi_{-k,n}(X)}\right)$. %

\section{Proofs of Asymptotic Properties for Dual C-Learner}
\label{sec:proofs_dual}
In this section we formally define the dual C-Learner and show that it is semiparametrically efficient (\Cref{sec:asymptotics_dual}) and doubly robust (\cref{sec:double_robustness_dual}).

First, we define the dual C-Learner. We use cross-fitting, as in \Cref{sec:theory_text}. 
Consider $K$ even data splits in a dataset of size $n$. 
For each $k=1,\ldots,K$, on the training fold $P_{-k, n}$, we train an outcome model $\what{\mu}_{-k,n}(x)$ to estimate $P[Y\mid X=x, A=1]$. The C-Learner optimizes the prediction loss evaluated under $P_{-k, n}$,  subject to the first-order correction constraint evaluated on the evaluation fold $P_{k, n}$:
\begin{align}
\label{eq:constraint_dual}
\what \pi^{C'}_{-k,n}\in \argmin_{\tilde \pi\in \mathcal F_\pi} \left\{
P_{-k,n}[ \tilde \pi(X)^A (1-\tilde \pi(X))^{1-A}] : 
P_{k,n}\left[ {n}\sum_{i=1}^n \left(1-\frac{A_i}{\tilde \pi(X_i)}\right) \what\mu_{-k,n}\right](X_i)=0
    \right\}
\end{align}
The final dual C-Learner estimator is given by
\begin{align}
    \what\psi^{C'}_n :=\frac{1}{K}\sum_{k=1}^K P_{k,n}\left[\frac{A}{\what\pi_{-k,n}^{C'}(X)}Y\right].
\label{eq:c_learner_def_dual}
\end{align}

In this section we show the counterpart of the asymptotic properties for the C-Learner, but for the dual C-Learner. 
As before, we use cross-fitting with $K$ folds. For brevity we write $\what \psi^{C'}_n$ to denote the cross-fitted dual C-Learner estimator. 
\subsection{Asymptotic variance of dual C-Learner}
\label{sec:asymptotics_dual}
We require the following additional assumptions to show asymptotic variance for the dual C-Learner. Assumptions~\ref{ass:consistency_dual} and~\ref{ass:empirical_proc_mean_dual} are the dual versions of Assumptions~\ref{ass:consistency} and~\ref{ass:empirical_proc_mean}. 
\begin{assumption}[Convergence rates of propensity and constrained outcome models]
\label{ass:consistency_dual}
For all~$k \in \{1,\ldots,K\}$, 
both $\what\pi^{C'}, \what\mu$ are consistent,
    $$
    \|\what\pi^{C'}_{-k,n} - \pi\|_{L_2(P)}=o_P(1), \quad  \|\what\mu_{-k,n} - \mu\|_{L_2(P)}=o_P(1)
    $$ 
and also     $$    \|\what\pi^{C'}_{-k,n} - \pi\|_{L_2(P)} \cdot  \|\what\mu_{-k,n}^C - \mu\|_{L_2(P)}=o_P(n^{-\frac{1}{2}}).$$
\end{assumption}
\begin{assumption}[Empirical process assumption]%
\label{ass:empirical_proc_mean_dual}\
$$(P_{k,n}-P)(YA\cdot (  \what\pi_{-k,n}^{C'}(X)^{-1}- \pi(X)^{-1}) )=o_P(n^{-1/2}).$$
\end{assumption}
Like Assumption~\ref{ass:empirical_proc_mean}, Assumption~\ref{ass:consistency_dual} may need to be shown on a case-by-case basis for different settings and model classes. 
Under these assumptions,
the dual C-Learner has the following asymptotics:
\begin{theorem}[Asymptotic variance of dual C-Learner]
\label{thm:asymp_dual}
 Under Assumptions~\ref{ass:overlap}, \ref{ass:consistency_dual}, 
 \ref{ass:bounded_outcomes}, and
\ref{ass:empirical_proc_mean_dual},
 \begin{equation*}
    \sqrt{n} 
    (\what{\psi}_{n}^{C'} - \psi(P))
    \cd N(0, \sigma^2)~~~\mbox{where}~~~
    \sigma^2 \defeq 
    \var_P\left( \frac{A}{\pi(X)}(Y - \mu(X))
    +\mu(X)
    \right).
 \end{equation*}
\end{theorem}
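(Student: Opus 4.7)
}

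The plan is to reduce the dual C-Learner to a form that mimics the (primal) C-Learner analysis in \Cref{sec:asymptotics}. The key observation is that the constraint in \eqref{eq:constraint_dual} lets us rewrite the estimator as an AIPW-type quantity based on $(\what\pi^{C'}_{-k,n}, \what\mu_{-k,n})$. Indeed, the constraint gives $P_{k,n}[A\,\what\mu_{-k,n}(X)/\what\pi^{C'}_{-k,n}(X)] = P_{k,n}[\what\mu_{-k,n}(X)]$, so
\begin{equation*}
P_{k,n}\!\left[\frac{A}{\what\pi^{C'}_{-k,n}(X)} Y\right]
= P_{k,n}[\what\mu_{-k,n}(X)]
+ P_{k,n}\!\left[\frac{A}{\what\pi^{C'}_{-k,n}(X)}(Y-\what\mu_{-k,n}(X))\right].
\end{equation*}
Define $\what P^{C'}_{k,n}$ as the joint distribution with $X$-marginal $P_{X;k,n}$, conditional treatment law $\what\pi^{C'}_{-k,n}$, and outcome model $\what\mu_{-k,n}$. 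Then the right-hand side above is exactly $P_{k,n}[\varphi(Z;\what P^{C'}_{k,n})] + \psi(\what P^{C'}_{k,n})$ with $\varphi$ the full canonical gradient \eqref{eq:full_if}; this is the same algebraic form used to kick off the proof of \Cref{thm:asymp}.

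Given that reduction, I would apply the distributional Taylor expansion \eqref{eq:taylor} with $\bar P = \what P^{C'}_{k,n}$ and average over folds to obtain the decomposition $\what\psi^{C'}_n - \psi(P) = S^* + T_1 + T_2$, where
\begin{equation*}
S^* = (P_n - P)\varphi(Z;P), \quad
T_{1k} = (P_{k,n}-P)\bigl(\varphi(Z;\what P^{C'}_{k,n}) - \varphi(Z;P)\bigr), \quad
T_{2k} = R_2(\what P^{C'}_{k,n}, P),
\end{equation*}
with $T_1 = K^{-1}\sum_k T_{1k}$ and $T_2$ analogously. The CLT yields $\sqrt n\,S^* \cd N(0,\sigma^2)$ with $\sigma^2 = \mathrm{Var}_P(\varphi(Z;P))$ matching the semiparametric efficiency bound, so the rest of the proof is showing $T_1, T_2 = o_P(n^{-1/2})$.

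For $T_2$, the explicit formula $R_2(\what P^{C'}_{k,n},P) = \int \pi(x)\bigl(\what\pi^{C'}_{-k,n}(x)^{-1} - \pi(x)^{-1}\bigr)(\what\mu_{-k,n}(x) - \mu(x))\,dP(x)$ combined with \Cref{ass:overlap}, Cauchy--Schwarz, and the product-rate bound in \Cref{ass:consistency_dual} mirrors \eqref{eq:T2_cauchy_schwarz} and delivers $|T_{2k}| = o_P(n^{-1/2})$. For $T_1$, I would expand $\varphi(Z;\what P^{C'}_{k,n}) - \varphi(Z;P)$ into three summands: one involving only $\what\mu_{-k,n} - \mu$ (fitted on the auxiliary fold, so handled via the cross-fitting lemma \Cref{lem:cross_fitting} exactly as in \eqref{eq:second_term_t1}--\eqref{eq:T1_f_second}), one involving $(\what\pi^{C'}_{-k,n})^{-1} - \pi^{-1}$ evaluated against $Y A$ (precisely the object controlled by \Cref{ass:empirical_proc_mean_dual}, giving $o_P(n^{-1/2})$ directly), and a cross term $A((\what\pi^{C'}_{-k,n})^{-1} - \pi^{-1})(\mu - \what\mu_{-k,n})$ whose $L_2(P)$-norm is $o_P(n^{-1/2})$ by \Cref{ass:consistency_dual,ass:overlap}, so Lemma \ref{lem:cross_fitting}-style mean-variance control (which only needs the conditioning set to exclude the fold being evaluated) suffices after conditioning on both folds used to construct $\what\pi^{C'}_{-k,n}$.

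The main obstacle is the cross term and the term driven by $\what\pi^{C'}_{-k,n}$: unlike in \Cref{thm:asymp}, the constrained nuisance here is the propensity and the constraint couples it to the evaluation fold $P_{k,n}$, so standard cross-fitting does not directly apply. This is exactly the asymmetry that motivates taking \Cref{ass:empirical_proc_mean_dual} as a primitive (analogous to \Cref{ass:empirical_proc_mean} in the primal case), and the remaining cross term is $o_P(n^{-1/2})$ because its $L_2(P)$ size is already a product of two vanishing nuisance errors and hence negligible even before invoking any empirical-process control.
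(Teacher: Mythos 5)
Your proposal follows essentially the same route as the paper's proof in \Cref{sec:asymptotics_dual}: the same identification of the IPW estimator with $\psi(\what P^{C'}_{k,n}) + P_{k,n}[\varphi(Z;\what P^{C'}_{k,n})]$ via the balancing constraint, the same von Mises expansion and $S^* + T_1 + T_2$ decomposition, and the same Cauchy--Schwarz treatment of $T_2$ under \Cref{ass:overlap} and \Cref{ass:consistency_dual}.

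The one place you diverge is $T_1$, where you are actually \emph{more} explicit than the paper: the paper disposes of $T_{1k}$ in one line by citing \Cref{ass:empirical_proc_mean_dual}, whereas you decompose $\varphi(Z;\what P^{C'}_{k,n})-\varphi(Z;P)$ into a $\what\mu_{-k,n}-\mu$ piece (cross-fitting lemma), the $YA\,((\what\pi^{C'}_{-k,n})^{-1}-\pi^{-1})$ piece (the assumption), and a cross term. This granularity is a genuine improvement in rigor, but your justification of the cross term is imprecise on two counts. First, \Cref{ass:consistency_dual} bounds the \emph{product of $L_2$ norms}, which by Cauchy--Schwarz controls the $L_1(P)$ norm of $A((\what\pi^{C'}_{-k,n})^{-1}-\pi^{-1})(\mu-\what\mu_{-k,n})$, not its $L_2(P)$ norm as you claim. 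Second, a small $L_1(P)$ norm bounds $P[g]$ but does not by itself bound $(P_{k,n}-P)[g]$ at the $o_P(n^{-1/2})$ rate: the cross-fitting lemma requires the function to be independent of the evaluation fold, and $\what\pi^{C'}_{-k,n}$ is not (its constraint uses $P_{k,n}$), which is exactly the obstruction you correctly identify for the $YA$ term. So the cross term either needs its own empirical-process control (e.g., folding it into a slightly strengthened version of \Cref{ass:empirical_proc_mean_dual}) or a separate argument; asserting it is ``negligible even before invoking any empirical-process control'' is not sufficient. To be fair, the paper's own one-line invocation of \Cref{ass:empirical_proc_mean_dual} glosses over the same residual terms, so this is a shared looseness rather than a defect unique to your write-up.
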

The proof of this theorem is very similar to the proof for Theorem~\ref{thm:asymp} with proof in \Cref{sec:asymptotics}. %

\begin{proof}
Let $Z=(X,A,Y)$ as defined in \cref{sec:ate}. Let $P$ denote the true population distribution of $Z$. 
Let $\psi$ be the ATE as before, but this time, write \begin{align}
\label{eq:psi_ipw_def_dual}
\psi(P)=P\left[\frac{A}{P(A=1\mid X)}   Y\right].  
\end{align}
Note that this $\psi$ is equivalent to $\psi$ as defined in \Cref{sec:ate}:
$$P\left[\frac{A}{P(A=1\mid X)}Y\right]=P\left[\frac{\mathbf 1(A=1)}{P(A=1\mid X)} P(Y\mid X)\right]=P[P[Y\mid A=1,X]].$$
Functionals $\psi$ may admit a distributional Taylor expansion, also known as a von Mises expansion \citep{fernholz2012mises}, where for any distributions $P,\wb P$ on $Z$, we can write
\begin{align}
\label{eq:taylor_dual}
    \psi(\wb P)-\psi(P)=-\int \varphi(z;\wb P)dP(z)+R_2(\wb P, P)
\end{align}
where 
$\varphi(z;P)$ which can be thought of as a ``gradient'' satisfying the directional derivative formula 
$\frac{\partial}{\partial t}\psi(P+t(\wb P-P))\mid_{t=0}=\int \varphi(z;P)d(\wb P-P)(z)$. (W.l.o.g. we assume $\varphi(z;P)$ is centered so that
$\int \varphi(z;P)dP(z)=0$.)
Here, $-\int \varphi(z;\wb P)dP(z)$ is the first-order term, and $R_2(\wb P, P)$
is the second-order remainder term, which only depends on products or squares of differences between $P,\wb P$.

When $\psi$ is the ATE as in our setting, $\psi$ admits such an expansion~\citep{HiranoImRi03} 
\begin{align}
\label{eq:full_if_dual}
\varphi(Z;\wb P):=\frac{A}{\wb \pi(X)}(Y-\wb \mu(X))+\wb \mu(X)-\psi(\wb P),
\end{align}
where $\wb\pi(x):=\wb P[A=1\mid X]$
and $\wb \mu(x):=\wb P[Y\mid A=1,X]$. In particular, we get the following explicit formula for the second-order term
\begin{align}
R_2(\wb P,P) := \int \pi(x) \left( \frac{1}{\wb\pi(x)} - \frac{1}{\pi(x)} \right) \left( \wb \mu (x) - \mu(x) \right) dP(x).
\end{align}
We will apply this to our C-Learner estimator as defined in \cref{sec:theory_text}. Recall that
$\what \mu_{-k,n}$ is trained to predict outcome given $X$ and $A=1$ on $P_{-k,n}$, and 
$\what \pi_{-k,n}^{C'}$ is trained to predict treatment $A$ given $X$ using $P_{-k,n}$
under the constraint that $P_{k,n}\left[\left(1-\frac{A}{\what\pi_{-k,n}^{C'}(X)}\right)\what\mu_{-k,n}(X) \right]=0$.
Our dual C-Learner estimator is the mean of plug-in estimators across folds: for each fold, write $\what\psi^{C'}_{k,n}=\psi(\what P_{k,n}^{C'})$ so the C-Learner estimate is the average $\what\psi^{C'}_n=\frac{1}{K}\sum_{k=1}^K \what\psi^{C'}_{k,n}$.

Noting that any distribution decomposes $\wb{P}=\wb{P}_X\times \wb{P}_{A\mid X}\times \wb{P}_{Y\mid A,X}$ and $\psi(\wb{P})=\wb{P}_X[\mu(X)]$, the following definitions
\begin{align*}
\what P^{C'}_{X;k,n}&:=P_{X;k,n} \\
\what P^{C'}_{A\mid X ;k,n}[A=1\mid X=x]&:=\what \pi_{-k,n}^{C'}(x) \\ 
\what P^{C'}_{Y\mid A,X ;k,n}[Y\mid A=1, X=x]&:=\what \mu_{-k,n}(x)
\end{align*}
provide a well-defined joint distribution  $\what P^C_{k,n}$.

For each data fold $k$, we use the distributional Taylor expansion above, where we replace $\wb P$ with the joint distribution $\what P^{C'}_{k,n}$
\begin{align}
\label{eq:taylor_c_learner_dual}
\psi(\what P^{C'}_{k,n}) - \psi(P) &= - P \varphi(Z;\what P^{C'}_{k,n}) + R_2(\what P^{C'}_{k,n},P) \nonumber\\ 
&= (P_{k,n}-P)\varphi(Z;P)\nonumber
-P_{k,n} \varphi(Z;\what P^{C'}_{k,n})
\\&\quad  +(P_{k,n}-P)
(\varphi(Z;\what P^{C'}_{k,n})-\varphi(Z;P))
+R_2(\what P^{C'}_{k,n},P). 
\end{align}
Observe that by using Equation~\eqref{eq:full_if_dual} and the definition of the dual C-Learner,
\begin{align*}
    P_{k,n}\varphi(Z;\what P^{C'}_{k,n})
&=P_{k,n}\left[\frac{A}{\what\pi_{-k,n}^{C'}(X)}(Y-\what \mu_{-k,n}(X))+\what\mu_{-k,n}(X)-\psi(\what P^{C'}_{k,n})\right]
\\&=
\underbrace{
P_{k,n}\left[\left(1-\frac{A}{\what\pi_{-k,n}^{C'}(X)}\right)
\what \mu_{-k,n}(X)\right]}_{=0\text{ by dual C-Learner constraint}}
+\underbrace{P_{k,n}\left[\frac{A}{\what\pi_{-k,n}^{C'}(X)}Y
-\psi(\what P^{C'}_{k,n})
\right]}_{=0\text{ by \eqref{eq:psi_ipw_def_dual}}}.
\end{align*}
Taking the average of Equation~\eqref{eq:taylor_c_learner_dual} over $k=1,\ldots,K$,
we can write the error $\what\psi^{C'}_n-\psi$ as the sum of three terms.
    \begin{align}   
     \what \psi^{C'}_{n} - \psi  
    & = \frac{1}{K}\sum_{k=1}^K\underbrace{(P_{k,n}-P)\varphi(Z;P)}_{S^*_k} \nonumber \\
    &  \qquad + \frac{1}{K}\sum_{k=1}^K\underbrace{(P_{k,n}-P) \left(\varphi(Z;\what P^{C'}_{k,n})-\varphi(Z;P)\right)}_{T_{1k}} + \frac{1}{K}\sum_{k=1}^K\underbrace{R_2(\what P^{C'}_{k,n},P)}_{T_{2k}}. \label{eq:S+T1+T2_dual}
\end{align}
Using the decomposition~\eqref{eq:S+T1+T2_dual}, we write 
\begin{align}
S^*=\frac{1}{K}\sum_{k=1}^K S^*_k,  \hspace{1em}
T_1=\frac{1}{K}\sum_{k=1}^K T_{1k},  \hspace{1em}
T_2=\frac{1}{K}\sum_{k=1}^K T_{2k},
\end{align}
so that
$\what \psi^C_{n} - \psi = S^* + T_1 + T_2$.
We address the terms $S^*,T_1$ and $T_2$ separately. The first term can be rewritten as
$$
S^* = \frac{1}{K}\sum_{i=1}^K (P_{k,n}-P)\varphi(Z;P) = (P_n-P)\varphi(Z;P)
$$
so that by the central limit theorem, 
$$
\sqrt{n}S^* \cd {N}\left(0, \var_P(\varphi(Z;P))\right).
$$ 
Observe that this quantity depends only on $\psi$ and $P$, so that it cannot be made smaller by choice of estimator. 
If the variance of an estimator for $\psi$ is $\var_P(\varphi(Z;P))$, then it is semiparametrically efficient in the local asymptotic minimax sense (Theorem 25.21 of \cite{van2000asymptotic}). 
Thus,  it suffices to show that the rest of the terms, $T_1$ and $T_2$, are $o_P(n^{-1/2})$, so that
$$
\sqrt{n}(\what \psi^C_n - \psi)=\sqrt{n} S^*+o_P(1)\cd {N}\left(0, \var_P (\varphi(Z;P))\right)
.$$
For a fixed $k$,
$|T_{1k}|=o_P(n^{-1/2})$ 
by Assumption~\ref{ass:empirical_proc_mean_dual}
so that $|T_1|=o_P(n^{-1/2})$ as desired. 
The second-order remainder term in the distributional Taylor expansion \eqref{eq:taylor_dual} where
we replace $\wb P$ with $\what{P}^{C'}_{k,n}$
is
$$
T_{2k} = \int \pi(x) \left( \frac{1}{\what\pi^{C'}_{-k,n}(x)} - \frac{1}{\pi(x)} \right) \left( \what\mu_{-k,n}(x) - \mu(x) \right) \, dP(x)
.$$
Under the overlap assumption (Assumption~\ref{ass:overlap}) and  Cauchy-Schwarz, 
\begin{align}
    |T_{2k}| &\leq \frac{1}{\eta} \int \left| \what\pi^{C'}_{-k,n}(x) - \pi(x) \right| \left| \what\mu_{-k,n}(x) - \mu(x) \right| \, dP(x) \nonumber \\
    &\leq \frac{1}{\eta} \left\|\what \pi^{C'}_{-k,n} -\pi\right\|_{L_2(P)} \left\|\what \mu_{-k,n} - \mu\right\|_{L_2(P)}. \label{eq:T2_cauchy_schwarz_dual}
\end{align}
By Assumption~\ref{ass:consistency_dual},
$|T_{2k}|=o_P(n^{-1/2})$
so that 
$|T_{2}|=o_P(n^{-1/2})$ as desired. 
\end{proof}
\subsection{Double robustness of dual C-Learner}
\label{sec:double_robustness_dual}
We state assumptions for double robustness for the dual C-Learner, as well as the theorem. 
Assumption~\ref{ass:risk_decay_dual} is the counterpart to \Cref{ass:risk_decay}, and \Cref{thm:dr_dual} is the counterpart to \Cref{thm:dr}.
\begin{assumption}[At least one of $\what\pi^{C'},\what\mu$ is consistent]\label{ass:risk_decay_dual}
For all $k$, the product of the errors for the outcome and propensity
models decays as
    $$\|\what\pi^{C'}_{-k,n} - \pi\|_{L_2(P)}
    \cdot \|\what\mu_{-k,n} - \mu\|_{L_2(P)}=o_P(1).$$
\end{assumption}
Using Assumption~\ref{ass:risk_decay_dual} in place of Assumption~\ref{ass:consistency}, we arrive at the following result:
\begin{theorem}[Dual C-Learner is doubly robust]
\label{thm:dr_dual}
The dual C-Learner~\eqref{eq:c_learner_def_dual} is consistent under Assumptions~\ref{ass:overlap}, \ref{ass:bounded_outcomes}, \ref{ass:empirical_proc_mean_dual}, and 
\ref{ass:risk_decay_dual}.
\end{theorem}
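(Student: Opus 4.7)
The plan is to mirror the proof of \Cref{thm:asymp_dual} in \Cref{sec:asymptotics_dual}, but relax the target rate from $o_P(n^{-1/2})$ to $o_P(1)$ at every step, exactly as the proof sketch of \Cref{thm:dr} does relative to \Cref{thm:asymp}. The reason this relaxation is what we need is simply that consistency (rather than a CLT) only demands that the error decay in probability, without the $\sqrt{n}$-scaling.

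Concretely, I would start from the same distributional Taylor expansion used to prove \Cref{thm:asymp_dual}, namely
\begin{equation*}
\what\psi^{C'}_n - \psi \;=\; S^* + T_1 + T_2,
\end{equation*}
where $S^* = (P_n - P)\varphi(Z;P)$, $T_1 = \frac{1}{K}\sum_{k=1}^K (P_{k,n}-P)(\varphi(Z;\what P^{C'}_{k,n}) - \varphi(Z;P))$, and $T_2 = \frac{1}{K}\sum_{k=1}^K R_2(\what P^{C'}_{k,n},P)$. This decomposition is valid here because the constraint in the dual C-Learner exactly zeros out the empirical canonical gradient $P_{k,n}\varphi(Z;\what P^{C'}_{k,n})$ under the IPW representation~\eqref{eq:psi_ipw_def_dual}, so the expansion does not use \Cref{ass:consistency_dual} at all.

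Then I would handle each piece in turn. For $S^*$, the weak law of large numbers gives $S^* = o_P(1)$ since $\varphi(Z;P)$ has finite variance under \Cref{ass:overlap} and \Cref{ass:bounded_outcomes}. For $T_1$, \Cref{ass:empirical_proc_mean_dual} directly yields $T_{1k} = o_P(n^{-1/2}) = o_P(1)$ for each $k$, hence $T_1 = o_P(1)$. For $T_2$, I would reuse the Cauchy-Schwarz bound \eqref{eq:T2_cauchy_schwarz_dual}, namely
\begin{equation*}
|T_{2k}| \;\leq\; \tfrac{1}{\eta}\,\|\what\pi^{C'}_{-k,n} - \pi\|_{L_2(P)}\,\|\what\mu_{-k,n} - \mu\|_{L_2(P)},
\end{equation*}
and conclude $T_{2k} = o_P(1)$ by \Cref{ass:risk_decay_dual}, which is exactly the weaker product-error condition that permits consistency of either nuisance to suffice.

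The main (mild) subtlety is verifying that none of the steps silently invoked the strong rate conditions of \Cref{ass:consistency_dual}. In particular, I must check that the Taylor expansion and the two $o_P$ arguments for $T_1$ and $T_2$ remain valid when we do not assume both nuisances are individually consistent, only that their product error vanishes. This is the one place where double robustness really bites: if, say, $\what\mu_{-k,n}$ is misspecified but $\what\pi^{C'}_{-k,n} \to \pi$ in $L_2(P)$, then \Cref{ass:risk_decay_dual} still gives $T_{2k} = o_P(1)$, and everything else goes through unchanged. I do not anticipate a genuine technical obstacle beyond bookkeeping, since the structure of the proof is identical to that of \Cref{thm:dr}.
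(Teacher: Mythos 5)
Your proposal is correct and takes essentially the same route as the paper: the paper proves this result by declaring it "completely analogous" to the proof of Theorem~\ref{thm:dr}, i.e.\ reuse the decomposition $\what\psi^{C'}_n-\psi=S^*+T_1+T_2$ from the efficiency proof, note that the dual constraint kills the empirical canonical-gradient term without invoking any rate condition, and downgrade each remaining bound from $o_P(n^{-1/2})$ to $o_P(1)$ using the LLN for $S^*$, Assumption~\ref{ass:empirical_proc_mean_dual} for $T_1$, and the Cauchy--Schwarz bound \eqref{eq:T2_cauchy_schwarz_dual} with Assumption~\ref{ass:risk_decay_dual} for $T_2$. Your added check that no step silently relies on Assumption~\ref{ass:consistency_dual} is exactly the right point of care and is consistent with what the paper intends.
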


Showing double robustness of the dual C-Learner is completely analogous to \Cref{sec:double_robustness}.

\label{sec:proofs_dual-end}

\section{Additional Details and Results of Experiments}
\label{sec:exp_details}
\ifdefined\msom\else See \Cref{sec:ci} for how point estimates and confidence intervals are calculated for the estimators in \Cref{sec:experiments}. 
\fi

\subsection{Experiments for \citet{KangSc07}}
\ifdefined\pnas
\subsubsection{Data Generating Process}
\label{sec:ks_dgp}
The \textit{true} outcome and treatment  mechanisms to depend on covariates $\xi \sim N(0, I) \in \mathbb R^4$ and $\varepsilon\sim N(0,1)$
\begin{align*}
Y & = 210 + 27.4 \xi_{1} + 13.7 \xi_{2} + 13.7 \xi_{3} + 13.7 \xi_{4} + \varepsilon, \\
\pi(\xi) & = \frac{\exp(-\xi_{1} + 0.5 \xi_{2} - 0.25 \xi_{3} - 0.1 \xi_{4})}{1+\exp(-\xi_{1} + 0.5 \xi_{2} - 0.25 \xi_{3} - 0.1 \xi_{4})}.
\end{align*}
Here, $P[Y(1)]=P[Y\mid A = 1] = 200$ and $P[Y] = 210$ so that a naive average of the treated units is biased by -10. For a random sample of 100 data points, the true propensity score can be as low as 1\% and as high as 95\%. Instead of observing $\xi$, the modeler observes
$$
X_{1} = \exp(\xi_{1}/2),\;X_{2} = \xi_{2}/(1+\exp(\xi_{1})) + 10,\;X_{3} = (\xi_{1}\xi_{3}/25 + 0.6)^3,\;X_{4} = (\xi_{2} + \xi_{4} + 20)^2.
$$
\else\fi
\subsubsection{Implementation Details}
\label{sec:ks_experiment_details}

\paragraph{Linear Outcome Models.} Here, $P_{\rm train}=P_{\rm eval}$, as consistent with the original paper by \citet{KangSc07}\footnote{Although sample splitting could be better, we have chosen to replicate \cite{KangSc07} as closely as possible.}. There is no $P_{\rm val}$ as there are no hyperparameters to tune. 

For the direct method and initial outcome model for AIPW, self-normalized AIPW, and TMLE, we simply regress the dependent variable $Y$ on the observed covariates $X$ using the samples with labels. We use all the samples available to fit a logistic regression model for the treatment variable $A$ (outcome was observed) using the covariates $X$. Following \cite{KangSc07}, in both models, we omit the intercept term when fitting the propensity scores and initial outcome model. Our main insights do not change if an intercept is added to the outcome and propensity models.

We run datasets with 200 and 1000 observations, with and without a truncation threshold of $5\%$. For each configuration, we generate 1000 seeds.

Table~\ref{tb:linear_simulation_full} displays the results for the bias, mean absolute error, root mean squared error (RMSE), and median absolute error over 1000 simulations with sample sizes equal to 200 and 1000. We also include the results when truncating to 5\% is performed, meaning that $\what \pi(X)$ is truncated to have a lower bound of 0.05. Note that truncation greatly improves the performance of propensity-based methods such as AIPW and TMLE. Without truncation, C-Learner is the best method across all samples when compared to other unbiased methods and is outperformed only by the direct method. When truncation is introduced, the C-Learner is similar in performance to other asymptotically optimal methods.

For the coverage computation, we construct the confidence intervals as described in \cref{sec:ci} and empirically check for each method if it covers the true population mean.
 Table~\ref{tb:coverage_ks} presents the coverage results for a 95\% confidence level. It is worth noticing that coverage results greatly deteriorate when the sample size increases for all asymptotically optimal methods, most likely because both the outcome model and the propensity model are mispecified, and consistency is ensured when at least one of the models is consistent (see \cref{sec:theory_text}).

 Finally, Table~\ref{tb:flipped_ks} displays the results for bias and mean absolute error (MAE) for estimating $P[Y(0)]$ rather than $P[Y(1)]$. In this case, all the asymptotic optimal methods perform very similarly, and all are better than the direct method with OLS. To conclude, we emphasize that the C-Learner is the only asymptotically optimal method that demonstrated good performance across all configurations of data size, truncation procedure, and data-generating process by either achieving comparable performance to other asymptotically optimal methods, or by improving performance, sometimes by an order of magnitude.

\paragraph{Linear Models with Logistic Link.} In order to instanciate the TMLE and the C-Learner using the logistic link, first we normalize the observed outcome $Y$. In particular, for the observed dataset we compute $Y_{\max} = (1+\alpha)\max_{i:A_i = 1} \{Y_{i}\}$ and $Y_{\min} = (1-\alpha)\min_{i:A_i = 1} \{Y_{i}\}$, where the scaling parameter $\alpha$ accounts for the fact that for the observed dataset, both the empirical maximum and empirical minimum are most likely biased. We chose this procedure to match previous work in the literature, such as the one proposed in \citet{VanDerLaanRo11}, and it does not mean to provide unbiased estimators for such estimates. Observed outcomes are then rescaled by computing $\tilde Y_i = (Y_i - Y_{\min})/(Y_{\max}-Y_{\min})$. In the experiments reported in \Cref{sec:experiments}, we use $\alpha = .1$ as this was the parameter used in \citet{VanDerLaanRo11} when analyzing the same dataset. Different values of $\alpha$ ranging from $0.01$ to $0.2$ leads to the same insights we provide in the main text, with the C-Learner-L dominating the MAE obtained with TMLE-L. The default $\alpha$ for the \texttt{tmle} package available in R~\citep{gruber2012tmle} is 0.01. 

\paragraph{TMLE-L.} Abusing notation, define $\sigma(X) := 1/(1+\exp(-X))$, which is the logistic link function. As an starting point, one solve for the fractional logistic regression using the scaled outcomes $\tilde Y$:

\[
\min_{\theta} \left\{ - \sum_{i=1}^{n} A_i\left[ \tilde Y_i \log \sigma(\theta^\top x_i) + (1 - \tilde Y_i) \log (1 - \sigma(\theta^\top X_i)) \right] \right\},
\]
which leads to $\what{\mu}(X_i) = 1/(1+\exp(-X^\top \theta))$. The second step is to define a parametric submodel and find the optimal fluctuation parameter $\epsilon$. Define
$$
\what{\mu}(X_i,H_i,\epsilon) := \frac{1}{1+e^{-\log \tfrac{\what{\mu}(X_i)}{1-\what{\mu}(X_i)}+\epsilon H_i}},
$$
and note that $\what{\mu}(X_i,H_i,\epsilon)$ is a shift of $\what{\mu}(X_i)$ under the logit transformation in the direction of $H_i$. Therefore, under the logistic link, the optimal fluctuation can be fitted by solving
\begin{equation}
\label{eqn:targeting_logistic}
    \epsilon^\star:=\argmin_{\epsilon \in \R} 
    ~~-\frac{1}{n}\sum_{i=1}^n  A_i \left( 
    \tilde Y_i \log \what{\mu}(X_i,H_i,\epsilon) - (1-\tilde Y_i)\log (1-\what{\mu}(X_i,H_i,\epsilon))\right).
\end{equation}

It is straightforward to verify that the first-order conditions for $\epsilon$ imply that the empirical evaluation of \eqref{eqn:ate-pathwise} at $\what{\mu}(A_i,X_i,H_i,\epsilon^\star)$ is zero. Next, the TMLE procedure is executed as usual by evaluating the fluctuated model on the whole sample. To implement TMLE with the logistic link, we use the \texttt{tmle} package available in R~\citep{gruber2012tmle}. 

\paragraph{C-Learner-L.} For the C-Learner-L implementation we solve
\begin{align*}
\theta^\star = \argmin_{\theta} \; - &\sum_{i=1}^{n} A_i\left[ \tilde Y_i \log \sigma(\theta^\top X_i) + (1 - \tilde Y_i) \log (1 - \sigma(\theta^\top X_i)) \right]\;\text{ such that }\\ 
&\sum_{i=1}^{n} \tfrac{A_i}{\hat \pi(X_i)}\left( \tilde Y_i -\sigma(\theta^\top X_i) \right)=0. 
\end{align*}

Note that this is a convex optimization problem with nonlinear constraints. In order to solve it, we use a Sequential Least Squares Programming algorithm available in the R package \texttt{nloptr} \cite{nloptr2023}.

\ifdefined\response\newpage\newpage\clearpage\pagebreak\else\fi
\label{sec:dual_details}
\paragraph{C-Learner-Dual And Covariate Balancing Methods}

We describe the implementation of the C-Learner-Dual for linear propensity models under the logistic link. We also discuss the implementation of the benchmark methods. For all models, we also implement their self-normalized version, in which the propensities sum to one.

\paragraph{CBPS} We implement the CBPS approach of \cite{imai2014covariate} using the \texttt{CBPS} package available in R~\citep{gruber2012tmle}. We use a linear specification and the over-identified model that is optimized for both propensity score fitting and the balancing constraint with the two-step procedure, following the same procedure as in the package documentation for the KS dataset.

\paragraph{Linear Fluctuation of Propensity Scores} We implement a one‐step calibration of the initial propensity scores via a low‐dimensional fluctuation. The procedure resembles in spirit the parametric submodel approach of TMLE for outcome models for achieving the semiparametric variance bound. We follow \cite{tan2010bounded}'s construction for $\what\mu_{LIK2}$: Define the calibration covariates and form the extended propensity model
    \[
      \omega_i(\lambda)
      = \hat\pi_i + H_i^\top\lambda,
      \quad
      \ell(\lambda)
      = \sum_{i=1}^n\bigl[A_i\log\omega_i + (1-A_i)\log(1-\omega_i)\bigr].
    \]
where $H_i=(1-\what\pi(X_i))\what\mu(X_i)$ and $\pi_i$ is the vector of propensities fitted by standard MLE. We maximize likelihood $\ell(\lambda)$ with respect to $\lambda$ to obtain $\hat\lambda$. In particular, we use the Nelder-Mead method available in the package \texttt{optim}  in R. 

Next, we fix $\omega$ and proceed to the second step, and we solve 
\[\max_{\lambda_{step}}  \sum_i \left[ T \frac{\log(\pi_i + \lambda_{step}' [1,\mu(X_i)]^\top) - \log(\omega_i)}{1-\pi_{i}} - \lambda_{step}' [1,\mu(X_i)]^\top \right].\]
Define $\omega_{step} = \pi_i + \lambda_{step}' [1,\mu(X_i)]^\top$. The first order condition implies that \[
\frac{1}{n}
\sum_{i=1}^n
\Bigl(1 - \frac{A_i}{\omega_{step}(X_i)}\Bigr)
\,\hat\mu(X_i)
\;=\;0,
\]
which is the C-Learner-Dual constraint. 

Similar to the C-Learner, the resulting estimator attains the semiparametric variance bound, and is doubly robust. Our implementation for linear fluctuations differs from \cite{tan2010bounded}'s construction of $\what\mu_{LIK2}$ only in that we omit the $h_2$ component in the first step, which, in their paper, makes the resulting estimator sample-bounded (in the sense that estimator values are within the range of outcome values seen in the sample); this omission is for a more fair comparison between estimators.

\paragraph{C-Learner-Dual} First, we parametrize the propensity‐score model by
\[
\pi_{\beta_{\text{C-Learner-Dual}}}(X_i)
\;=\;
\frac{\exp\bigl(X_i^\top \beta_{\text{C-Learner-Dual}}\bigr)}
     {1 + \exp\bigl(X_i^\top \beta_{\text{C-Learner-Dual}}\bigr)},
\]
and fit \(\beta_{\text{C-Learner-Dual}}\) by maximizing the Bernoulli log‐likelihood on the treatment indicators \(A_i\).  Equivalently, we minimize the negative log‐likelihood
\[
-\ell\bigl(\beta_{\text{C-Learner-Dual}}\bigr)
=
- \sum_{i=1}^n \Bigl[A_i\,\eta_i 
- \log\bigl(1 + e^{\eta_i}\bigr)\Bigr],
\qquad
\eta_i \;=\; X_i^\top \beta_{\text{C-Learner-Dual}}.
\]
Because this objective is smooth and convex, we also supply its gradient in closed form:
\[
\nabla_{\beta_{\text{C-Learner-Dual}}}
\bigl[-\ell(\beta_{\text{C-Learner-Dual}})\bigr]
=
-\,X^\top\bigl(A - p(\beta_{\text{C-Learner-Dual}})\bigr),
\quad
p(\beta_{\text{C-Learner-Dual}})_i
=
\frac{e^{\eta_i}}{1 + e^{\eta_i}}.
\]

Next, to ensure semiparametric efficiency of the resulting inverse–propensity‐weighted estimator, we impose the finite‐sample balance constraint
\[
\frac{1}{n}
\sum_{i=1}^n
\Bigl(1 - \frac{A_i}{\pi_{\beta_{\text{C-Learner-Dual}}}(X_i)}\Bigr)
\,\hat\mu(X_i)
\;=\;0,
\]
where \(\hat\mu(X_i)\) is a fixed outcome‐model prediction.  Rearranging shows this is equivalent (up to an additive constant) to
\[
\sum_{i=1}^n
A_i\,\hat\mu(X_i)\,e^{-X_i^\top \beta_{\text{C-Learner-Dual}}}
\;-\;
\sum_{i=1}^n
(1 - A_i)\,\hat\mu(X_i)
\;=\;
0.
\]
We likewise provide the Jacobian of this constraint:
\[
\nabla_{\beta_{\text{C-Learner-Dual}}}
\bigl[\text{constraint}\bigr]
=
-\,\sum_{i=1}^n
A_i\,\hat\mu(X_i)\,e^{-X_i^\top \beta_{\text{C-Learner-Dual}}}\,X_i.
\]

Because we now have a convex, differentiable objective with one smooth equality constraint, we solve it using an augmented‐Lagrangian scheme. In particular, we use the \texttt{nloptr} package (\cite{nloptr2023}) with the NLOPT\_LD\_AUGLAG solver. Tolerance and maximum number of iterations are set to $10^{-8}$ and 1000, respectively. Finally, the solution $\widehat\beta_{\text{C-Learner-Dual}}$ defines the dual C-Learner propensity estimate
\[
\widehat\pi^C(X_i)
=
\frac{\exp\bigl(X_i^\top \widehat\beta_{\text{C-Learner-Dual}}\bigr)}
     {1 + \exp\bigl(X_i^\top \widehat\beta_{\text{C-Learner-Dual}}\bigr)},
\]
and the average treatment effect is then estimated by
\label{sec:dual_details-end}
$$
\widehat\psi
=
\frac{1}{n}
\sum_{i=1}^n
\frac{A_i}{\widehat\pi^C(X_i)} \;Y_i.
$$

\ifdefined\response\newpage\newpage\clearpage\pagebreak\else\fi

\ifdefined\response\newpage\newpage\clearpage\pagebreak\else\fi

\paragraph{Gradient Boosted Regression Tree Outcome Models.} We demonstrate the flexibility and performance of the C-Learner in which we instantiate C-Learner using gradient boosted regression trees using the XGBoost package \citep{xgb} with a custom objective, as outlined in \cref{sec:methods_boosting}. $\what\pi$ is fit as a logistic regression on covariates $X$ using L1 regularization (LASSO). 

For each seed, and sample size, we randomly take half of the data for $P_{\rm train}$ 
and half of the data for $P_{\rm eval}$. For the first phase of Algorithm~\ref{alg:boosting}, we perform hyperparameter tuning using $P_{\rm val} = P_{\rm eval}$. Hyperparameter tuning is performed using a grid search for the following parameters: learning rate (0.01, 0.05, 0.1, 0.2), feature subsample by tree (0.5, 0.8, 1), and max tree depth (3, 4, 5). We also set the maximum number of weak learners to 2 thousand, and we perform early stopping using MSE loss on $P_{\rm val}$ for 20 rounds. Hyperparameter tuning is performed separately for C-Learner and the initial outcome model.

For the second phase of Algorithm~\ref{alg:boosting}, we use the set of hyperparameters found in the first stage. The weak learners are fitted using $P_{\rm eval}$ and $P_{\rm val} = P_{\rm eval}$. In order to avoid overfitting in the targeting step, we use a subsampling of 50\% and early stopping after 20 rounds.

In Table~\ref{tb:linear_simulation_xgb_trim} we provided additional details with truncation in order to understand the impact in the results of AIPW and TMLE that produced unreliable estimates for sample sizes of 200 and 1000 as well as the impact of truncation in the other estimates. When ``small'' truncation is performed (0.1\%), C-Learner is still better than any other estimator. When truncation to 5\% is performed, TMLE achieves the best pointwise performance, and C-Leaner and AIPW-SN are statically equal in terms of MAE.

Finally, in Table~\ref{tb:coverage_ks_xgb} we present the coverage results for the estimators computed without truncation and a sample size of 200 and 1000 presented in the main text on Table ~\ref{tb:linear_simulation_xgb}. For a sample size of 200, asymptotically optimal methods have similar coverage and substantially lower than the target of 95\%. When the sample size increases, the coverage results of C-Learner deteriorate, although it still presents the best performance in terms of MAE.

\subsubsection{Estimator Performance When Varying Overlap Between Treatment and Control}
\label{sec:ks_sensitivity}
In order to test the sensitivity of different asymptotic optimal methods with respect to the propensity score, we modify the probability of treatment in the data generation by introducing a scaling parameter $c$ so that
\begin{align}
\pi(\xi) = \frac{\exp(c(-\xi_{1} + 0.5 \xi_{2} - 0.25 \xi_{3} - 0.1 \xi_{4}))}{1+\exp(c(-\xi_{1} + 0.5 \xi_{2} - 0.25 \xi_{3} - 0.1 \xi_{4}))},
\label{eq:ks_variant_c}
\end{align}
where we take $c \in \{0.25, 0.5, 0.75, 1, 1.25, 1.5, 1.75\}$. The case in which $c = 1$ matches the standard setup of \cite{KangSc07}. As discussed in \Cref{sec:ks}, the closer the value of the scaling $c$ to 0, the more concentrated the treatment probabilities are to 50\%. In the other direction, the higher the value of $c$, the more extreme propensities are observed, increasing the variance of the propensity scores, pushing the overlap assumption to the limit and making the problem of causal estimation more challenging.

For each value of the parameter $c$, we generate 1000 seeds of sample size 200 with no truncation. We use linear outcome models and logistic propensity models. The results for the mean absolute error are displayed in
Figure~\ref{fig:mae_vs_prop}. In Table~\ref{tb:prop_stats_ks} and Table~\ref{tb:1_prop_stats_ks} we show the main statistics for the fitted propensity scores $\what \pi (X)$ and $1\what \pi(X)$ as a function of the scaling parameter. The statistics are taken over all observations (treated and non-treated).
  
\begin{table}[ht]
\centering
\footnotesize
\begin{tabular}{lrrrrr}
  \toprule
Scaling $c$ & Stdev $\what\pi(X)$  & Min $\what\pi(X)$ & Max $\what\pi(X)$ & CVar $\what\pi(X)$ \\ 
  \midrule
0.25 & 0.08 (0.001) & 0.2021 (0.003) & 0.70 (0.003) & 0.296 (0.003) \\ 
  0.50 & 0.13 (0.001) & 0.0756 (0.002) & 0.77 (0.002) & 0.186 (0.002) \\ 
  0.75 & 0.17 (0.001) & 0.0240 (0.001) & 0.83 (0.002) & 0.102 (0.002)\\ 
  1.00 & 0.21 (0.001) & 0.0066 (0.000) & 0.88 (0.002) & 0.055 (0.001)\\ 
  1.25 & 0.24 (0.001) & 0.0018 (0.000) & 0.92 (0.001)& 0.029 (0.001)\\ 
  1.50 & 0.26 (0.001) & 0.0006 (0.000) & 0.94 (0.001)& 0.016 (0.001)\\ 
  1.75 & 0.28 (0.001) & 0.0002 (0.000) & 0.96 (0.001) & 0.009 (0.000) \\ 
   \bottomrule
\end{tabular}
\caption{Summary statistics of $\what\pi(X)$ for different values of scaling parameter $c$ from Equation~\eqref{eq:ks_variant_c}. We show means across 1000 dataset draws of size 200, with standard error in parentheses. CVar refers to the mean of the 5\% of smallest values.}
\label{tb:prop_stats_ks}
\end{table}
  
\begin{table}[ht]
\centering
\footnotesize
\begin{tabular}{lrrrrr}
  \toprule
Scaling $c$ & Stdev $1/\what\pi(X)$  & Min $1/\what\pi(X)$ & Max $1/\what\pi(X)$ &  CVar $1/\what\pi(X)$  \\ 
  \midrule
0.25 & 0.5 (0) & 1.42 (0.005) & 5 (1.10) & 4 (0.128)\\ 
  0.50 & 1.3 (1)& 1.30 (0.004) & 13 (7.57) & 6 (1.05) \\ 
  0.75 & 3.9 (11)& 1.20 (0.003) & 42 (148)& 15 (16.3)\\ 
  1.00 & 13.2 (1016)& 1.13 (0.002) & 152 (14325) & 41 (1453) \\ 
  1.25 & 44.0 (16634)& 1.09 (0.001) & 543 (235232) & 117  (23538)\\ 
  1.50 & 136.3 (548595) & 1.06 (0.001) & 1692 (7758318) & 348 (775940) \\ 
  1.75 & 440.1 (94410206) & 1.04 (0.001)  & 5585 (1335161973) & 1003 (133516504) \\ 
   \bottomrule
\end{tabular}
\caption{Summary statistics of $1/\what\pi(X)$ for different values of scaling parameter $c$ from Equation~\eqref{eq:ks_variant_c}. We show means across 1000 dataset draws of size 200, with standard error in parentheses. CVar refers to the mean of the 5\% of largest values.}
\label{tb:1_prop_stats_ks}
\end{table}

\subsubsection{Additional Results}
\label{sec:ks_more_results} 
\begin{table}[H]
\footnotesize
\vspace{-0.5cm}
\begin{subtable}{\linewidth}
\centering
\caption{$N=200$, no truncation} 
\begin{tabular}{lrrrrrrr}
\toprule
Method &  \multicolumn{2}{c}{Bias} &  \multicolumn{2}{c}{Mean Abs Err} &  \multicolumn{2}{c}{RMSE} &  \multicolumn{1}{c}{Median Abs Err} \\
\midrule
Direct & -0.00 & (0.103) & 2.60 & (0.061) & 3.24 & (0.457) & 2.13 \\
IPW & 22.10 & (2.579) & 27.28 & (2.52)& 84.45 & (2733) & 9.87 \\ 
IPW-SN & 3.36 &(0.292) & 5.42 & (0.260) & 9.83 & (13.45) & 3.01 \\ 
\greymidrule
AIPW & -5.08 & (0.474) & 6.16 & (0.461) & 15.82 & (87.1) & 3.43 \\ 
AIPW-SN & -3.65 & (0.203) & 4.73 & (0.179) & 7.38 & (6.14) & 3.26 \\ 
TMLE & -111.59 & (41.073) & 112.15 & (41.1) & 1302.98 & ($10^6$) & 3.95 \\ 
C-Learner & \textbf{-2.45} & (0.120) & \textbf{3.57} & (0.088) & \textbf{4.52} & (0.912) & \textbf{2.93} \\ 
\bottomrule
\end{tabular}
\end{subtable}

\begin{subtable}{\linewidth}
\centering
\caption{$N=1000$, no truncation} %
\begin{tabular}{lrrrrrrr}
\toprule
Method & \multicolumn{2}{c}{Bias} &  \multicolumn{2}{c}{Mean Abs Err} &  \multicolumn{2}{c}{RMSE} &  \multicolumn{1}{c}{Median Abs Err} \\
\midrule
Direct & -0.43 & (0.044 )& 1.17 & (0.028) & 1.46 & (0.095) & 1.00 \\
IPW & 105.46 & (59.843) & 105.67 & (59.8) & 1894.40 & ($10^6$) & 17.95 \\ 
IPW-SN & 6.83 & (0.331) & 7.02 & (0.327) & 12.50 & (24.0) & \textbf{4.09} \\
\greymidrule
AIPW & -41.37 & (24.821) & 41.39 & (24.8) & 785.60 & ($10^5$) & 5.22 \\ 
AIPW-SN & -8.35 & (0.431) & 8.37 & (0.430) & 15.97 & (46.4) & 4.92 \\
TMLE & -17.51 & (3.493) & 17.51 & (3.49) & 111.77 & ($10^4$) & 4.25 \\ 
C-Learner & \textbf{-4.40} & (0.077) & \textbf{4.42} & (0.076) & \textbf{5.03} & (0.795) & 4.21 \\ 
\bottomrule
\end{tabular}
\end{subtable}

\begin{subtable}{\linewidth}
\centering
\caption{$N=200$, truncation threshold 5\%} %
\begin{tabular}{lrrrrrrr}
\toprule
 Method & \multicolumn{2}{c}{Bias} &  \multicolumn{2}{c}{Mean Abs Err} &  \multicolumn{2}{c}{RMSE} &  \multicolumn{1}{c}{Median Abs Err} \\
\midrule
Direct & -0.00 & (0.103) & 2.60 & (0.061) & 3.24 & (0.457) & 2.13 \\ 
IPW & 5.13 & (0.398) & 10.47 & (0.275) & 13.60 & (10.055) & 8.35 \\ 
IPW-SN & \textbf{1.17} & (0.132) & 3.33 & (0.087) & 4.32 & (0.969) & \textbf{2.59} \\ 
\greymidrule
AIPW & -2.45 & (0.121) & 3.59 & (0.088) & 4.54 & (0.913) & 3.02 \\ 
AIPW-SN & -2.37 & (0.118) & 3.50 & (0.085) & 4.42 & (0.859) & 2.92 \\ 
TMLE & -2.01 & (0.107) & \textbf{3.15} & (0.075) & \textbf{3.95} & (0.665) & 2.62 \\ 
C-Learner & -2.15 & (0.112) & 3.31 & (0.079) & 4.14 & (0.734) & 2.83 \\ 
\bottomrule
\end{tabular}
\end{subtable}

\begin{subtable}{\linewidth}
\centering
\caption{$N=1000$, truncation threshold 5\%} %
\begin{tabular}{lrrrrrrr}
\toprule
 Method & \multicolumn{2}{c}{Bias} &  \multicolumn{2}{c}{Mean Abs Err} &  \multicolumn{2}{c}{RMSE} &  \multicolumn{1}{c}{Median Abs Err} \\
\midrule
Direct & -0.43 & (0.044) & 1.17 & (0.028) & 1.46 & (0.095) & 1.00 \\ 
IPW &  8.35 & (0.179) & 8.67 & (0.163) & 10.08 & (3.413) & 8.23 \\ 
IPW-SN & \textbf{1.95} & (0.062) & 2.27 & (0.050) & 2.76 & (0.293) & \textbf{2.01} \\
\greymidrule
AIPW & -3.41 & (0.053) & 3.44 & (0.051) & 3.80 & (0.378) & 3.44 \\ 
AIPW-SN &  -3.31 & (0.052) & 3.34 & (0.050) & 3.70 & (0.358) & 3.35 \\ 
TMLE & -2.81 & (0.046) & \textbf{2.85} & (0.044) & \textbf{3.17} & (0.274) & 2.86 \\ 
C-Learner & -3.07 & (0.049) & 3.10 & (0.047) & 3.43 & (0.310) & 3.10 \\ 
\bottomrule
\end{tabular}
\end{subtable}
    \caption{Estimator performance in 1000 tabular simulations for the linear specification of outcome models. ``truncation'' refers to truncation $\what\pi(X)$ away from 0. Standard error is in parentheses. Asymptotically optimal methods are listed beneath the horizontal divider. We highlight the best-performing overall method \emph{other than the direct method} in \textbf{bold}.} %
    \label{tb:linear_simulation_full}
    \vspace{-1em}
\end{table}

\begin{table}[H]
\centering
\small
\begin{tabular}{lrrrr}
\toprule
  & \multicolumn{2}{c}{$N=200$} & \multicolumn{2}{c}{$N=1000$} \\
  \midrule 
  Method & No truncation & 5\% truncation & No truncation & 5\% truncation \\ 
  \midrule
  Direct & 0.89 & 0.89 & 0.88 & 0.88 \\ 
  IPW & 1.00 & 1.00 & 1.00 & 1.00 \\ 
  IPW-SN & 1.00 & 1.00 & 1.00 & 1.00 \\ 
  \greymidrule
  AIPW & 0.88 & 0.90 & 0.59 & 0.56 \\ 
  AIPW-SN & 0.92 & 0.91 & 0.72 & 0.58 \\  
  TMLE & 0.84 & 0.90 & 0.48 & 0.60 \\ 
  C-Learner & 0.91 & 0.90 & 0.68 & 0.58 \\ 
\bottomrule
\end{tabular}
\caption{Coverage results for 1000 simulations in the linear specification. Confidence intervals were set to achieve 95\% confidence. Asymptotically optimal methods are listed beneath the horizontal divider.}
\label{tb:coverage_ks}
\end{table}

\begin{table}[H]
\footnotesize
\centering
\hfill
\begin{subtable}{.53\linewidth}
\centering
\caption{$N=200$, truncation threshold = 5\%} %
\begin{tabular}{lrrrr}
\toprule
 Method & \multicolumn{2}{c}{Bias} & \multicolumn{2}{c}{Mean Abs Err} \\
\cmidrule(lr){1-1}
\cmidrule(lr){2-3}
\cmidrule(lr){4-5}
Direct & -6.10 & (0.10) & 6.18 & (0.10)\\
IPW & 4.45 & (0.75) & 17.8 & (0.52) \\
IPW-SN & -3.40 & (0.16) & 5.06 & (0.10)\\
Lagrangian & -4.94 & (0.10) & 5.14 & (0.09) \\
\greymidrule
AIPW & \textbf{-1.80} & (0.14) & 3.85 & (0.09) \\
AIPW-SN & -2.10 & (0.12) & 3.64 & (0.08) \\
TMLE & -2.84 & (0.10) & \textbf{3.51} & (0.08) \\
C-Learner & -3.10 & (0.10) & 3.68 & (0.08) \\
\bottomrule
\end{tabular}
\end{subtable}
\hspace{-3.15em}
\begin{subtable}{.42\linewidth}
\centering
\caption{$N=200$, truncation threshold = 0.1\%} %
\begin{tabular}{rrrr}
\toprule
 \multicolumn{2}{c}{Bias} & \multicolumn{2}{c}{Mean Abs Err} \\
\cmidrule(lr){1-2}
\cmidrule(lr){3-4}
-6.10 & (0.10) & 6.18 & (0.10)\\
41 & (5.82) & 54.2 & (5.72) \\
-1.10 & (0.38) & 7.20 & (0.31)\\
-4.96 & (0.10) & 5.16 & (0.09) \\
\greymidrule
4.82 & (1.27) & 10.4 & (1.23) \\
\textbf{-0.57} & (0.26) & 5.02 & (0.21) \\
-1.42 & (0.17) & 4.19 & (0.12) \\
-3.24 & (0.09) & \textbf{3.79} & (0.08) \\
\bottomrule
\end{tabular}
\end{subtable}
\hfill
\hspace*{\fill}
\caption{Comparison of estimator performance on misspecified datasets from \citet{KangSc07} in 1000 tabular simulations using gradient boosted regression trees with 200 samples, 5\%, and 0.1\% truncation threshold. Asymptotically optimal methods are listed beneath the horizontal divider. We highlight the best-performing method in \textbf{bold}. Standard errors are displayed within parentheses to the right of the point estimate. ``Lagrangian'' refers to only performing the first stage in Algorithm~\ref{alg:boosting}.
}

    \label{tb:linear_simulation_xgb_trim}
\end{table}

\begin{table}[H]
\footnotesize
\hfill
\hspace*{\fill}
\begin{subtable}{.42\linewidth}
\centering
\caption{$N=200$}
\begin{tabular}{lrrrr}
\toprule
Method & \multicolumn{2}{c}{Bias} & \multicolumn{2}{c}{Mean Abs Err} \\
\cmidrule(lr){1-1}
\cmidrule(lr){2-3}
\cmidrule(lr){4-5}
Direct & 4.82 & (0.091) & 4.94 & (0.085) \\ 
IPW & \textbf{-0.70} & (0.141) & 3.44 & (0.092)  \\ 
IPW-SN & 2.48 & (0.097) & \textbf{3.23} & (0.072)  \\
    \greymidrule
AIPW & 3.23 & (0.093) & 3.63 & (0.077)  \\ 
AIPW-SN & 3.22 & (0.092) & 3.61 & (0.076)  \\ 
TMLE & 3.11 & (0.091) & 3.50 & (0.076)  \\ 
C-Learner & 3.19 & (0.092) & 3.58 & (0.076)  \\ 
\bottomrule
\end{tabular}
\end{subtable}
\begin{subtable}{.42\linewidth}
\centering
\caption{$N=1000$} %
\begin{tabular}{rrrr}
\toprule
 \multicolumn{2}{c}{Bias} & \multicolumn{2}{c}{Mean Abs Err} \\
\cmidrule(lr){1-2}
\cmidrule(lr){3-4}
 4.80 & (0.041) & 4.80 & (0.041)  \\ 
 \textbf{-0.83} & (0.054) & \textbf{1.52} & (0.036)  \\  
2.43 & (0.042) & 2.47 & (0.040)  \\
    \greymidrule
3.14 & (0.041) & 3.15 & (0.040) \\ 
 3.12 & (0.041) & 3.13 & (0.040)  \\ 
 3.03 & (0.041) & 3.04 & (0.040)  \\ 
3.09 & (0.041) & 3.10 & (0.040) \\
\bottomrule
\end{tabular}
\end{subtable}
\hfill
\hspace*{\fill}
\caption{Results of 1000 simulations for estimating $P[Y(0)]$ instead of $P[Y(1)]$  with the linear specification. The standard error is in parentheses. Asymptotically optimal methods are listed beneath the horizontal divider.}
\label{tb:flipped_ks}
\end{table}

\begin{table}[H]
\centering
\small
\begin{tabular}{lrrrr}
\toprule
 \multicolumn{1}{c}{Method} & \multicolumn{2}{c}{$N=200$} & \multicolumn{2}{c}{$N=1000$} \\
 \cmidrule(lr){1-1}
\cmidrule(lr){2-3}
\cmidrule(lr){4-5}
  Direct & 0.35 & (0.01) & 0.12 & (0.01) \\ 
  IPW & 1.00 & (0.00) & 0.98 & (0.01) \\ 
  IPW-SN & 0.85 & (0.01) & 1.00 & (0.00) \\ 
Lagrangian & 0.84 & (0.01) & 0.71 & (0.01) \\ 
  \greymidrule
  AIPW & 0.85 & (0.01) & 0.85 & (0.01) \\ 
  AIPW-SN & 0.85 & (0.01) & 0.85 & (0.01) \\  
  TMLE & 0.86 & (0.01) & 0.87 & (0.01) \\ 
  C-Learner & 0.82 & (0.01) & 0.77 & (0.01) \\ 
\bottomrule
\end{tabular}
\caption{Coverage results for 1000 simulations using gradient boosted regression trees. Confidence intervals were set to achieve 95\% confidence. Asymptotically optimal methods are listed beneath the horizontal divider. ``Lagrangian'' refers to only performing the first stage in Algorithm~\ref{alg:boosting}.} %
\label{tb:coverage_ks_xgb}
\end{table}

\paragraph{Balancing results, linear outcome model}
\ifdefined\response\newpage\newpage\clearpage\pagebreak\else\fi
\label{sec:linear_cbps_ks}
\begin{table}[t]
\footnotesize
\centering
\hfill
\begin{subtable}{.53\linewidth}
\centering
\caption{$N=200$} %
\begin{tabular}{lrrrr}
\toprule
 Method & \multicolumn{2}{c}{Bias} & \multicolumn{2}{c}{Mean Abs Err} \\
\cmidrule(lr){1-1}
\cmidrule(lr){2-3}
\cmidrule(lr){4-5}
CBPS-IPW & -6.92 & (0.26) & 8.52 & (0.21) \\
CBPS-IPW-SN & -2.98 & (0.10) & 3.66 & (0.08) \\
\greymidrule
CBPS-AIPW & -1.68 & (0.36) & 3.11 & (0.24) \\
CBPS-AIPW-SN &  -1.69 & (0.36) & 3.11 & (0.24) \\ 
CBPS-TMLE &  -7.20 & (5.14) & 9.50 & (5.10) \\ 
CBPS-C-Learner &  -1.70 & (0.36) & 3.10 & (0.24) \\ 
\greymidrule
CBPS-TMLE-L & -2.01 & (0.33) & 3.07 & (0.24) \\ 
CBPS-C-Learner-L & \textbf{-1.85} & (0.34) & \textbf{2.97} & (0.24) \\ 
\bottomrule
\end{tabular}
\end{subtable}
\hspace{-3.15em}
\begin{subtable}{.42\linewidth}
\centering
\caption{$N=1000$} %
\begin{tabular}{rrrr}
\toprule
 \multicolumn{2}{c}{Bias} & \multicolumn{2}{c}{Mean Abs Err} \\
\cmidrule(lr){1-2}
\cmidrule(lr){3-4}
1.76 & (0.15) & 4.00 & (0.10) \\
\textbf{-1.55} & (0.06) & \textbf{1.88} & (0.04) \\
\greymidrule
-3.54 & (0.37) & 3.58 & (0.37) \\ 
-3.44 & (0.33) & 3.49 & (0.33) \\ 
-7.54 & (2.48) & 7.55 & (2.48) \\
-3.10 & (0.20) & 3.15 & (0.19) \\ 
\greymidrule
-2.76 & (0.15) & 2.79 & (0.14) \\ 
-2.76 & (0.18) & 2.83 & (0.17) \\ 
\bottomrule
\end{tabular}
\end{subtable}
\hfill
\hspace*{\fill}
\caption{%
Comparison of estimator performance on misspecified datasets from \citet{KangSc07} in 1000 tabular simulations, for linear outcome models (\Cref{sec:ks_linear}) using propensity scores fitted via covariate balancing. We include only methods that makes use of propensity scores.  We highlight the best-performing method in \textbf{bold}. Standard errors are displayed within parentheses to the right of the point estimate.
}
    \label{tb:linear_cbps}
\end{table}

As discussed in \Cref{sec:background}, covariate balancing techniques imply constraints, possibly via customized loss functions, to equate the covariates of samples in treatment and control for a given propensity score model $\hat \pi$, thus, informing how to constraint or estimate the propensity model $\hat \pi$. Naturally, methods that make use of propensity score such as IPW (the basic CBPS approach), AIPW, TMLE and C-Learner, may also benefit from a tailored propensity score function $\hat \pi$. 

Therefore, we also provide in \Cref{tb:linear_cbps} a comparison with the Covariate Balancing Propensity Score (CBPS). We discuss the implementation details of CBPS in \Cref{sec:ks_experiment_details}. Roughly speaking, all asymptotically optimal methods improve or maintain its performance when using propensity scores via covariate balancing, despite an increase in the MAE standard error. For the sample size of 200, C-Learner with covariate balancing and logistic loss is the best performance method for the MAE point estimate, although most of the asymptotically optimal methods are statistically the same. When the sample size is 1000, covariate balancing with self-normalization is the best performing method. %
\label{sec:linear_cbps_ks-end}
\ifdefined\response\newpage\newpage\clearpage\pagebreak\else\fi

\subsection{CivilComments Experiments (Section~\ref{sec:civilcomments})}
\subsubsection{Implementation Details}
\label{sec:nlp_details}

\paragraph{Model and Training Objectives}
Both propensity and outcome models are a linear layer on top of a DistilBERT featurizer, with either softmax and cross-entropy loss for propensity score, or mean squared error for outcome models. 
For training outcome models, we only consider training data on which $A=1$, as only those terms contribute to the loss. 
\ifdefined\pnas
    $\what\mu^C$ learned using the procedure described in \cref{sec:methods_nn}. 
    We train $\what \mu$
    using squared loss on labeled units, and propensity models $\what \pi$ using the logistic loss. 
    On each dataset draw, we use cross-fitting, as described in \cref{sec:theory_text}, with $K=2$ folds, for all estimators. 
\fi
\paragraph{Hyperparameters and Settings} 
For propensity models, we tried learning rates of $\{10^{-3},10^{-4},10^{-5}\}$
for the setting of $l=10^{-4}$ and found that a learning rate of $10^{-4}$ performed the best in terms of val loss. Because of computational constraints, we also used this learning rate for $l=10^{-2},10^{-3}$. 

For outcome models (including C-Learner) in the settings of $l\in\{10^{-2},10^{-3},10^{-4}\}$, we tried learning rates of $\{10^{-3},10^{-4},10^{-5},10^{-6}\}$. 
For C-Learner regularization, we tried $\lambda$ taking on values of $\lambda_0 / P_{\rm eval}[A/\hat\pi(X)]^2$ where $\lambda_0\in\{0,1,4,16,64,256\}$. 
Essentially, the penalty is $\lambda_0$ times the square of the bias shift that would be required to satisfy the condition. 

Because of computational constraints, for choosing these hyperparameters (learning rate, $\lambda$), we select a subset of dataset draws, and choose the best hyperparameters based on each model's criteria on that small set of dataset draws. Then, after the hyperparameters are chosen, we run over the entire set of dataset draws. 

While hyperparameters for the (usual) outcome model are chosen to minimize val loss, in contrast, the hyperparameters (learning rate, regularization $\lambda$) for the C-Learner outcome model were selected to minimize the \emph{magnitude of the constant shift} at the end of the first epoch, as in \cref{sec:methods_nn}. 
The idea is that if the size of the constant shift is small, then the regularizer is doing a good job of enforcing the constraint. 
Although one might think that larger regularization $\lambda$'s would automatically lead to smaller constant shifts, we do not find this to be the case; the best $\lambda$ in our settings (64) is not the largest value that we try (up to 256). 
Furthermore, the set of hyperparameter values we choose between in this process are ones that seem to result in reasonable performance, e.g. in terms of MSE on treated units in the validation set.

The best hyperparameters chosen are in Table~\ref{tab:cc_hparams}. 
We trained all models with batch size 64. Learning rates decayed linearly over 10 epochs. Optimization was done using the AdamW optimizer. Weight decay was fixed at $0.1$. 

We choose the training epoch with the best val loss for each model (cross entropy for propensity, and MSE for outcome). For C-Learner, we choose the epoch with the best val MSE.

\subsubsection{Additional Results}
\label{sec:more_nlp_results}
We consider three data generating processes as in \cref{sec:civilcomments}, where treatment assignment is parameterized by $l=10^{-4},10^{-3},10^{-2}$, respectively. 
Summary statistics of $\what\pi(X)$ are in \Cref{tab:civilcomments_prop}, and summary statistics of $1/\what\pi(X)$ are in \cref{tab:civilcomments_invprop}.
Hyperparameters for outcome models are in Table~\ref{tab:cc_hparams}. 
Comparisons of estimators in these settings are in Tables~\ref{tab:nn_4},~\ref{tab:nn_3},~\ref{tab:nn_2}.

\begin{table}[H]
\centering
\footnotesize
\begin{tabular}{lrrrr}
\toprule
Value of $l$ & Min $\what \pi(X)$ & Max $\what \pi(X)$ & Std $\what \pi(X)$ & CVar $\what \pi(X)$ \\
\midrule 
$l=10^{-2}$ &0.0014 (0.0001) &         0.94 (0.01) &         0.15 (0.002) &         0.0016 (0.0001)         \\
$l=10^{-3}$ &0.0004 (0.0000) &         0.95 (0.01) &         0.15 (0.002) &         0.0004 (0.0000)         \\
$l=10^{-4}$ &0.0003 (0.0000) &         0.96 (0.00) &         0.16 (0.002) &         0.0003 (0.0000)         \\
\bottomrule
\end{tabular}
\caption{Summary statistics of $\what\pi(X)$ for values of hyperparameter $l=10^{-4},10^{-3},10^{-2}$ for \cref{sec:civilcomments} experiments. We show means across 100 dataset draws for each value of $l$, with standard error in parentheses. CVar refers to the mean of the 5\% of smallest values. }
    \label{tab:civilcomments_prop}
\end{table}
\begin{table}[H]
\centering
\footnotesize
\begin{tabular}{lrrrr}
\toprule
Value of $l$ & Min $1/\what \pi(X)$ & Max $1/\what \pi(X)$ & Std $1/\what \pi(X)$ & CVar $1/\what \pi(X)$ \\
\midrule 
$l=10^{-2}$ &1.07 (0.01) &         1124.7\;\; (91.0) &         237.1 (23.4) &         1050.1 
 \; (85.3) \\
$l=10^{-3}$ &1.06 (0.01) &         3244.0 (190.5) &         757.7 (54.1) &         3090.9 (183.5) \\
$l=10^{-4}$ &1.05 (0.00) &         4126.9 (214.6) &         980.6 (61.4) &         3960.2 (205.7) \\
\bottomrule
\end{tabular}
\caption{Summary statistics of $1/\what\pi(X)$ for values of hyperparameter $l=10^{-4},10^{-3},10^{-2}$ for \cref{sec:civilcomments} experiments. We show means across 100 dataset draws for each value of $l$, with standard error in parentheses. CVar refers to the mean of the 5\% of largest values. }
    \label{tab:civilcomments_invprop}
\end{table}
\begin{table}[H]
\centering
\footnotesize
\begin{tabular}{lrrrrrr}
\toprule
 & $l=10^{-4}$ & $l=10^{-3}$ & $l=10^{-2}$ \\
 \midrule 
Outcome model learning rate & $10^{-3}$ & $10^{-3}$ & $10^{-4}$ \\
C-Learner learning rate (best val MSE) & $10^{-4}$ & $10^{-4}$ & $10^{-4}$ \\ 
C-Learner learning rate (min bias shift) & $10^{-5}$ & $10^{-5}$ & $10^{-5}$ \\ 
C-Learner $\lambda$ (best val MSE) & 0 & 16 & 16 \\
C-Learner $\lambda$ (min bias shift) & 64 & 64 & 64 \\
\bottomrule
\vspace{0.05em}
\end{tabular}
\caption{Hyperparameters for $l=10^{-4},10^{-3},10^{-2}$ for \cref{sec:civilcomments}}
    \label{tab:cc_hparams}
\end{table}

\begin{table}[H]
\centering
\footnotesize
\begin{tabular}{lrrrrrr}
\toprule
Method & \multicolumn{2}{c}{Bias} & \multicolumn{2}{c}{Mean Abs Err} & \multicolumn{2}{c}{Coverage}\\ 
\cmidrule(lr){1-1}
\cmidrule(lr){2-3}
\cmidrule(lr){4-5}
\cmidrule(lr){6-7}
Direct & 0.173 & (0.008) & 0.177 & (0.007) & 0.010 & (0.001)\\
IPW & 0.504 & (0.084) & 0.546 & (0.081) & 0.760 & (0.018)\\
IPW-SN & 0.114 & (0.017) & 0.153 & (0.014) & 0.890 & (0.010)\\
\greymidrule
AIPW & 0.084 & (0.043) & 0.307 & (0.032) & 0.830 & (0.014)\\
AIPW-SN & 0.116 & (0.018) & 0.161 & (0.014) & 0.850 & (0.013)\\
TMLE & -1.264 & (1.361) & 1.802 & (1.355) & 0.810 & (0.015)\\
C-Learner (best val MSE) & 0.103 & (0.015) & 0.141 & (0.011) & 0.870 & (0.011)\\
C-Learner (min bias shift) & \textbf{0.075} & (0.012) & \textbf{0.115} & (0.008) & 0.900 & (0.009)\\
\bottomrule
\vspace{0.05em}
\end{tabular}
\caption{Comparison of estimators in the CivilComments~\citep{civilcomments} semi-synthetic dataset over 100 re-drawn datasets, with $l=10^{-4}$. Confidence intervals were set to achieve 95\% confidence. Asymptotically optimal methods are listed beneath the horizontal divider. We highlight the best-performing overall method in \textbf{bold}. Standard errors are displayed within parentheses to the right of the point estimate.}
    \label{tab:nn_4}
\end{table}

\begin{table}[H]
\centering

\footnotesize
\begin{tabular}{lrrrrrr}
\toprule
Method & \multicolumn{2}{c}{Bias} & \multicolumn{2}{c}{Mean Abs Err} & \multicolumn{2}{c}{Coverage}\\ 
\cmidrule(lr){1-1}
\cmidrule(lr){2-3}
\cmidrule(lr){4-5}
\cmidrule(lr){6-7}
Direct & 0.147 & (0.028) & 0.200 & (0.025) & 0.000 & (0.000)\\
IPW & 0.417 & (0.064) & 0.437 & (0.063) & 0.840 & (0.013)\\
IPW-SN & 0.079 & (0.015) & 0.120 & (0.013) & 0.910 & (0.008)\\
\greymidrule
AIPW & \textbf{-0.056} & (0.052) & 0.344 & (0.039) & 0.950 & (0.005)\\
AIPW-SN & 0.089 & (0.020) & 0.134 & (0.018) & 0.950 & (0.005)\\
TMLE & 0.074 & (0.025) & 0.144 & (0.022) & 0.940 & (0.006)\\
C-Learner (best val MSE) & 0.067 & (0.012) & 0.110 & (0.009) & 0.980 & (0.002)\\
C-Learner (min bias shift) & 0.060 & (0.011) & \textbf{0.099} & (0.008) & 0.990 & (0.001)\\
\bottomrule
\vspace{0.05em}
\end{tabular}
\caption{Comparison of estimators in the CivilComments~\citep{civilcomments} semi-synthetic dataset over 100 re-drawn datasets, with $l=10^{-3}$. Confidence intervals were set to achieve 95\% confidence. Asymptotically optimal methods are listed beneath the horizontal divider. We highlight the best-performing overall method in \textbf{bold}. Standard errors are displayed within parentheses to the right of the point estimate.}
    \label{tab:nn_3}
\end{table}

\begin{table}[H]
\centering

\footnotesize
\begin{tabular}{lrrrrrr}
\toprule
Method & \multicolumn{2}{c}{Bias} & \multicolumn{2}{c}{Mean Abs Err} & \multicolumn{2}{c}{Coverage}\\ 
\cmidrule(lr){1-1}
\cmidrule(lr){2-3}
\cmidrule(lr){4-5}
\cmidrule(lr){6-7}
Direct & 0.091 & (0.007) & 0.099 & (0.006) & 0.040 & (0.004)\\
IPW & 0.346 & (0.041) & 0.353 & (0.040) & 0.710 & (0.021)\\
IPW-SN & 0.004 & (0.005) & 0.038 & (0.003) & 0.950 & (0.005)\\
\greymidrule
AIPW & -0.172 & (0.029) & 0.220 & (0.026) & 0.890 & (0.010)\\
AIPW-SN & 0.002 & (0.005) & \textbf{0.040} & (0.003) & 1.000 & (0.000)\\
TMLE & 0.027 & (0.005) & 0.042 & (0.003) & 0.990 & (0.001)\\
C-Learner (best val MSE) & 0.009 & (0.007) & 0.053 & (0.004) & 1.000 & (0.000)\\
C-Learner (min bias shift) & \textbf{0.001} & (0.006) & 0.045 & (0.003) & 1.000 & (0.000)\\
\bottomrule
\vspace{0.05em}
\end{tabular}
\caption{Comparison of estimators in the CivilComments~\citep{civilcomments} semi-synthetic dataset over 100 re-drawn datasets, with $l=10^{-2}$. Confidence intervals were set to achieve 95\% confidence. Asymptotically optimal methods are listed beneath the horizontal divider. We highlight the best-performing \emph{asymptotically optimal} method in \textbf{bold}. Standard errors are displayed within parentheses to the right of the point estimate. C-Learner's performance is within standard error of AIPW-SN and TMLE.}
    \label{tab:nn_2}
\end{table}
C-Learner errors are comparatively more stable for $l=10^{-4},10^{-3}$. For $l=10^{-2}$, C-Learner is comparable to other asymptotically optimal methods.

\ifdefined\msom\else
\subsection{IHDP Tabular Dataset}
\label{sec:ihdp_experiment}
The Infant Health and Development Program (IHDP) dataset is a tabular semisynthetic dataset~\cite{brooks1992effects} that was first introduced as a benchmark for ATE estimation by \citet{hill2011bayesian}. IHDP is based on a randomized experiment studying the effect of specialized healthcare interventions on the cognitive scores of premature infants with low birth weight. The dataset consists of a continuous outcome variable $Y$, a binary treatment variable $A$, and 25 covariates $X$ that affect the outcome variable and are also correlated with the treatment assignment. We use 1000 datasets %
as in %
\cite{shi2019adapting,chernozhukov2022riesznet}.

\subsubsection{Gradient Boosted Regression Trees}
\label{sec:ihdp_xgb}
We instantiate C-Learner using gradient boosted regression trees using the XGBoost package \citep{xgb} with a custom objective, as outlined in \cref{sec:methods_boosting}. $\what\pi$ is fit as a logistic regression on covariates $X$. %
Different to the applications presented in the main text, we are interested in estimating the ATE $P[Y(1) - Y(0)]$ where $Y(0)$ is nonzero. We discuss two alternatives to instantiate the C-Learner. First, we find an outcome model that takes as input both covariates and the treatment variable, which leads to the following formulation
\begin{equation*}\label{eq:c-learner-full}
\what \mu^C_{-k,n}=\argmin_{\tilde\mu\in\mathcal F} 
\left\{ P_{-k,n}[(Y-\tilde\mu(A,X))^2] : 
P_{k,n}\left[\left(\frac{A}{\what \pi(X)}-\frac{1-A}{1-\what \pi(X)}\right)(Y-\tilde\mu(A,X))\right]=0
\right\},
\end{equation*}
and the final estimator becomes
\begin{align*}
    \what\psi^C_n :=\frac{1}{K}\sum_{k=1}^K P_{k,n}[\what\mu_{-k,n}^C(1,X)-\what\mu_{-k,n}^C(0,X)].
\label{eq:c_learner_def_xgb}
\end{align*}
This approach is often referred to as the ``S-Learner".

Another alternative is to model the ATE as the difference of two missing outcomes and fit two outcome models for treated vs. non-treated units. This approach is commonly referred to as the ``T-Learner". In this case, we estimate two models by solving
\begin{align*}
\what{\mu}_{-k, n, 1}^C 
& \in \argmin_{\tilde{\mu} \in \mc{F}}
\left\{ P_{-k, n} [ A (Y - \tilde{\mu}(X))^2]:
P_{k,n}\left[\frac{A}{\what\pi_{-k,n}(X)}(Y-\tilde{\mu}(X))\right]=0
\right\}, \\
\what{\mu}_{-k, n, 0}^C 
& \in \argmin_{\tilde{\mu} \in \mc{F}}
\left\{ P_{-k, n} [ (1-A) (Y - \tilde{\mu}(X))^2]:
P_{k,n}\left[\frac{1-A}{1-\what\pi_{-k,n}(X)}(Y-\tilde{\mu}(X))\right]=0
\right\},
\end{align*}
and the final estimator becomes
\begin{align*}
    \what\psi^C_n :=P_{\rm eval}\frac{1}{K}\sum_{k=1}^K P_{k,n}[\what\mu_{-k,n,1}^C(X)-\what\mu_{-k,n,0}^C(X)].
\end{align*}
We empirically observe that the latter approach (T-Learner) is critical for performance in this setting: using a simple XGBoost T-Learner model significantly improves upon sophisticated S-Learner variants, including state-of-the-art methods such as RieszNet, RieszForest, and DragonNet~\cite{chernozhukov2022riesznet,shi2019adapting}.
 
For consistency with prior work ~\cite{van2006targeted,chernozhukov2022riesznet,shi2019adapting},
for each dataset run, we split the data in 80\% training and 20\% for hyperparameter tuning, and use the whole dataset when evaluating the first-order error term, i.e.,  $P_{\rm eval} = P_{\rm train} \cup P_{\rm val}$. For each outcome model, $\what{\mu}_{-k, n, 1}^C$ and $\what{\mu}_{-k, n, 0}^C$, we follow the same setup described in the Kang \& Schafer experiment. Hyperparameter tuning is performed using a grid search for the following parameters: learning rate (0.01, 0.05, 0.1, 0.2), feature subsample by tree (0.5, 0.8, 1), and max tree depth (3, 4, 5). We also set the maximum number of weak learners to 2000, and we perform early stopping using MSE loss on $P_{\rm val}$ for 20 rounds. Hyperparameter tuning is performed separately for C-Learner and the initial outcome model. For the second phase of Algorithm ~\ref{alg:boosting}, we use the set of hyperparameters found in the first stage. The weak learners are fitted using $P_{\rm eval}$. In order to avoid overfitting in the targeting step, we use a subsampling of 50\% and early stopping after 20 rounds.

 The results for the mean absolute error and their respective standard errors are displayed in Table~\ref{tb:ihdp}. %
The C-Learner improves upon the direct method while using an identical model class.
The asymptotically optimal estimators perform similarly, and C-Learner's advantage over other asymptotically optimal methods is not statistically significant. This is perhaps not surprising in this setting as the propensity weights do not vary as much here, in contrast to Kang \& Shafer's example where existing asymptotically optimal methods performed poorly as estimated propensity weights varied dramatically. 

\begin{table}[t]
\centering

\footnotesize
\begin{tabular}{lrrrr}
\toprule
Method & \multicolumn{2}{c}{Bias} & \multicolumn{2}{c}{Mean Abs Err} \\ 
\cmidrule(lr){1-1}
\cmidrule(lr){2-3}
\cmidrule(lr){4-5}
Direct (Boosting) & -0.16 & (0.004) & 0.188 & (0.004) \\
IPW & -1.50 & (0.040) & 1.500 & (0.040) \\ 
IPW-SN & -0.01 & (0.003) & 0.110 & (0.003) \\ 
Lagrangian & -0.09 & (0.004) & 0.144 & (0.004) \\
\greymidrule
AIPW & -0.05 & (0.002) & 0.103 & (0.003) \\ 
AIPW-SN & -0.04 & (0.002) & \textbf{0.103} & (0.003) \\ 
TMLE  & 0.007 & (0.003) & 0.103 & (0.003) \\ 
C-Learner & \textbf{-0.004} & (0.003) & 0.104 & (0.003) \\
\bottomrule
\vspace{0.1em}
\end{tabular}
\caption{Comparison of gradient boosted regression tree-based estimators in the IHDP semi-synthetic dataset over 1000 simulations. Asymptotically optimal methods are listed beneath the horizontal divider. We highlight the best-performing method in \textbf{bold}. Standard errors are displayed within parentheses to the right of the point estimate. ``Lagrangian'' refers to only performing the first stage in Algorithm~\ref{alg:boosting}. C-Learner’s performance is
within standard error of AIPW-SN.}
\label{tb:ihdp}
\end{table}

\subsubsection{Neural Networks}
\label{sec:ihdp_nn}
We also instantiate C-Learner as neural networks (\cref{sec:methods_nn}) on the IHDP dataset. 
In particular, we demonstrate how C-Learner can use Riesz representers (\ref{sec:other_estimands}), or equivalently, propensity models in the ATE setting, that are
learned by other methods.
This demonstrates the versatility of C-Learner as it is able to leverage new methods for learning Riesz representers. 
In Table~\ref{tb:ihdp_nn}, we demonstrate C-Learner using Riesz representers
learned by RieszNet~\citep{chernozhukov2022riesznet}. C-Learner achieves very similar performance to RieszNet while using the same Riesz representer. All methods use an outcome model trained in the usual way, except for RieszNet and C-Learner. In all settings where applicable, we use the Riesz representer ($A/\pi(X)-(1-A)/(1-\pi(X))$ learned by RieszNet.

We emphasize that this is not meant to be a performance comparison between RieszNet and C-Learner, for two reasons: first, we present C-Learner as a debiasing framework that is compatible with other methods for causal inference, including RieszNet; and second, we found that propensity scores fitted to the IHDP dataset are not extreme, so we expect C-Learner to perform as well as existing methods, but not to outperform existing methods. 

\begin{table}[H]
\centering
\footnotesize
\begin{tabular}{lrrrrrr}
\toprule
Method & \multicolumn{2}{c}{Bias} & \multicolumn{2}{c}{Mean Abs Err} & \multicolumn{2}{c}{Coverage} \\ 
\cmidrule(lr){1-1}
\cmidrule(lr){2-3}
\cmidrule(lr){4-5}
\cmidrule(lr){6-7}
Direct & -0.005 & (-0.000) & 0.118 & (0.003) & 0.783 & (0.025)\\
IPW & -0.789 & (-0.025) & 0.903 & (0.034) & 0.455 & (0.014)\\
IPW-SN & -0.449 & (-0.014) & 0.654 & (0.035) & 0.819 & (0.026)\\
\greymidrule
AIPW & -0.044 & (-0.001) & 0.106 & (0.003) & 0.940 & (0.030)\\
AIPW-SN & -0.042 & (-0.001) & 0.106 & (0.003) & 0.961 & (0.030)\\
RieszNet (``DR'') & \textbf{-0.033} & (-0.001) & \textbf{0.098} & (0.003) & 0.972 & (0.031)\\
C-Learner & -0.038 & (-0.001) & 0.098 & (0.002) & 0.955 & (0.030)\\
\bottomrule
\vspace{0.1em}
\end{tabular}
\caption{Comparison of neural network based estimators in the IHDP semi-synthetic dataset over 1000 simulations. Asymptotically optimal methods are listed beneath the horizontal divider. We highlight the best-performing \emph{asymptotically optimal} method in \textbf{bold}. Standard errors are displayed within parentheses to the right of the point estimate. All methods use an outcome model trained in the usual way, except for RieszNet and C-Learner. In all settings where applicable, we use the Riesz representer learned by RieszNet.}
\label{tb:ihdp_nn}
\end{table}

\paragraph{Training Procedure}
We use lightly modified RieszNet code \citep{chernozhukov2022riesznet} to generate the Riesz representers used in all methods. 
We use the same neural network architecture for RieszNet outcome models for C-Learner. 
We use $P_{\rm train}, P_{\rm val}, P_{\rm eval}$ as in \cref{sec:ihdp_xgb}. 
Hyperparameters (learning rate, $\lambda$) are selected for best mean squared error on $P_{\rm val}$ for each individual dataset. 
For the training objective, we largely follow \cref{sec:methods_nn}, with the following modifications for the full ATE rather than assuming that $Y(0)=0$: we use the mean squared error across the full dataset (as it was originally for RieszNet), and we have separate penalties and bias shifts for $A=1$ and $A=0$. 
As in RieszNet training, there is a phase of ``pre-training'' with a higher learning rate, and then a phase with a lower learning rate; for ``pre-training'' we sweep over learning rates of $\{10^{-3},10^{-4}\}$ for 100 epochs; regular training uses a learning rate of $10^{-5}$ for 600 epochs. Both use the Adam optimizer. For $\lambda$ we sweep over values of 
$\{0,0.01,0.02,0.04,0.1,0.2,0.4,1,2,4,8\}$.
Our implementation of RieszNet as a baseline is almost identical to that of the original paper, except for how we set the random seed, for initializing neural networks and for choosing train and test split.

\paragraph{Comparisons with results in RieszNet paper} 
Surprisingly, our mean absolute error results in \cref{sec:ihdp_nn},
including the ones using regular RieszNet (including their algorithm code and hyperparameters), often outperform the results reported in the original RieszNet paper \citep{chernozhukov2022riesznet}. 
One difference between our code and the RieszNet code is how random seeds were set. The random seed affects the train/test split, and also weight initialization for the neural network. We re-ran the original RieszNet code, and then again with a fixed seed (set to 123) for every dataset, and we found that the results were noticeably different. 
In particular, the RieszNet mean absolute error is smaller with seeds set the second way. 
Compare Table~\ref{tab:ihdp_orig} and Table~\ref{tab:ihdp_seeds}. 
In our experiments on this setting, we fix seeds in the second way. 
\begin{table}[H]
\centering
\footnotesize
\begin{tabular}{lr}
\toprule
Method & Mean absolute error (s.e.) \\
 \midrule 
Doubly robust & 0.115 (0.003) \\
Direct & 0.124 (0.004) \\
IPW & 0.801 (0.040) \\
\bottomrule
\vspace{0.05em}
\end{tabular}
\caption{Original RieszNet IHDP results}
    \label{tab:ihdp_orig}
\end{table}

\begin{table}[H]
\centering
\footnotesize
\begin{tabular}{lr}
\toprule
Method & Mean absolute error (s.e.) \\
 \midrule 
Doubly robust & 0.098 (0.003) \\
Direct & 0.118 (0.003) \\
IPW & 0.903 (0.034) \\
\bottomrule
\vspace{0.05em}
\end{tabular}
\caption{RieszNet IHDP results, with fixed seeds}
    \label{tab:ihdp_seeds}
\end{table}

\paragraph{Relative Performance}
We've seen in \cref{sec:civilcomments} that C-Learner tends to outperform one-step estimation and targeting in settings with low overlap. Here, it merely performs about the same as one-step estimation and targeting. It seems as though overlap is not very low in IHDP, however, based on propensity scores from trained models not taking on extreme values. 
Using RieszNet's learned Riesz representers, the smallest value for $\textrm{min}(\what\pi,1-\what\pi)$ across all IHDP datasets is 0.11, which is a fair bit of overlap, especially in comparison to, say, the settings with low overlap in \cref{sec:civilcomments} where C-Learner outperformed one-step estimation and targeting methods. 
Thus, here it is expected that C-Learner matches the performance of other methods. We further emphasize that C-Learner is compatible with methods such as RieszNet, making C-Learner generally applicable.
To recap, over all of our experiments, C-Learner appears to perform either comparably to or better than one-step estimation and targeting, and outperform one-step estimation and targeting in settings with less overlap.

\fi

\section{TMLE Extensions and Additional Connections}
\label{sec:tmle_comparison}

This section discusses existing TMLE extensions that can mitigate practical instabilities under limited overlap, and additional connections between C-Learner and the TMLE literature. Our paper's main focus is not on the extensions and variations of TMLE, but simply on a new way to debias estimators via constrained optimization, that could potentially be combined with additional assumptions, extensions, variations, etc. Nevertheless, these connections to TMLE and variations of TMLE merit mentioning.

A key benefit of C-Learner is that it reduces instability, for instance, when propensity score estimates become extreme. We compared C-Learner to \emph{basic} forms of one-step estimation and TMLE that do not incorporate additional bounding assumptions or regularization heuristics. The TMLE framework includes a number of practical heuristics and assumptions for dealing with limited overlap and bounded outcomes:
\begin{itemize}
    \item Alternative loss functions and logistic fluctuations: Standard TMLE implementations in software such as \texttt{tmle}~\citep{gruber2012tmle} %
    by default enforce certain bounds for continuous outcomes via a logistic fluctuation, %
    but can also perform the linear model as well. If outcomes are truly bounded, modeling outcomes using a logistic link will constraint the resulting estimates to be within these bounds; in contrast, methods such as IPW and AIPW do not obey such constraints. TMLE with logistic fluctuations is shown to produce stable estimators~\citep{VanDerLaanRo11}. 
    \item Regularization of TMLE: regularized variations and extensions of TMLE such as Collaborative TMLE (C-TMLE)~\citep{gruber2010application,ju2019adaptive,van2010collaborative,van2011propensity} have been shown to perform well in challenging settings, including those with low overlap~\citep{ju2019collaborative,balzer2023two}. Collaborative TMLE is a variant of TMLE that selects from a sequence of propensity score models of increasing complexity, based on their ability to improve the loss of the fluctuated outcome model, which can discourage selecting propensity score models with extreme values that hurt estimator stability. 
\end{itemize}
There is a broad literature of other regularized TMLE variants, many of which are designed to construct stable estimators that perform well in finite samples, such as Adaptive TMLE (A-TMLE)~\citep{van2023adaptive,van2024adaptive}, which adaptively selects between propensity scores of varying complexity. Additional advancements include TMLE methods based on the highly adaptive lasso (HAL)~\citep{benkeser2016highly,bibaut2019fast,van2017generally,van2023efficient}, which offer flexible and robust estimation including empirically in settings with low overlap. 
Variations of C-Learner can also be complementary to such procedures.

Lastly, we note that the theoretical justifications for asymptotic properties for both C-Learner and TMLE build on the same underlying mathematical frameworks, as both remove the first-order error term in the distributional Taylor expansion, and then show that the second-order remainder term is negligible under suitable assumptions. 
Proofs of this nature have appeared in the TMLE literature~\citep{van2006targeted,van2011targeted,van2016one,van2017generally,van2023higher}. The main difference between C-Learner and TMLE is how the first-order error term is fixed to zero: in C-Learner we impose this constraint directly within our chosen function class of nuisance parameters (via constrained optimization), rather than using a specific single-dimensional parametric fluctuation along the least favorable submodel~\citep{bickel1993efficient} that produces TMLE. %

\section{Point Estimates and Confidence Intervals} %
\label{sec:ci}
We define point estimates and calculate variance for the estimators considered in the experiments. These are used to calculate estimator error, confidence intervals, and coverage in the experiments. 
First, we combine folds: for all $X_i$ in $P_n$, if $X_i$ is in the $k$th fold, then let
$\what \pi_n(X_i):=\what \pi_{-k,n}(X_i)$. 
We do the same for $\what \mu, \what \mu^C$. 
This way, we construct $\what \pi_n, \what \mu_n, \what \mu^C_n$ over all of the available data, $P_n$, and where nuisance parameters are only evaluated on data separate from training.

\subsection{Direct Method}
The direct method estimator (naive plug-in of the outcome model) is given by %
\begin{equation}
    \what{\psi}_{n}^{\text{direct}} = \frac{1}{K}\sum_{k=1}^{K} \what{\psi}_{k,n}^{\text{direct}} 
    = \frac{1}{K}\sum_{k=1}^{K}  P_{k,n}[\what{\mu}_{-k,n}(X)]
    = \frac{1}{n}\sum_{i=1}^n \what \mu_n(X_i).
\end{equation}
Let $\var_n$ denote the empirical variance, calculated on the sample $P_n$. Then %
\begin{align*}
    \Var(\what{\psi}_{n}^{\text{direct}}) = \frac{1}{n}\Var_P(\what\mu_{n}(X)) 
    \asymp \frac{1}{n}\Var_{n}(\what\mu_{n}(X)),
\end{align*}
which we use to estimate the variance of the direct method estimator.

\subsection{Inverse Propensity Weighting (IPW)}
Under similar notation, the inverse probability weighting method gives the estimator
$$
\what{\psi}_{n}^{\text{IPW}} = \frac{1}{K}\sum_{k=1}^{K} \what{\psi}_{n}^{\text{IPW}} 
    = \frac{1}{K}\sum_{k=1}^{K} P_{k,n} \left[ \frac{AY}{\what\pi_{-k,n}(X)} \right]
    =\frac{1}{n}\sum_{i=1}^n \frac{A_iY_i}{\what \pi_n(X_i)}
$$
The variance of the estimator is given by
\begin{align*}
    \Var_P(\what{\psi}_{n}^{\text{IPW}}) = \frac{1}{n}\Var_P\left(\frac{AY}{\what\pi_{n}(X)} \right) 
    \asymp \frac{1}{n}\var_{n}\left(\frac{AY}{\what\pi_{n}(X)} \right).
\end{align*}

\subsection{Self-Normalized IPW}\label{sec:ci_nAIPW}
Consider $\what \pi$. Then the self-normalized IPW estimator (a.k.a. Hajek estimator) is
$$\what \psi_{\text{IPW-SN}}=\frac{ \frac{1}{n}\sum_{i=1}^n \frac{A_iY_i}{\what\pi_n(X_i)} }{ \frac{1}{n}\sum_{i=1}^n \frac{A_i}{\what\pi_n(X_i)} }.$$
Let $\wt \pi_n(X) = \what\pi_n(X) \cdot \frac{1}{n}\sum \frac{A_i}{\what\pi_n(X_i)}$
so that 
$$\what \psi_{\text{IPW-SN}}= \frac{1}{n}\sum_{i=1}^n \frac{A_iY_i}{\wt\pi_n(X_i)}.$$
Similar to before, the variance of the estimator is given by
\begin{align*}
    \Var_P(\what{\psi}_{k,n}^{\text{IPW-SN}}) = \frac{1}{n}\Var_P\left(\frac{AY}{\wt\pi_{n}(X)} \right) 
    \asymp \frac{1}{n}\var_{n}\left(\frac{AY}{\wt \pi_{n}(X)} \right).
\end{align*}

\subsection{Any De-Biased Method}
This applies to the AIPW, self-normalized AIPW, TMLE and C-Learner. 
These are first-order corrected (de-biased) and have asymptotic variance as described in Theorem~\ref{thm:asymp}. We thus calculate the empirical variance of the plug-in in Theorem~\ref{thm:asymp}.

\fi

\end{document}